\newcommand{\mymacro}[1]{{#1}}
\newcommand{\justification}[1]{%
    \refstepcounter{equation}%
    \tag{\theequation \textcolor{black!50}{, \footnotesize{#1}}}
}
\newcommand{\ignore}[1]{}
\newcommand{\expandLater}[1]{}
\newcommand{\defn}[1]{\textbf{#1}}
\newcommand{\ifcondition}{\textbf{if }}
\newcommand{\otherwisecondition}{\textbf{otherwise }}
\newcommand{\defeq}{\mathrel{\stackrel{\textnormal{\tiny def}}{=}}}
\newcommand{\inv}[1]{{#1^{\mymacro{-1}}}}
\newcommand{\ind}[1]{\mymacro{\mathbbm{1}} \left\{ #1 \right\}}
\DeclarePairedDelimiter\ceil{\lceil}{\rceil}
\newcommand{\floor}[1]{\mymacro{\left\lfloor #1 \right\rfloor}}
\newcommand{\posPart}[1]{\mymacro{\left[ #1 \right]_{+}}}
\newcommand{\set}[1]{{\left\{ #1 \right\}}}
\def\1{\mymacro{\bm{1}}}
\def\0{\mymacro{\bm{0}}}
\newcommand{\interleaveFun}[2]{\mymacro{{#1}^\frown{#2}}}
\newcommand{\N}{{\mymacro{\mathbb{N}}}}
\newcommand{\NTo}[1]{{\left[ #1 \right]}}
\newcommand{\R}{{\mymacro{\mathbb{R}}}}
\newcommand{\F}{{\mymacro{\mathbb{F}}}}
\newcommand{\maxFNum}{{\mymacro{B_{\F}}}}
\newcommand{\abs}[1]{{\mymacro{\left\lvert #1 \right\rvert}}}
\newcommand{\bigO}{{\mymacro{\mathcal{O}}}}
\newcommand{\bigOFun}[1]{{\mymacro{\bigO(#1)}}}
\newcommand{\bigOLog}{{\mymacro{\widetilde{\bigO}}}}
\newcommand{\bigOLogFun}[1]{{\mymacro{\bigOLog(#1)}}}
\newcommand{\bigOmega}{{\mymacro{\Omega}}}
\newcommand{\bigOmegaFun}[1]{{\mymacro{\bigOmega(#1)}}}
\newcommand{\bigTheta}{{\mymacro{\Theta}}}
\newcommand{\bigThetaFun}[1]{{\mymacro{\bigTheta(#1)}}}
\DeclareMathOperator*{\argmax}{{\mymacro{argmax}}}
\DeclareMathSymbol{\mlq}{\mathord}{operators}{``} 
\DeclareMathSymbol{\mrq}{\mathord}{operators}{`'} 
\newcommand{\pdens}{{\mymacro{p}}}
\newcommand{\qdens}{{\mymacro{q}}}
\newcommand{\probOver}{{\mymacro{\mathcal{P}}}}
\newcommand{\probOverFun}[1]{{\mymacro{\probOver\mleft(#1\mright)}}}
\newcommand{\inner}[2]{{\mymacro{#1^\top #2}}}
\newcommand{\alphabet}{{\mymacro{\Sigma}}}
\newcommand{\strings}{{\mymacro{\kleene{\alphabet}}}}
\newcommand{\nsymbols}{{\mymacro{|\alphabet|}}}
\newcommand{\eosnsymbols}{{\mymacro{|\eosalphabet|}}}
\newcommand{\eosalphabet}{{\mymacro{\overline{\alphabet}}}}
\newcommand{\recognizer}{\mymacro{R}}
\newcommand{\recognizerFun}[1]{\mymacro{\recognizer}\mleft(#1\mright)}
\newcommand{\lang}{\mymacro{\mathcal{L}}}
\newcommand{\langFun}[1]{\mymacro{\lang}\mleft(#1\mright)}
\newcommand{\kleene}[1]{{\mymacro{#1^{*}}}}
\newcommand{\str}{{\mymacro{\boldsymbol{w}}}}
\newcommand{\unmstr}{{\mymacro{\boldsymbol{y}}}}
\newcommand{\unmsym}{{\mymacro{y}}}
\newcommand{\strlen}{{\mymacro{N}}}
\newcommand{\stridx}{{\mymacro{n}}}
\newcommand{\stru}{{\mymacro{\boldsymbol{u}}}}
\newcommand{\sym}{{\mymacro{w}}}
\newcommand{\eossym}{{\mymacro{\overline{\sym}}}}
\newcommand{\diffusionProcess}{\mymacro{q}}
\newcommand{\learnedGenerativeProcess}{\mymacro{\widehat{q}}}
\newcommand{\maskSchedule}{\mymacro{\alpha}}
\newcommand{\maskScheduleFun}[1]{\mymacro{\maskSchedule\mleft(#1\mright)}}
\newcommand{\maskSym}{\mymacro{\texttt{m}}}
\newcommand{\thinkstr}{\mymacro{\boldsymbol{t}}}
\newcommand{\acceptSym}{\mymacro{\texttt{1}}}
\newcommand{\rejectSym}{\mymacro{\texttt{0}}}
\newcommand{\padlen}{\mymacro{P}}
\newcommand{\padSym}{\mymacro{\square}}
\newcommand{\timesteps}{\mymacro{T}}
\newcommand{\maskAlphabet}{\mymacro{\alphabet_{\maskSym}}}
\newcommand{\acceptAlphabet}{\mymacro{\alphabet_{01}}}
\newcommand{\maskAcceptAlphabet}{\mymacro{\alphabet_{\maskSym 01}}}
\newcommand{\planner}{\mymacro{U}}
\newcommand{\predictor}{\mymacro{S}}
\newcommand{\mdmModel}{\mymacro{M}}
\newcommand{\modelClass}{\mymacro{\mathtt{C}}\xspace}
\newcommand{\LTAcr}{\mymacro{PLT}\xspace}
\newcommand{\LTsAcr}{\mymacro{PLTs}\xspace}
\newcommand{\MDMAcr}{\mymacro{MDM}\xspace}
\newcommand{\MDMsAcr}{\mymacro{MDMs}\xspace}
\newcommand{\eMDMAcr}{\mymacro{MDM}\xspace}
\newcommand{\eMDMsAcr}{\mymacro{MDMs}\xspace}
\newcommand{\sMDMAcr}{\mymacro{sMDM}\xspace}
\newcommand{\sMDMsAcr}{\mymacro{sMDMs}\xspace}
\newcommand{\CoTAcr}{\mymacro{CoT}\xspace}
\newcommand{\pCoTAcr}{\mymacro{pCoT}\xspace}
\newcommand{\LTClass}{\mymacro{\mathtt{PLT}}\xspace}
\newcommand{\CoTClass}{\mymacro{\mathtt{CoT}}\xspace}
\newcommand{\pCoTClass}{\mymacro{\mathtt{pCoT}}\xspace}
\newcommand{\MDMClass}{\mymacro{\mathtt{MDM}}\xspace}
\newcommand{\MDMClassEdit}{\mymacro{\mathtt{MDM}}\xspace}
\newcommand{\MDMClassSimple}{\mymacro{\mathtt{sMDM}}\xspace}
\newcommand{\poly}{\mymacro{\mathtt{poly}}}
\newcommand{\polyFun}[1]{\mymacro{\poly\mleft(#1\mright)}}
\newcommand{\bin}{\mymacro{\texttt{B}}}
\newcommand{\binFun}[1]{\mymacro{\bin\mleft(#1\mright)}}
\newcommand{\sbin}{\mymacro{\bin^{\pm}}}
\newcommand{\sbinFun}[1]{\mymacro{\sbin\mleft(#1\mright)}}
\newcommand{\symbolembedding}{\mymacro{\bm{e}}}
\newcommand{\posencoding}{\mymacro{\bm{p}}}
\newcommand{\transoutput}{\mymacro{\bm{o}}}
\newcommand{\circuit}{\mymacro{C}}
\newcommand{\circuitFun}[1]{\mymacro{\circuit\mleft(#1\mright)}}
\newcommand{\circuitFamily}{\mymacro{\sC}}
\newcommand{\LUniform}{\mymacro{\LCls\textnormal{-uniform}}}
\newcommand{\LUniformity}{\mymacro{\LCls\textnormal{-uniformity}}}
\newcommand{\XUniform}{\mymacro{\XCls\text{-uniform}}}
\newcommand{\LCls}{\mymacro{\mathtt{L}}}
\newcommand{\XCls}{\mymacro{\mathtt{X}}}
\newcommand{\andGate}{\mymacro{\texttt{AND}}\xspace}
\newcommand{\orGate}{\mymacro{\texttt{OR}}\xspace}
\newcommand{\notGate}{\mymacro{\texttt{NOT}}\xspace}
\newcommand{\round}{\mymacro{\texttt{round}}}
\newcommand{\roundFun}[1]{\round\mleft(#1\mright)}
\newcommand{\roundOpFun}[1]{\mleft[#1\mright]_\numPrec}
\newcommand{\polyCls}{{\mymacro{\textnormal{\textsf{\small P}}}}}
\newcommand{\AC}{{\mymacro{\mathtt{AC}}}}
\newcommand{\ACFun}[1]{{\mymacro{\AC^{\mathtt{#1}}}}}
\newcommand{\ACZero}{{\mymacro{\ACFun{0}}}}
\newcommand{\ACd}{{\mymacro{\ACFun{d}}}}
\newcommand{\TC}{{\mymacro{\mathtt{TC}}}}
\newcommand{\TCFun}[1]{{\mymacro{\TC^{\mathtt{#1}}}}}
\newcommand{\TCZero}{{\mymacro{\TCFun{0}}}}
\newcommand{\TCd}{{\mymacro{\TCFun{d}}}}
\newcommand{\NC}{{\mymacro{\mathtt{NC}}}}
\newcommand{\NCFun}[1]{{\mymacro{\NC^{\mathtt{#1}}}}}
\newcommand{\NCOne}{{\mymacro{\NCFun{1}}}}
\newcommand{\langEnc}{\mymacro{{\textnormal{\texttt{Enc}}}}}
\newcommand{\langEncFun}[1]{\mymacro{\langEnc\mleft(#1\mright)}}
\newcommand{\pLM}{\mymacro{\overset{\curvearrowright}{\pdens}}}
\newcommand{\pLMFun}[1]{\mymacro{\pLM(#1)}}
\newcommand{\pLMInf}{\mymacro{\pdens^{\downarrow}}}
\newcommand{\pLMInfFun}[1]{\mymacro{\pLMInf(#1)}}
\newcommand{\qLM}{\mymacro{\qdens}}
\newcommand{\eos}{{\mymacro{\textsc{eos}}}}
\newcommand{\ngram}{{\mymacro{\textit{n}-gram}}\xspace}
\newcommand{\binEnc}[1]{{\mymacro{\bin\mleft(#1\mright)}}}
\newcommand{\fullBinEnc}[1]{{\mymacro{\overline{\bin}\mleft(#1\mright)}}}
\newcommand{\onehot}[1]{{\mymacro{\llbracket#1\rrbracket}}}
\newcommand{\one}{{\mymacro{\bm{1}}}}
\newcommand{\tm}{{\mymacro{\mathcal{M}}}}
\newcommand{\inMtx}{{\mymacro{\mV}}}
\newcommand{\outMtx}{{\mymacro{\mE}}}
\newcommand{\hiddDim}{{\mymacro{D}}}
\newcommand{\hiddDimFun}[1]{{\mymacro{\hiddDim\mleft(#1\mright)}}}
\newcommand{\dimidx}{{\mymacro{d}}}
\newcommand{\numPrec}{{\mymacro{\texttt{p}}}}
\newcommand{\numPrecFun}[1]{{\mymacro{\numPrec\mleft(#1\mright)}}}
\newcommand{\softmax}{{\mymacro{\mathrm{softmax}}}}
\newcommand{\ReLU}{{\mymacro{\mathrm{ReLU}}}}
\newcommand{\softmaxFun}[2]{{\mymacro{\mathrm{softmax}\mleft(#1\mright)_{#2}}}} 
\newcommand{\ReLUFun}[1]{{\mymacro{\ReLU\mleft(#1\mright)}}} 
\newcommand{\hiddState}{{\mymacro{\vh}}}
\newcommand{\ehiddState}{{\mymacro{\evh}}}
\newcommand{\grammar}{{\mymacro{\mathcal{G}}}}
\newcommand{\mlp}{{\mymacro{{f}}}}
\newcommand{\topk}{{\mymacro{\textnormal{{\small \textsf{top-}}}k}}}
\newcommand{\topkFun}[1]{{\mymacro{\topk\mleft(#1\mright)}}}
\newcommand{\negterm}[1]{{\mymacro{{\raise.17ex\hbox{$\scriptstyle\sim$}} #1}}}
\newcommand{\hiddMtx}{{\mymacro{\mH}}}
\newcommand{\queryMtx}{{\mymacro{\mQ}}}
\newcommand{\keyMtx}{{\mymacro{\mK}}}
\newcommand{\valueMtx}{{\mymacro{\mV}}}
\newcommand{\attnMask}{{\mymacro{M}}}
\newcommand{\attnMaskFun}[1]{{\mymacro{\attnMask(#1)}}}
\newcommand{\tf}{{\mymacro{\mathcal{T}}}}
\newcommand{\tflayer}{{\mymacro{\boldsymbol{\tau}}}}
\newcommand{\dumpLayer}{{\mymacro{\tflayer_{\texttt{dump}}}}}
\newcommand{\dumpLayerFun}[1]{{\mymacro{\dumpLayer\mleft(#1\mright)}}}
\newcommand{\readLayer}{{\mymacro{\tflayer_{\texttt{read}}}}}
\newcommand{\readLayerFun}[1]{{\mymacro{\readLayer\mleft(#1\mright)}}}
\newcommand{\decodeStep}{{\mymacro{\texttt{Dec}}}}
\newcommand{\decodeStepFun}[1]{{\mymacro{\decodeStep\mleft(#1\mright)}}}
\newcommand{\decoder}{{\mymacro{\texttt{NS}}}}
\newcommand{\decoderFun}[1]{{\mymacro{\decoder\mleft(#1\mright)}}}
\newcommand{\decoderpar}{{\mymacro{\texttt{NS}_{\|}}}}
\newcommand{\residStream}{{\mymacro{\hiddMtx}}}
\newcommand{\layeridx}{{\mymacro{l}}}
\newcommand{\numlayers}{{\mymacro{L}}}
\newcommand{\paddingBlock}{{\mymacro{b}}}
\newcommand{\relativePos}{{\mymacro{r}}}
\newcommand{\posEnc}{{\mymacro{\texttt{PE}}}}
\newcommand{\posEncFun}[1]{{\mymacro{\posEnc\mleft(#1\mright)}}}
\def\eps{{\mymacro{\varepsilon}}}
\def\vb{{{\mymacro{\bm{b}}}}}
\def\vd{{{\mymacro{\bm{d}}}}}
\def\ve{{{\mymacro{\bm{e}}}}}
\def\vh{{{\mymacro{\bm{h}}}}}
\def\vk{{{\mymacro{\bm{k}}}}}
\def\vl{{{\mymacro{\bm{l}}}}}
\def\vq{{{\mymacro{\bm{q}}}}}
\def\vv{{{\mymacro{\bm{v}}}}}
\def\vx{{{\mymacro{\bm{x}}}}}
\def\vy{{{\mymacro{\bm{y}}}}}
\def\evh{{{\mymacro{h}}}}
\def\evp{{{\mymacro{p}}}}
\def\evs{{{\mymacro{s}}}}
\def\evx{{{\mymacro{x}}}}
\def\evy{{{\mymacro{y}}}}
\def\mA{{{\mymacro{\bm{A}}}}}
\def\mB{{{\mymacro{\bm{B}}}}}
\def\mE{{{\mymacro{\bm{E}}}}}
\def\mG{{{\mymacro{\bm{G}}}}}
\def\mH{{{\mymacro{\bm{H}}}}}
\def\mK{{{\mymacro{\bm{K}}}}}
\def\mQ{{{\mymacro{\bm{Q}}}}}
\def\mV{{{\mymacro{\bm{V}}}}}
\def\mW{{{\mymacro{\bm{W}}}}}
\DeclareMathAlphabet{\mathsfit}{\encodingdefault}{\sfdefault}{m}{sl}
\SetMathAlphabet{\mathsfit}{bold}{\encodingdefault}{\sfdefault}{bx}{n}
\def\sC{{{\mymacro{\mathcal{C}}}}}
\def\sN{{{\mymacro{\mathcal{N}}}}}
\def\sX{{{\mymacro{\mathcal{X}}}}}
\def\sY{{{\mymacro{\mathcal{Y}}}}}
\title{On the Reasoning Abilities of Masked Diffusion Language Models}
\author{Anej Svete\thanks{~~This research was conducted while interning at the Allen Institute for AI.} \\
ETH Zürich \\
\href{mailto:asvete@inf.ethz.ch}{\texttt{asvete@inf.ethz.ch}} \\
\And
Ashish Sabharwal \\
Allen Institute for AI \\
\href{mailto:ashishs@allenai.org}{\texttt{ashishs@allenai.org}} \\
}
\begin{document}

\maketitle

\begin{abstract}
   Masked diffusion models (\MDMsAcr) for text offer a compelling alternative to traditional autoregressive language models. 
   Parallel generation makes them efficient, but their computational capabilities and the limitations inherent in their parallelism remain largely unexplored. 
   To this end, we characterize what types of reasoning problems \MDMsAcr can provably solve and how efficiently. 
   We do this by connecting \MDMsAcr to the well-understood reasoning frameworks of chain of thought (\CoTAcr) and padded looped transformers (\LTsAcr) in the finite-precision log-width setting: We show that \MDMsAcr and polynomially-padded \LTsAcr are, in fact, equivalent in this setting, and that \MDMsAcr can solve all problems that \CoTAcr-augmented transformers can.
   Moreover, we showcase classes of problems (including regular languages) for which \MDMsAcr are inherently more efficient than \CoTAcr transformers, where parallel generation allows for substantially faster reasoning.
\end{abstract}

\section{Introduction} \label{sec:introduction}

Many complex problems can be decomposed into smaller, independent sub-problems, making them naturally suited for parallel computation. 
For example, we can compute the value of a mathematical expression by evaluating its sub-expressions independently and combining the results (see \cref{fig:expression-tree}). 
However, dominant autoregressive language models (LMs) tackle these problems sequentially. 
Methods like chain of thought (\CoTAcr), for instance, generate solutions one step at a time, failing to capitalize on the underlying parallel structure.
Parallel generation by masked diffusion models (\MDMsAcr) offers a compelling alternative.
Recent advances have positioned \MDMsAcr as a viable contender to autoregressive LMs in language modeling, code generation, and even molecule design \citep{Lou2024DiscreteDM,zhang2025surveyparalleltextgeneration,sunSpeedAlwaysWins2025}.
However, the fundamental reasoning capabilities of \MDMsAcr remain poorly understood,
which limits the extent to which we can
leverage their potential and apply them to appropriate tasks.
This work bridges that gap by providing the first formal characterization of the expressivity of \MDMsAcr, clarifying their fundamental computational strengths and weaknesses.

\begin{figure}[ht]
   \centering

    \begin{tikzpicture}[
        node distance=1.5cm and 3cm,
        model/.style={
            rectangle,
            rounded corners=3pt,
            fill=#1!20,
            text width=2.5cm,
            minimum height=1cm,
            align=center,
            font=\small
        },
        arrow_eq/.style={<->, thick, #1},
        arrow_incl_right/.style={right hook->, thick, #1},
        arrow_incl_left/.style={left hook->, thick, #1},
        arrow_sep_left/.style={left hook->, dashed, thick, #1},
        label_style/.style={midway, font=\tiny, align=center, text=black!80},
        source_label_style/.style={midway, font=\tiny, align=center, text=black!80, fill=blue!10, inner sep=2pt, rounded corners=2pt},
        target_label_style/.style={midway, font=\tiny, align=center, text=black!80, fill=orange!10, inner sep=2pt, rounded corners=2pt},
        resource_style/.style={font=\tiny\itshape, text=ETHPetrol!80!black, align=center, midway}
    ]

    \node[model=ETHGreen] (mdm) at (0,0) {Masked Diffusion \\ Models};
    \node[model=ETHBlue] (lt) at (-4.5,0) {Padded Looped \\ Transformers};
    \node[model=ETHPurple] (cot) at (4.5,0) {Chain of Thought};


    \draw[arrow_incl_left=ETHRed] (mdm.175) to[bend right=30]
        node[target_label_style, midway, above] {Added stochasticity}
        node[label_style, midway, below, resource_style] {\cref{thm:lts-mdms}}
        (lt.5);

    \draw[arrow_incl_left=ETHRed] (lt.355) to[bend right=30]
        node[target_label_style, midway, below] {Extra output space}
        node[label_style, midway, above, resource_style] {\cref{thm:lts-mdms}}
        (mdm.185);

    \draw[arrow_incl_right=ETHRed] (mdm.5) to[bend left=30]
        node[target_label_style, midway, above] {Extra steps}
        node[label_style, midway, below, resource_style] {\cref{thm:cot-can-simulate-mdms}}
        (cot.175);
    \draw[arrow_incl_right=ETHRed] (cot.185) to[bend left=30]
        node[target_label_style, midway, below] {Extra output space}
        node[label_style, midway, above, resource_style] {\cref{thm:mdms-can-simulate-cot}}
        (mdm.355);

    \draw[arrow_sep_left=ETHRed] (cot.north) to[bend right=30]
        node[source_label_style, pos=0.5, above, align=center] {\textbf{Sequentiality Bottleneck} under $\log\strlen$ steps.}
        node[label_style, midway, below, resource_style] {\cref{thm:cot-mdm-separation}}
        (mdm.north);

    \end{tikzpicture}

   \caption{
      A summary of masked diffusion model expressivity in relation to padded looped transformers and chain of thought.
      $\sX \hookrightarrow \sY$ indicates the inclusion of $\sX$ in $\sY$.
      Dashed arrows represent strict inclusions.
      \textcolor{ETHRed}{Red arrows} denote novel results.
      Edge labels with \colorbox{blue!10}{blue background} indicate constraints on the \emph{source} node, while labels with \colorbox{orange!10}{orange background} indicate constraints on the \emph{target} node.
      }
   \label{fig:figure-1a}
\end{figure}

\begin{figure}[ht]
   \centering

    \begin{tikzpicture}[
        node distance=1.5cm and 2.5cm,
        model/.style={
            rectangle,
            rounded corners=3pt,
            fill=#1!20,
            text width=2.5cm,
            minimum height=1cm,
            align=center,
            font=\small
        },
        complexity/.style={
            rectangle,
            rounded corners=3pt,
            fill=gray!15,
            minimum width=1cm,
            align=center
        },
        arrow_eq/.style={<->, thick, #1},
        arrow_incl_right/.style={right hook->, thick, #1},
        arrow_sep_right/.style={right hook->, dashed, thick, #1},
        arrow_sep_left/.style={left hook->, dashed, thick, #1},
        label_style/.style={midway, font=\tiny, align=center, text=black!80},
        source_label_style/.style={midway, font=\tiny, align=center, text=black!80, fill=blue!10, inner sep=2pt, rounded corners=2pt},
        target_label_style/.style={midway, font=\tiny, align=center, text=black!80, fill=orange!10, inner sep=2pt, rounded corners=2pt},
        resource_style/.style={font=\tiny\itshape, text=ETHPetrol!80!black, align=center, midway}
    ]

    \node[model=ETHGreen] (mdm) at (0,0) {Masked Diffusion \\ Models};
    \node[model=ETHPurple] (cot) at (5.25,0) {Chain of Thought};

    \node[complexity] (reg) at (-5.5, -2.25) {$\mathtt{Reg}$};
    \node[complexity] (nc) at (-2.5, -2.25) {$\NC$};
    \node[complexity] (acd) at (0, -2.25) {$\ACd$};
    \node[complexity] (ac0) at (5.25, -2.25) {$\ACZero$};


    \draw[arrow_sep_right=black] (reg.east) -- (nc.west) {};

    \draw[arrow_sep_right=ETHRed] (reg.45) -- (mdm.220)
        node[target_label_style, above, sloped, pos=0.5] {$\log \strlen$ steps, $\strlen$ output}
        node[resource_style, sloped, below, pos=0.5] {\cref{thm:regular-languages-in-mdm-efficient}};

    \draw[arrow_sep_right=black] (cot.south) -- (ac0.north)
        node[source_label_style, above, sloped] {$\log \strlen$ steps};

    \draw[arrow_eq=black] (acd.west) -- (nc.east)
        node[label_style, above, sloped] {$d \to \infty$};

    \draw[arrow_eq=ETHRed] (acd.90) to
    node[target_label_style, right] {$\log^d \strlen$ steps \\ $\polyFun{\strlen}$ output}
    node[resource_style, right, yshift=-5mm, xshift=4mm] {\cref{cor:mdms-with-polylog-steps}}
    (mdm.270);

    \draw[arrow_eq=ETHRed] (ac0.105) to
    node[target_label_style, above, sloped, pos=0.5] {Constantly-many steps}
    node[target_label_style, below, sloped, pos=0.5] {$\polyFun{\strlen}$ output}
    node[resource_style, below=3mm, sloped, pos=0.5] {\cref{cor:mdms-with-constant-steps}}
    (mdm.320);

    \draw[arrow_sep_left=black] (ac0.west) -- (acd.east);

    \end{tikzpicture}

   \caption{
      A summary of masked diffusion model expressivity in relation to classical complexity classes.
      $\sX \hookrightarrow \sY$ indicates the inclusion of $\sX$ in $\sY$, and $\sX \leftrightarrow \sY$ indicates equality.
      Dashed arrows represent strict inclusions.
      \textcolor{ETHRed}{Red arrows} denote novel results.
      $\mathtt{Reg}$ refers to all regular languages.
      Edge labels with \colorbox{blue!10}{blue background} indicate constraints on the \emph{source} node, while labels with \colorbox{orange!10}{orange background} indicate constraints on the \emph{target} node (in case of $\leftrightarrow$, on \MDMsAcr).
      }
   \label{fig:figure-1b}
\end{figure}

\begin{figure}[ht]
    \centering
    \begin{tikzpicture}[
      scale=0.8,
      every node/.style={transform shape},
      box/.style={draw=ETHPetrol!50, fill=ETHPetrol!20, minimum width=5mm, minimum height=5mm, font=\footnotesize},
      maskbox/.style={box, fill=ETHGray!40, draw=ETHGray!80, dashed},
      mediumbox/.style={box, fill=ETHBronze!20, draw=ETHBronze!60},
      cotgenbox/.style={box, fill=ETHRed!40, draw=ETHRed!80, dashed},
      mdmgenbox/.style={box, fill=ETHGreen!40, draw=ETHGreen!60},
      treearrow/.style={-, thick, ETHGreen!60},
      seqarrow/.style={->, >=stealth, thick, ETHBlue!50},
      label/.style={font=\small},
      tinylabel/.style={font=\tiny\itshape, text=gray},
      brace/.style={decoration={brace,amplitude=5pt}, thick, decorate},
      ]
      
      \usetikzlibrary{decorations.pathreplacing, calc, positioning}
      
      \begin{scope}[shift={(0, -5)}]
          \node[box] (char0) at (0, 0) {$\scriptstyle ($};
          \node[box] (char1) at (0.5, 0) {$2$};
          \node[box] (char2) at (1, 0) {$+$};
          \node[box] (char3) at (1.5, 0) {$3$};
          \node[box] (char4) at (2, 0) {$\scriptstyle )$};
          \node[box] (char5) at (2.5, 0) {$\times$};
          \node[box] (char6) at (3, 0) {$\scriptstyle ($};
          \node[box] (char7) at (3.5, 0) {$4$};
          \node[box] (char8) at (4, 0) {$+$};
          \node[box] (char9) at (4.5, 0) {$1$};
          \node[box] (char10) at (5, 0) {$\scriptstyle )$};
          \node[box] (char11) at (5.5, 0) {$-$};
          \node[box] (char12) at (6, 0) {$6$};
          \node[box] (char13) at (6.5, 0) {$=$};
          
          \node[maskbox] (char14) at (7, 0) {\small \maskSym};
          \node[maskbox] (char15) at (7.5, 0) {\small \maskSym};
          \node[maskbox] (char16) at (8, 0) {\small \maskSym};
          \node[maskbox] (char17) at (8.5, 0) {\small \maskSym};
          \node[maskbox] (char18) at (9, 0) {\small \maskSym};
          \node[maskbox] (char19) at (9.5, 0) {\small \maskSym};
          \node[maskbox] (char20) at (10, 0) {\small \maskSym};
          \node[maskbox] (char21) at (10.5, 0) {\small \maskSym};
          \node[maskbox] (char22) at (11, 0) {\small \maskSym};
          \node[maskbox] (char23) at (11.5, 0) {\small \maskSym};
          \node[maskbox] (char24) at (12, 0) {\small \maskSym};
          
          \node[mdmgenbox] (char25) at (7, 0.75) {$5$};
          \node[mdmgenbox] (char26) at (7.5, 0.75) {$\times$};
          \node[mdmgenbox] (char27) at (8, 0.75) {$5$};
          \node[mdmgenbox] (char28) at (8.5, 0.75) {$-$};
          \node[mdmgenbox] (char29) at (9, 0.75) {$6$};
          \node[mdmgenbox] (char30) at (9.5, 0.75) {$=$};
          \node[maskbox] (char31) at (10, 0.75) {\small \maskSym};
          \node[maskbox] (char32) at (10.5, 0.75) {\small \maskSym};
          \node[maskbox] (char33) at (11, 0.75) {\small \maskSym};
          \node[maskbox] (char34) at (11.5, 0.75) {\small \maskSym};
          \node[maskbox] (char35) at (12, 0.75) {\small \maskSym};
          
          \node[mdmgenbox] (char36) at (7, 1.5) {$5$};
          \node[mdmgenbox] (char37) at (7.5, 1.5) {$\times$};
          \node[mdmgenbox] (char38) at (8, 1.5) {$5$};
          \node[mdmgenbox] (char39) at (8.5, 1.5) {$-$};
          \node[mdmgenbox] (char40) at (9, 1.5) {$6$};
          \node[mdmgenbox] (char41) at (9.5, 1.5) {$=$};
          \node[mdmgenbox] (char42) at (10, 1.5) {$25$};
          \node[mdmgenbox] (char43) at (10.5, 1.5) {$-$};
          \node[mdmgenbox] (char44) at (11, 1.5) {$6$};
          \node[mdmgenbox] (char45) at (11.5, 1.5) {$=$};
          \node[maskbox] (char46) at (12, 1.5) {\small \maskSym};
          
          \node[mdmgenbox] (char47) at (7, 2.25) {$5$};
          \node[mdmgenbox] (char48) at (7.5, 2.25) {$\times$};
          \node[mdmgenbox] (char49) at (8, 2.25) {$5$};
          \node[mdmgenbox] (char50) at (8.5, 2.25) {$-$};
          \node[mdmgenbox] (char51) at (9, 2.25) {$6$};
          \node[mdmgenbox] (char52) at (9.5, 2.25) {$=$};
          \node[mdmgenbox] (char53) at (10, 2.25) {$25$};
          \node[mdmgenbox] (char54) at (10.5, 2.25) {$-$};
          \node[mdmgenbox] (char55) at (11, 2.25) {$6$};
          \node[mdmgenbox] (char56) at (11.5, 2.25) {$=$};
          \node[mdmgenbox] (char57) at (12, 2.25) {$19$};
          
          \draw[seqarrow] (char19.north) to (char30.south);
          \draw[seqarrow] (char30.north) to (char41.south);
          \draw[seqarrow] (char41.north) to (char52.south);

          \draw[brace,draw=ETHGreen] ($(char57.north east) + (0.1,0.1)$) -- ($(char24.south east) + (0.1,-0.1)$) 
              node [midway, right=5pt, font=\small, text=ETHGreen] {Three unmasking Steps};
      \end{scope}
      
      \begin{scope}[shift={(2.5, -4.25)}]
          \node[mediumbox] (root) at (1.5, 1.5) {\tiny $-$};
          \node[mediumbox] (mult) at (0, 0.75) {\tiny $\times$};
          \node[mediumbox] (six) at (3.5, 0.75) {\tiny $6$};
          \node[mediumbox] (plus1) at (-1.5, 0) {\tiny $+$};
          \node[mediumbox] (plus2) at (1.5, 0) {\tiny $+$};
          
          \draw[treearrow] (root) -- (mult);
          \draw[treearrow] (root) -- (six);
          \draw[treearrow] (mult) -- (plus1);
          \draw[treearrow] (mult) -- (plus2);
          
          \draw[treearrow] (plus1) -- ($(char2)+(0,0.25)$);
          \draw[treearrow] (plus2) -- ($(char8)+(0,0.25)$);
          \draw[treearrow] (six) -- ($(char12)+(0,0.25)$);
          
          \node[text=ETHBronze,font=\small] at (1.5, 2) {Tree Structure};
      \end{scope}
      
      \begin{scope}[shift={(0, -5.75)}]
          \node[cotgenbox] (cot1) at (7, 0) {$5$};
          \node[cotgenbox] (cot2) at (7.5, 0) {$\times$};
          \node[cotgenbox] (cot3) at (8, 0) {$5$};
          \node[cotgenbox] (cot4) at (8.5, 0) {$-$};
          \node[cotgenbox] (cot5) at (9, 0) {$6$};
          \node (cot_dots) at (10.25, 0) {\dots};
          \node[cotgenbox] (cot_end_final) at (12, 0) {$19$};
          
          \node[label, text=ETHRed] at (4, 0) {Eleven chain of thought steps:};

          \draw[seqarrow, bend right=45] (cot1.south) to (cot2.south);
          \draw[seqarrow, bend right=45] (cot2.south) to (cot3.south);
          \draw[seqarrow, bend right=45] (cot3.south) to (cot4.south);
          \draw[seqarrow, bend right=45] (cot4.south) to (cot5.south);
      \end{scope}
    \end{tikzpicture}
    \caption{
    \textbf{Two strategies for solving a mathematical expression}. \emph{(a) \textcolor{ETHGreen}{Parallel}}: Parallel computation of intermediate values (three steps).
    \emph{(b) \textcolor{ETHRed}{Sequential}}: Step-by-step generation (eleven steps).
    }
    \label{fig:expression-tree}
\end{figure}
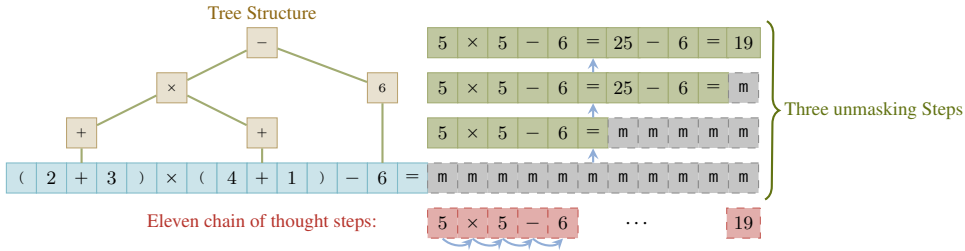

Our analysis builds upon the study of LM expressivity---the formal characterization of the problems whose solution the neural architecture of an LM (with appropriate parameters) can express. 
While this field comprehensively describes autoregressive LMs, its findings do not directly apply to \MDMsAcr due to their fundamentally different, non-sequential processing of text, leaving the theoretical studies of these two paradigms largely disconnected. 
Prior theoretical work on \MDMsAcr has focused on the limitations of their factorized backward process and its convergence properties.
This research has shown that while \MDMsAcr can approximate $n$-gram LMs with constantly many denoising steps, the number of steps must grow linearly with the string length even for simple LMs such as probabilistic regular languages \citep{feng2025theoreticalbenefitlimitationdiffusion,li2025convergencetheorydiffusionlanguage}.\footnote{We provide a detailed overview of related work in \cref{app:related-work}.} 
However, the asymptotic nature of these findings, together with the strict assumptions on the theoretical model, makes it difficult to draw concrete conclusions about the practical reasoning capabilities of \MDMsAcr, leaving a critical gap: A theoretical framework for \MDMsAcr that is both 
\begin{enumerate*}[label={(\arabic*)}]
    \item formally rigorous enough to provide a comprehensive picture of how \MDMsAcr can use the combination of parallelism and iterative refinement for formal reasoning, and
    \item faithful to how they are implemented in practice.
\end{enumerate*}
Our work introduces such a framework, providing a tight characterization of their reasoning capabilities.

Concretely, we connect \MDMsAcr implemented as finite-precision transformers with logarithmically-growing model width to known reasoning paradigms of \CoTAcr \citep{wei2022chainofthought}, looping \citep{dehghani2019universaltransformers}, and pause tokens \citep{lanham2023measuringfaithfulnesschainofthoughtreasoning}.
For example, we formalize:
\begin{takeaway}[\textit{\cref{thm:mdms-can-simulate-cot,thm:cot-can-simulate-mdms}}] \label{takeaway:takeaway-1}
   \MDMsAcr can perform \CoTAcr reasoning with some overhead and the \MDMAcr denoising process can be (inefficiently) simulated by \CoTAcr by generating one symbol at a time.
\end{takeaway}
We also show how \MDMsAcr can solve complex problems by solving easier sub-problems in parallel, which makes them inherently more efficient than \CoTAcr on parallelizable problems.
\begin{takeaway}[\textit{\cref{thm:cot-mdm-separation}}] \label{takeaway:takeaway-2}
   \MDMsAcr are provably more efficient at parallelizable problems than \CoTAcr.
\end{takeaway}
We refer to the fact that \CoTAcr \emph{cannot} take advantage of this parallelism as the \defn{sequentiality bottleneck} of \CoTAcr.
\cref{takeaway:takeaway-2} highlights the potential efficiency gains of \emph{parallel} \CoTAcr that generates multiple symbols at once.
Our analysis, in fact, identifies a tighter and more natural connection between \MDMsAcr and this variant, which we refer to as \pCoTAcr.
The parallelism of \MDMsAcr also facilitates a close connection with \emph{looped} and \emph{padded} transformers, where looping naturally maps to denoising steps and padding tokens to the generated tokens of an \MDMAcr.
We find that the class of problems solvable by \MDMsAcr is, in fact, precisely equivalent to that solvable by padded looped transformers.
\begin{takeaway}[\textit{\cref{thm:lts-mdms}}] \label{takeaway:takeaway-3}
   \MDMsAcr are equivalent to \LTsAcr.
\end{takeaway}

These connections allow us to leverage known characterizations of \CoTAcr and \LTsAcr together with classical complexity theory results to understand the fundamental capabilities and limits of \MDMsAcr \citep{li2024chain,saunshi2025reasoninglatentthoughtspower,london2025pausetokensstrictlyincrease,svete2025exactexpressivepowerfiniteprecision}.
For example, with $\strlen$ representing the length of the input string and with $\ACd$ for $d \in \N$ being the standard class of Boolean circuits with \andGate, \orGate, and \notGate gates and depth $\bigO(\log^d\strlen)$, we obtain:
\begin{takeaway}[\textit{\cref{thm:regular-languages-in-mdm-efficient}}] \label{takeaway:takeaway-4}
   \MDMsAcr with $\log\strlen$ denoising steps can recognize regular languages.
\end{takeaway}
\begin{takeaway}[\textit{\cref{cor:mdms-with-constant-steps,cor:mdms-with-polylog-steps}}] \label{takeaway:takeaway-5}
   For $d \in \N$, \MDMsAcr with $\bigO(\log^d\strlen)$ denoising steps and $\polyFun{\strlen}$ output space are equivalent to $\ACd$.
   As $d \to \infty$, this yields $\NC$, the class of all parallelizable problems.
   With \emph{constantly} many denoising steps ($d = 0$), \MDMsAcr are equivalent to the limited class $\ACZero$, i.e., they are no more powerful than standard transformers and, e.g., cannot recognize all regular languages.
\end{takeaway}
We summarize these takeaways and their relationships in \cref{fig:figure-1a,fig:figure-1b}.
Our proofs emphasize the affordances and difficulties in solving reasoning problems with \MDMsAcr. 
We find, for example, that the discrete nature of the generated text, which serves as a communication channel between individual denoising steps, limits the amount of information that can be passed between steps.
This necessitates extra output space to store the intermediate computations and is analogous to the discrepancy between fixed-precision and log-precision transformers \citep{li2024chain}, and the difference between the classes $\AC$ and $\TC$, the class of circuits with threshold gates.
We find that \emph{positional encodings} that carry information not computable by transformers themselves are crucial in locating this information.
We also observe that, while unmasked attention makes \MDMAcr attention patterns flexible, it complicates left-to-right processing, which is often natural in human language---we find that exact implementation of causal masking requires quadratically more output space.

\section{Preliminaries} \label{sec:preliminaries}

This section introduces the preliminaries and notation used throughout the paper.
We reserve the main text for the high-level intuitions and defer technical details to \cref{app:preliminaries}.

\subsection{Transformers}

We analyze (un)masked transformers for string generation and classification.
We present the full model in \cref{app:transformers} and focus here on the most relevant aspects.

\textbf{Transformer families.}
We study \emph{finite precision} transformers where the value of each parameter and activation is represented with a fixed number of bits.
We allow the model \emph{width}, i.e., the size of the contextual representations, to grow logarithmically with the input length $\strlen$.
This is a standard assumption in the literature on transformer expressivity \citep{li2024chain} since it is necessary and sufficient for the model to uniquely identify input positions, and aligns with modern implementations of \emph{quantized} but \emph{wide} transformers.
The growing width results in a separate transformer $\tf_\strlen$ for each $\strlen$, yielding a \defn{family} $\{\tf_\strlen\}_{\strlen \in \N}$ of transformers. 
We enforce $\LUniformity$ in the family by requiring an associated Turing machine that constructs $\tf_\strlen$ in $\bigOFun{\log\strlen}$ space (cf. \cref{app:transformers}).

\textbf{(Parallel) \CoTAcr transformers.}
\CoTAcr reasoning enables sequential processing by solving problems in multiple steps \citep{wei2022chainofthought}.
It is an integral part of today's popular ``reasoning'' models and substantially increases transformers' expressivity \citep{li2024chain,merrill2024the}.
We define our idealization of \CoTAcr transformers in \cref{app:transformers}, including \emph{parallel} \CoTAcr transformers,
which predict $\padlen' \in \N$ symbols in parallel at each time step,
enabling some parallelism if the task allows it.\footnote{This is different from \emph{speculative decoding} \citep{10.5555/3618408.3619203}, which generates multiple symbols \emph{one a time} with a smaller LM and then evaluates their probability in a single pass of a larger LM.}
We denote the classes of \CoTAcr and parallel \CoTAcr transformers by $\CoTClass$ and $\pCoTClass$, respectively.

\textbf{Padded looped transformers (\LTsAcr).} 
Looped transformers repeatedly apply a fixed block of transformer layers to the input \citep{dehghani2019universaltransformers}.
This dynamically increases the depth of the model, enabling more complex reasoning, and does not increase the model size, as the same blocks are reused, thus reducing the memory footprint and computational cost \citep{bae2025mixtureofrecursionslearningdynamicrecursive}.
Such reasoning steps include both sequential and parallel processing, resulting in both efficiency as well as depth of the reasoning process.
Padded transformers additionally pad the input with blank symbols, which can be used to perform additional computations \emph{in parallel}.
This additional padding space is analogous to increasing the circuit width in circuit complexity.
We additionally provide \LTsAcr with external noise applied to the residual stream at each step, which enables stochastic computations
(cf.~\cref{app:looped-padded-transformers}).
We denote the class of padded looped transformers by $\LTClass$.

\textbf{Transformer LMs.}
An \defn{alphabet} $\alphabet$ is a finite, non-empty set of \defn{symbols}. 
Its Kleene closure is $\strings \defeq \bigcup_{n=0}^{\infty} \alphabet^n$, the set of all strings.
A \defn{language model} is a distribution over $\strings$.
Most LMs are \defn{autoregressive}---they define \defn{next-symbol} distributions $\pLMFun{\cdot \mid \str}$ over $\eosalphabet \defeq \alphabet \cup \set{\eos}$ for $\str \in \kleene{\alphabet}$, where $\eos \notin \alphabet$ is the \underline{e}nd-\underline{o}f-\underline{s}tring symbol.
A transformer-based LM computes $\pLMFun{\cdot \mid \str}$ by linearly transforming the contextual representation of the final symbol to the logits of a distribution over $\eosalphabet$.
Moreover, contextual representations can be used for \defn{infilling}---predicting symbols at masked positions.
Infilling probabilities $\pLMInfFun{\cdot \mid \str}$ at masked positions in $\str \in \kleene{\maskAlphabet}$ are distributions over $\alphabet$, where $\maskSym \notin \alphabet$ is the mask symbol and $\maskAlphabet \defeq \alphabet \cup \set{\maskSym}$.

\textbf{Transformers and formal languages.}
Plenty of work describes transformer capabilities and limitations with formal languages \citep{strobl-etal-2024-formal}.
These studies typically frame transformers as \defn{language recognizers}, i.e., classifiers that decide whether a string $\str \in \kleene{\alphabet}$ belongs to some formal language $\lang \subseteq \kleene{\alphabet}$ \citep{butoi2025training}.
String membership is usually \emph{deterministic} and can be formalized by determinizing the LM defined by a transformer: The next-symbol and infilling probabilities are used to decode the most probable symbol or decision.\footnote{Some work also considers transformers as LMs directly \citep{svete-cotterell-2024-transformers,nowak-etal-2024-representational,borenstein-etal-2024-languages,svete-etal-2024-transformers} and shows the (probabilistic) gains afforded by \CoTAcr reasoning.}
The final prediction of $\ind{\str \in \lang} \in \{\acceptSym, \rejectSym\}$ can be made 
\begin{enumerate*}[label=(\arabic*)]
    \item \emph{in a single pass} by classifying based on the contextual representation of a particular symbol in the string, analogous to classifying based on the \texttt{CLS} symbol in BERT \citep{devlin-etal-2019-bert}, or
    \item after \emph{``reasoning''}, i.e., solving the problem in multiple time steps. 
    In this case, the transformer's prediction is only made after a sequence of intermediate predictions that augment its computation and help the final decision.
    This is analogous to using \CoTAcr reasoning for string recognition and can be thought of as simulating a Turing machine with each step of the \CoTAcr process.
\end{enumerate*} 

Particularly fruitful has been the study of transformers as \defn{Boolean circuits}.
In particular, our idealization of transformers falls under $\ACZero$ circuits \citep{li2024chain}---Boolean circuits of constant depth, polynomial size, and with \andGate, \orGate, and \notGate gates of unbounded fan-in---and captures the entire class if padding is allowed \citep{london2025pausetokensstrictlyincrease}.
Other idealizations of transformers can compute functions outside of $\ACZero$ \citep{li2024chain,merrill2025exactexpressivepowertransformers} but remain in $\TCZero$, the class of \emph{threshold circuits} which add threshold gates (which determine whether the number of inputs exceeds some threshold) to $\ACZero$ circuits.
\cref{app:circuit-complexity,app:transformers} provide more details on circuit classes and their relation to transformers.

\subsection{Masked diffusion language models} \label{sec:diffusion-language-models}
\defn{Discrete diffusion LMs} define a distribution over $\strings$ by progressively denoising noisy strings sampled from some fixed distribution.
Formally, they define a \defn{forward (noising) process} and a \defn{reverse (denoising) process}.
The forward process defines a Markov chain over strings that iteratively corrupts them. 
Common examples include replacing symbols uniformly at random (uniform diffusion) or masking them with the mask symbol $\maskSym$ (\defn{masked diffusion models}, \MDMsAcr).
The latter is the focus of this work.
In this setting, the forward process starts from an initial string $\str^{(0)} \in \strings$ of some pre-determined length $\padlen$ and, at each of the $\timesteps$ (discrete) steps, independently masks symbols with probability determined by a masking schedule $\maskScheduleFun{\frac{t}{\timesteps}} \in [0, 1]$:
\begin{minipage}[t]{0.45\textwidth}
   \begin{equation*}
      \diffusionProcess_{t|0}(\str^{(t)} \mid \str^{(0)}) = \prod_{\stridx=1}^{\strlen} \diffusionProcess_{t|0}(\sym^{(t)}_\stridx \mid \sym^{(0)}_\stridx),
   \end{equation*}
\end{minipage}
\hfill
\begin{minipage}[t]{0.5\textwidth}
   \begin{equation*}
      \diffusionProcess_{t|0}(\sym^{(t)}_\stridx \mid \sym^{(0)}_\stridx) =
      \begin{cases}
         1 - \maskSchedule(\frac{t}{\timesteps}), &\ifcondition\sym^{(t)}_\stridx = \maskSym \\
         \maskSchedule(\frac{t}{\timesteps}), &\otherwisecondition
      \end{cases}.
   \end{equation*}
\end{minipage}
The masking schedule is set such that $\maskScheduleFun{0} = 1$ (no masking at the start) and $\maskScheduleFun{1} = 0$ (fully masked at the end, meaning that the noise distribution is the Dirac delta on the fully masked string).

Starting from the fully masked input, the reverse process $\diffusionProcess_{0|\timesteps}$ inverts the forward process $\diffusionProcess_{\timesteps|0}$ by 
\begin{enumerate*}[label=(\arabic*)]
   \item uniformly selecting some positions to unmask, and
   \item sampling the chosen unmasked symbols.
\end{enumerate*}
After $\timesteps$ denoising steps, $\diffusionProcess_{0|\timesteps}$ produces a string $\str^{(0)}$ sampled from the LM defined by the diffusion process.
It is this reverse process that is learned from data.
Its analytical form is generally intractable, so one usually models a parameterized approximation of a single denoising step $\learnedGenerativeProcess_{t - 1|t}(\str^{(t-1)} \mid \str^{(t)})$, typically implemented as a transformer, that \emph{factorizes} across positions:
\begin{equation} \label{eq:factorized-reverse-process}
   \learnedGenerativeProcess_{t - 1|t}(\str^{(t-1)} \mid \str^{(t)}) = \prod_{\stridx=1}^\strlen \learnedGenerativeProcess_{t - 1|t}(\sym^{(t-1)}_\stridx \mid \str^{(t)})
\end{equation}
\cref{eq:factorized-reverse-process} enables \emph{parallel generation} but ignores inter-symbol dependencies at each denoising step.

Much of the existing work on \MDMAcr expressivity analyzes the convergence of $\learnedGenerativeProcess_{t - 1|t}$ to $\diffusionProcess_{0|\timesteps}$ \citep{li2025convergencetheorydiffusionlanguage,chen2024convergenceanalysisdiscretediffusion,feng2025theoreticalbenefitlimitationdiffusion}.
Studying convergence properties usually requires assuming uniform unmasking and a good approximation of the ground-truth model \citep[e.g.,][]{li2025convergencetheorydiffusionlanguage,chen2024convergenceanalysisdiscretediffusion,feng2025theoreticalbenefitlimitationdiffusion,liu2025perfectdiffusionmathsftc0}.
\begin{assumption}[Uniform unmasking] \label{assumption:uniform-unmasking}
   The \defn{uniform unmasking assumption} states that $\learnedGenerativeProcess_{t - 1|t}$ generates strings $\str^{(t - 1)}$ from $\str^{(t)}$ by uniformly selecting positions to unmask.
\end{assumption}
\begin{assumption}[Perfect approximation] \label{assumption:perfect-approximation}
   Let $\qdens_{t - 1|t}$ be the backward processes of an \MDMAcr and $\pLMInf$ a transformer-based LM.
   The \defn{perfect approximation assumption} states that
   $\qLM_{t - 1|t}(\sym^{(t - 1)}_\stridx \mid \str^{(t)}) = \pLMInfFun{\sym^{(t - 1)}_\stridx \mid \str^{(t)}}$
   for all $\str^{(t)} \in \kleene{\maskAlphabet}$, $\sym^{(t - 1)}_\stridx \in \alphabet$, and $\stridx \in \{1, \ldots, |\str^{(t)}|\}$.
\end{assumption}
In words, \cref{assumption:perfect-approximation} states that the transformer perfectly models \emph{all} conditional distributions of the diffusion process.\footnote{Or \emph{approximates} them well, requiring the error to be smaller than some $\epsilon > 0$.}
While this seems necessary, the following observation, proved in \cref{app:proofs}, shows that \cref{assumption:perfect-approximation,assumption:uniform-unmasking} severely limit the class of functions that the model can compute.
\begin{restatable}[]{theorem}{perfectApproximationTheorem} \label{thm:perfect-approximation-limitation} 
   If \cref{assumption:perfect-approximation,assumption:uniform-unmasking} hold for an LM $\pdens$, $\pdens$ cannot compute non-$\ACZero$ functions.\footnote{That is, $\pdens$ can only implement LMs whose next-symbol logits can be computed by $\ACZero$ circuits \citep{liu2025perfectdiffusionmathsftc0}.}\textsuperscript{,}\footnote{An analogous version of the theorem applies to transformers in $\TCZero$.}
\end{restatable}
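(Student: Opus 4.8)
The plan is to exploit the fact that, under \cref{assumption:perfect-approximation}, each denoising step is realized by a single forward pass of a finite-precision, log-width transformer, which the paper already places inside $\ACZero$. The whole difficulty is that the reverse process chains $\timesteps$ such steps, so I must show that this chaining \emph{collapses}: that the LM $\pdens$ induced by the full process has next-symbol logits computable by a single $\ACZero$ circuit, rather than only by a depth-$\timesteps$ composition of such circuits.

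First I would pin down the distribution that the two assumptions force. By \cref{assumption:perfect-approximation}, the per-position infilling marginal $\pLMInfFun{\sym^{(t-1)}_\stridx \mid \str^{(t)}}$ equals the true posterior marginal of a single fixed distribution $\pdens$ for \emph{every} masking pattern $\str^{(t)} \in \kleene{\maskAlphabet}$; by \cref{assumption:uniform-unmasking}, the positions to reveal are chosen independently of the already-sampled symbols. I would argue that together these make the reverse process an any-order autoregressive sampler whose conditionals are exactly the consistent family $\{\pLMInf(\cdot \mid \str^{(t)})\}$: revealing positions in a uniformly random order and, at each revelation, sampling from the corresponding true marginal reproduces $\pdens$ independently of the order, by the standard consistency of the exact masked-diffusion posterior.

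Second, I would read this characterization off along the left-to-right order. Feeding the transformer the pattern $\sym_1 \cdots \sym_{\stridx-1}\,\maskSym \cdots \maskSym$ and inspecting position $\stridx$ yields, by \cref{assumption:perfect-approximation}, exactly the autoregressive next-symbol logit $\pdens(\sym_\stridx \mid \strlt)$, i.e.\ the marginal over position $\stridx$ with the suffix integrated out. Since this is the output of \emph{one} transformer forward pass, it lies in $\ACZero$; hence every next-symbol logit of $\pdens$ is $\ACZero$-computable, and $\pdens$ cannot compute non-$\ACZero$ functions. The $\TCZero$ variant from the footnote follows verbatim, substituting the $\TCZero$ circuit characterization for the corresponding transformer class.

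The main obstacle is the any-order consistency step, and in particular reconciling it with the factorization in \cref{eq:factorized-reverse-process}: a single denoising step may unmask several positions at once and sample them \emph{independently} from their marginals, which in general differs from revealing them one at a time. I would handle this by showing that, for the purpose of the induced next-symbol logits (equivalently, the determinized recognition decision), the factorized multi-position step contributes nothing beyond the single-position marginals already computed in one pass---so that the extracted quantities never require summing over the exponentially many unmasking orders and no computation outside $\ACZero$ is introduced. Making this reduction airtight, rather than the circuit-class bookkeeping, is where the real work lies.
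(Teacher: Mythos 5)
Your second paragraph is exactly the paper's argument: \cref{assumption:perfect-approximation} forces every infilling conditional---in particular the next-symbol conditional read off the prefix-plus-masks pattern $\sym_1\cdots\sym_{\stridx-1}\maskSym\cdots\maskSym$---to be the output of a \emph{single} forward pass of a finite-precision log-width transformer, which lies in $\ACZero$; the paper's proof is literally a one-line appeal to that containment. The any-order-consistency and factorization-reconciliation machinery you flag as ``the real work'' is not needed for the stated conclusion (which, per the footnote, only concerns $\ACZero$-computability of the next-symbol logits, already handed to you directly by \cref{assumption:perfect-approximation} since it quantifies over \emph{all} masking patterns), so you have located the difficulty in the wrong place, but the proof goes through.
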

By assuming that the \MDMAcr is unable to choose which positions to unmask, the model has no choice in which sub-problems to solve first, which ignores the possibility of problem decomposition and requires the model to be equally good at solving \emph{any} sub-problem---including predicting the final answer based on the input directly (with no reasoning steps).
This implies that the problem is solvable in a single prediction step of a transformer.
However, the expressivity of a single transformer pass is limited---\cref{thm:perfect-approximation-limitation} uses the fact that fixed-depth transformers lie in $\ACZero$ \citep{li2024chain}.
However, not \emph{all} conditional probabilities have to be known to be able to solve algorithms in few steps.
Intuitively, by \emph{choosing} to solve \emph{simple} subproblems with non-random unmasking, an \MDMAcr can avoid the difficult parts.
For example, given the current arithmetic expression, one only has to predict the next set of simplifications---which are simple functions of the current expression.
This motivates us to \emph{loosen} \cref{assumption:uniform-unmasking,assumption:perfect-approximation}, which we do in our idealization of an \MDMAcr.
   
\subsubsection{Our idealization of masked diffusion models} \label{sec:idealization}

We aim to understand the expressivity of the \emph{reverse process}. 
To this end, we introduce an idealization that captures its key aspects---iterative unmasking and infilling---and provides a principled lens for understanding the expressivity of practical \MDMsAcr and comparing them to well-known paradigms such as \CoTAcr. 
Here, we describe the high-level ideas; see \cref{app:idealization} for the full formal model.

We formalize the reverse process with two components: A \defn{planner} that \emph{decides} which positions to unmask at each step and a \defn{predictor} that samples the symbols at the unmasked positions.
This loosens \cref{assumption:uniform-unmasking} and generalizes standard \MDMsAcr in which the planner is implicitly defined by choosing the positions to unmask uniformly at random.
It also mirrors popular \MDMAcr implementations that generate text by selecting a subset of masked positions---for example, based on model confidence or according to a learned policy---and predicting the symbols conditioned on the current partially unmasked string \citep{ghazvininejad-etal-2019-mask,peng2025pathplanningmaskeddiffusion,zheng2024reparameterizeddiscretediffusionmodel,liu2025thinkgeneratediscretediffusion,kim2025trainworstplanbest,benhamu2025acceleratedsamplingmaskeddiffusion}.\footnote{\cref{thm:single-model-equivalence} in \cref{app:planner-centrality} shows that the planner and predictor can be fused into a single model that unmasks symbols based on their confidence.
This means that all our results apply to this popular model of unmasking.}
Discarding \cref{assumption:uniform-unmasking} also sidesteps limitations of the position-wise independence in~\cref{eq:factorized-reverse-process}, a restriction that prevents \MDMsAcr with uniform unmasking matching even simple distributions exactly \citep{feng2025theoreticalbenefitlimitationdiffusion,wu2025fastdllmtrainingfreeaccelerationdiffusion}.
In reasoning problems with a deterministic sequential structure, the ability to decide what to unmask enables problem decomposition into a sequence of deterministic steps that can be solved in parallel.\footnote{This is related to the importance of the mutual information and correlations between symbols to \MDMAcr performance \citep{li2025convergencetheorydiffusionlanguage,wu2025fastdllmtrainingfreeaccelerationdiffusion} and string-level correctness \citep{feng2025theoreticalbenefitlimitationdiffusion}.}
To connect our analysis to practical implementations, we assume that the planner and predictor are implemented as transformers.\looseness=-1

We allow the planner to choose to \emph{resample} already unmasked positions.
This overcomes another key limitation---the inability to revert decisions and correct earlier mistakes---a challenge that is the focus of much recent research \citep[][\textit{inter alia}]{rutte2025generalized,song2025seeddiffusionlargescalediffusion}.\footnote{While our results focus on \MDMsAcr that can resample generated symbols, they also apply to non-resampling \MDMsAcr---\cref{thm:mdms-can-simulate-emdms} in \cref{app:editing-mdms} shows that the latter can simulate the former if given additional output space.}
While existing work focuses on reformulating the diffusion process to allow for resampling and refining the resulting training objectives, our idealized \MDMsAcr can be seen as a complementary approach that foregoes the complications of training and focuses on the expressivity of the generation process itself---it analyzes what is theoretically possible in a very targeted way when resampling is allowed.

We denote the class of our idealized \MDMsAcr by $\MDMClass$.
For a more detailed discussion, including the theoretical connection of our idealization to existing \MDMAcr variants, see \cref{app:theoretical-model}.

\section{Theoretical results}
\label{sec:theoretical-results}

This section describes two complementary characterizations of \MDMAcr expressivity: One based on their connection to \LTsAcr and the other based on their ability to perform sequential \CoTAcr reasoning.\footnote{The proofs of all statements in this section are deferred to \cref{app:proofs}.}

\paragraph{Notation.}
Let $\timesteps = \timesteps(\strlen)$ and $\padlen = \padlen(\strlen)$ be functions of $\strlen$.
In the following, $\CoTClass[\timesteps]$ refers to languages recognized by families $\{\tf_\strlen\}_{\strlen \in \N}$ of \CoTAcr transformers with at most $\timesteps$ steps.
For $\modelClass \in \{\pCoTClass,\MDMClass,\LTClass\}$, $\modelClass[\timesteps, \padlen]$ refers to languages recognized by families in $\modelClass$ with at most $\timesteps$ generation, denoising, or looping steps, respectively, and $\padlen$ total output or padding symbols.
$\bigOLogFun{\strlen}$ refers to big-$\bigO$ notation that ignores logarithmic factors, and $\polyFun{\strlen}$ to polynomial functions in $\strlen$.

\subsection{\MDMsAcr are equivalent to padded looped transformers} \label{sec:mdms-and-lts}

Intuitively, \LTsAcr closely resemble \MDMsAcr: Both iteratively refine information in parallel---\MDMsAcr by unmasking and predicting discrete symbols, and \LTsAcr by updating the residual stream.\footnote{\LTsAcr thus also resemble \emph{latent} diffusion LMs that diffuse in the representation space. We do not explore this connection here due to the superior performance and popularity of \MDMsAcr \citep{zhang2025surveyparalleltextgeneration} in practice.}
\cref{thm:lts-mdms} formalizes this, assuming \LTsAcr are supplied with external sampling noise (cf.~\cref{sec:preliminaries}, \S\ref{app:looped-padded-transformers}) like \MDMsAcr.

\begin{restatable}[\LTsAcr and \eMDMsAcr ]{theorem}{LTsandMDMsThm} \label{thm:lts-mdms} \hfill
   
   \begin{subequations}      
      \noindent\begin{minipage}[b]{0.4\textwidth}
         \begin{equation}
            \MDMClassEdit[\timesteps, \padlen] \subseteq \LTClass[\timesteps, \padlen]
         \end{equation}
      \end{minipage}
      \hfill
      \begin{minipage}[b]{0.05\textwidth}
         \centering
         and
      \end{minipage}
      \hfill
      \begin{minipage}[b]{0.45\textwidth}
         \begin{equation} \label{eq:mdm-simulates-lt}
            \LTClass[\timesteps, \padlen] \subseteq \MDMClassEdit[\timesteps, (\strlen + \padlen) \textcolor{ETHRed}{\hiddDim}].
         \end{equation}
      \end{minipage}
   \end{subequations}
\end{restatable}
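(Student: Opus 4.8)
The plan is to prove the two inclusions by explicit step-for-step simulations, viewing the discrete (partially masked) string of an \MDMAcr and the continuous residual stream of an \LTAcr as two encodings of the same per-position state. In both directions I would match one denoising step to one loop iteration, so that $\timesteps$ is preserved; the two inclusions differ only in how faithfully each model's state can be encoded in the other, which is what drives the asymmetric blow-up in the second bound.

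For the forward inclusion $\MDMClassEdit[\timesteps, \padlen] \subseteq \LTClass[\timesteps, \padlen]$, I would place the current \MDMAcr string into the residual stream of the \LTAcr using one residual-stream position per string position, storing the symbol embedding (of either an alphabet symbol or $\maskSym$) at each position; this keeps the total length $\strlen + \padlen$ and hence the same $\padlen$. Each loop simulates one denoising step. Since the planner and predictor are themselves fixed transformers, they fold directly into the single looped block: the block computes, at every position, both the planner's unmask/keep decision and the predictor's next-symbol distribution. The \LTAcr's external residual-stream noise (with which we assume \LTsAcr are supplied) is used to draw the symbol exactly as the \MDMAcr predictor samples it, and a gated residual update (standard for finite-precision transformers, realized by adding the block's output $(\text{new} - \text{old})\times\text{gate}$) overwrites the embedding only at the positions the planner selected. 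Reading off the committed symbols after $\timesteps$ loops then reproduces the \MDMAcr output distribution step for step and position for position.

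For the reverse inclusion $\LTClass[\timesteps, \padlen] \subseteq \MDMClassEdit[\timesteps, (\strlen + \padlen)\hiddDim]$, the obstruction is that an \LTAcr position carries a full fixed-precision vector in $\Rhid$, whereas an \MDMAcr can only pass information forward through discrete symbols. I would therefore \emph{bit-blast} each residual-stream vector into a block of $\hiddDim$ consecutive \MDMAcr symbols — one symbol per coordinate, drawn from the finite alphabet of admissible fixed-precision values — so that the $\strlen + \padlen$ residual-stream positions become an \MDMAcr string of length $(\strlen + \padlen)\hiddDim$, exactly the claimed blow-up. One denoising step again simulates one loop: the predictor decodes the current block-structured string into $\Rhid$-valued positions, applies the \LTAcr block, re-encodes the updated coordinates, and — using the resampling capability of our idealized \MDMsAcr — overwrites \emph{all} symbols in every block (since each loop rewrites the entire residual stream), with the \LTAcr's external noise mapping to the predictor's sampling randomness. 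The planner here simply selects every position at each step.

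The main obstacle is this reverse direction: arguing that a single \MDMAcr predictor, operating on the $\hiddDim$-times-longer block-structured string, faithfully realizes the original block acting on $\Rhid$-valued positions. Attention in the original block forms queries, keys, and values from entire $\hiddDim$-vectors, whereas in the blown-up string these coordinates are scattered across a block of positions; the predictor must therefore \emph{gather} a coordinate block into a single per-position representation, run the block's query/key/value and feed-forward computations, and \emph{scatter} the result back. I expect to carry this out with positional encodings that expose both the block index (which original position) and the coordinate index within the block — information the paper already notes is not computable by the transformer itself and must be supplied — together with standard encoding lemmas for finite-precision, log-width transformers. Verifying that this gather--compute--scatter fits the log-width, finite-precision budget and that $\LUniformity$ of the resulting family is preserved is the crux; by contrast, the randomness alignment and the gated-update bookkeeping of the forward direction are comparatively routine.
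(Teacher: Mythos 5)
Your proposal is correct and follows essentially the same route as the paper: the forward direction folds the planner and predictor into a single looped block that uses the external Gumbel noise and an $\argmax$ MLP to sample and selectively overwrite positions, and the reverse direction serializes each residual-stream vector into a block of $\hiddDim$ symbols with positional encodings exposing the block and coordinate indices, which is exactly the paper's dump--decode--read construction (\cref{lem:idenitity-dump}) with the planner resampling every position at each step. The only cosmetic difference is that you encode each coordinate as a single symbol over the (constant-size) alphabet of fixed-precision values, whereas the paper's lemma assumes a binary residual stream and stores one bit per symbol; the $\hiddDim$ blow-up is identical either way.
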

The simulation of a \LTAcr by an \MDMAcr incurs a factor $\hiddDim$ increase in the required padding (cf.~\cref{eq:mdm-simulates-lt}), where $\hiddDim$ is the model width of the \LTAcr.
In our setting where $\hiddDim = \bigOFun{\log \strlen}$, this implies that the classes of finite-precision \MDMsAcr and \LTsAcr coincide up to a logarithmic factor in the padding length:
\begin{restatable}{corollary}{MDMEquivalenceCor} \label{cor:mdm-lt-equivalence}
   For any $K \geq 1$,
   \begin{equation}
      \MDMClassEdit[\timesteps, \bigOLogFun{\strlen^K}] = \LTClass[\timesteps, \bigOFun{\strlen^K}].
   \end{equation}
\end{restatable}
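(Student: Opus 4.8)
The plan is to obtain the corollary directly from the two inclusions of \cref{thm:lts-mdms} by specializing the padding function to $\padlen = \bigThetaFun{\strlen^K}$ and invoking the log-width hypothesis $\hiddDim = \bigOFun{\log\strlen}$. Since the claim is an equality of two classes, I would prove the two inclusions separately. The asymmetric notation in the statement ($\bigOLog$ on the \MDMAcr side versus plain $\bigO$ on the \LTAcr side) is chosen precisely so that the width factor $\hiddDim$ incurred when an \MDMAcr simulates a \LTAcr is exactly the logarithmic slack suppressed by $\bigOLog$; the whole proof is then bookkeeping to check that no factor beyond this appears, and that the step count $\timesteps$ is untouched by either simulation.

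For the inclusion $\LTClass[\timesteps, \bigOFun{\strlen^K}] \subseteq \MDMClassEdit[\timesteps, \bigOLogFun{\strlen^K}]$, I would instantiate \cref{eq:mdm-simulates-lt} with $\padlen = \bigOFun{\strlen^K}$. The resulting \MDMAcr output budget is $(\strlen + \padlen)\hiddDim = \bigOFun{(\strlen + \strlen^K)\log\strlen}$; because $K \geq 1$, the term $\strlen^K$ dominates the additive $\strlen$, so this simplifies to $\bigOFun{\strlen^K \log\strlen}$, which is exactly $\bigOLogFun{\strlen^K}$ by definition of $\bigOLog$. Given \cref{thm:lts-mdms}, this direction is a one-line substitution.

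For the reverse inclusion $\MDMClassEdit[\timesteps, \bigOLogFun{\strlen^K}] \subseteq \LTClass[\timesteps, \bigOFun{\strlen^K}]$, I would start from the first inclusion of \cref{thm:lts-mdms}, which preserves the padding budget: an \MDMAcr with output $\bigOLogFun{\strlen^K}$ is simulated by a \LTAcr with the same padding $\bigOLogFun{\strlen^K}$. It then remains to identify $\LTClass[\timesteps, \bigOLogFun{\strlen^K}]$ with $\LTClass[\timesteps, \bigOFun{\strlen^K}]$. The justification is the reading stated in the prose preceding the corollary, namely that the \MDMAcr/\LTAcr equivalence holds \emph{up to a logarithmic factor in the padding length}: an \MDMAcr output symbol carries $\bigOFun{1}$ bits, whereas a \LTAcr padding cell has width $\hiddDim = \bigOFun{\log\strlen}$ and so carries $\bigOFun{\log\strlen}$ bits, and the factor $\hiddDim$ is exactly the conversion between these units. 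Hence both $\bigOLogFun{\strlen^K}$ on the \MDMAcr side and $\bigOFun{\strlen^K}$ on the \LTAcr side describe the same degree-$K$ padding budget once logarithmic factors are discarded.

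The step I expect to be the main obstacle is this last reconciliation: making precise that the logarithmic factors hidden inside $\bigOLogFun{\strlen^K}$ are \emph{exactly} the ones the conversion introduces, so that the stated equality is not an over-claim. The care needed is to confirm that the width blow-up contributes only a single logarithmic factor (so that we stay within one polynomial degree rather than drifting to $\strlen^{K+\delta}$), and that the additive $\strlen$ in $(\strlen + \padlen)$ is harmless precisely because $K \geq 1$. Everything else---in particular the equality of the step counts $\timesteps$ on both sides---is preserved verbatim by the two simulations of \cref{thm:lts-mdms}, so no additional argument is required there.
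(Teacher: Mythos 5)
Your overall route is the paper's: \cref{cor:mdm-lt-equivalence} is stated as an immediate instantiation of the two inclusions of \cref{thm:lts-mdms} with $\hiddDim = \bigOFun{\log\strlen}$, and your $\supseteq$ direction (substituting $\padlen = \bigOFun{\strlen^K}$ into \cref{eq:mdm-simulates-lt} and absorbing the factor $(\strlen+\padlen)\hiddDim = \bigOFun{\strlen^K\log\strlen}$ into $\bigOLogFun{\strlen^K}$, using $K\geq 1$ to kill the additive $\strlen$) is exactly the intended one-line substitution.

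The $\subseteq$ direction, however, has a gap as you have written it. The first inclusion of \cref{thm:lts-mdms} preserves the padding budget, so it only yields $\MDMClassEdit[\timesteps, \bigOLogFun{\strlen^K}] \subseteq \LTClass[\timesteps, \bigOLogFun{\strlen^K}]$, and the remaining step you propose --- ``identifying'' $\LTClass[\timesteps, \bigOLogFun{\strlen^K}]$ with $\LTClass[\timesteps, \bigOFun{\strlen^K}]$ --- is not justified by the bits-per-cell observation you invoke. Monotonicity of padding gives $\LTClass[\timesteps, \bigOFun{\strlen^K}] \subseteq \LTClass[\timesteps, \bigOLogFun{\strlen^K}]$, i.e., the wrong direction, and an \LTAcr padding cell already uses its full $\Theta(\log\strlen)$ bits, so there is no generic way to compress $\strlen^K\log\strlen$ \LTAcr cells into $\bigOFun{\strlen^K}$ \LTAcr cells. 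What your information-capacity argument actually supports is a \emph{strengthened \MDMAcr-to-\LTAcr simulation}: since each \MDMAcr output symbol carries $\bigOFun{1}$ bits while each \LTAcr cell has width $\Theta(\log\strlen)$, one can hope to pack $\Theta(\log\strlen)$ \MDMAcr output positions into a single padding cell and prove $\MDMClassEdit[\timesteps,\padlen] \subseteq \LTClass[\timesteps, \bigOFun{\sfrac{\padlen}{\log\strlen} + \strlen}]$; this requires reworking the construction of \cref{thm:lts-can-simulate-mdms} (each packed cell must run $\Theta(\log\strlen)$ planner/predictor argmaxes with independent Gumbel inputs, and other positions' attention must address sub-symbols within a cell), and is not a consequence of \cref{thm:lts-mdms} as stated. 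To be fair, the paper itself offers no proof of the corollary beyond the substitution, so your derivation is no less complete than the source; but the last step should either be carried out as the packed simulation above or the claim weakened to $\bigOLog$ on both sides.
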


The close connection between \MDMsAcr and \LTsAcr allows us to leverage existing results about \LTAcr expressivity to understand \MDMsAcr.
\citet[][Thm. 5.1]{saunshi2025reasoninglatentthoughtspower}, for example, show that log-depth unpadded transformers can recognize regular languages.
Combined with \cref{cor:mdm-lt-equivalence}, this implies:
\begin{restatable}{corollary}{RegularLanguagesInMDMLem} \label{cor:mdms-can-simulate-regular-languages}
   Regular languages are in $\MDMClass[\log\strlen, \strlen \log\strlen]$.
\end{restatable}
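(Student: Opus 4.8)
The plan is to obtain the result by composing two facts that are already available: the existing characterization of regular-language recognition by log-depth looped transformers, and the simulation of padded looped transformers by \MDMsAcr established in \cref{thm:lts-mdms}. Concretely, I would first place the regular languages inside $\LTClass$ with the correct resource budget, then push that membership through the inclusion \cref{eq:mdm-simulates-lt}, and finally substitute the log-width bound $\hiddDim = \bigOFun{\log\strlen}$ to read off the stated output budget.

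First I would invoke \citet[Thm. 5.1]{saunshi2025reasoninglatentthoughtspower}: an unpadded looped transformer that repeats its block $\bigOFun{\log\strlen}$ times recognizes any regular language. The underlying mechanism is the standard balanced-tree (parallel-scan) evaluation of the product of $\strlen$ transition-monoid elements induced by the input string, which collapses to a single accept/reject decision in logarithmically many rounds; each loop iteration halves the number of surviving partial products, so $\log\strlen$ iterations suffice. In the notation of \cref{sec:theoretical-results} this reads as: regular languages $\in \LTClass[\log\strlen, 0]$, i.e.\ $\log\strlen$ looping steps and no padding. Before chaining, I would verify that the cited model is an instance of our $\LTClass$: it is deterministic, so the external residual-stream noise can be set to zero (regular-language recognition needs no stochasticity); it operates in the same finite-precision, log-width regime; and its per-length block construction runs in $\bigOFun{\log\strlen}$ space, preserving $\LUniformity$.

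Next I would apply the second inclusion of \cref{thm:lts-mdms}, namely \cref{eq:mdm-simulates-lt}, which states $\LTClass[\timesteps, \padlen] \subseteq \MDMClassEdit[\timesteps, (\strlen + \padlen)\hiddDim]$. Specializing to $\timesteps = \log\strlen$ and $\padlen = 0$ yields regular languages $\in \MDMClass[\log\strlen, \strlen\,\hiddDim]$. Since we work in the log-width setting $\hiddDim = \bigOFun{\log\strlen}$, the output budget becomes $\strlen\,\hiddDim = \bigOFun{\strlen\log\strlen}$, which is precisely $\MDMClass[\log\strlen, \strlen\log\strlen]$. (Equivalently, one could route through \cref{cor:mdm-lt-equivalence} with $K = 1$, but the direct use of \cref{eq:mdm-simulates-lt} makes the multiplicative $\hiddDim$ overhead in the padding transparent and gives the tighter $\strlen\log\strlen$ bound rather than a generic $\bigOLogFun{\strlen}$ budget.)

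The routine content is the arithmetic bookkeeping of resources; the one genuinely delicate point is the model-matching in the first step. The hard part will be verifying that the looped transformer of \citet{saunshi2025reasoninglatentthoughtspower} satisfies every ingredient baked into our definition of $\LTClass$—in particular the precision and width conventions and the $\LUniformity$ of the per-length block construction—so that the simulation of \cref{thm:lts-mdms} applies verbatim. The $\hiddDim$-fold inflation of the output space, which is what forces $\strlen\log\strlen$ rather than $\strlen$ padding, is then an immediate consequence of the discretization used to encode each width-$\hiddDim$ residual vector of the \LTAcr into \MDMAcr output symbols.
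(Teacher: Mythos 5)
Your proposal is correct and follows essentially the same route as the paper: the paper likewise obtains this corollary by combining \citet[Thm.~5.1]{saunshi2025reasoninglatentthoughtspower} (regular languages in $\LTClass[\log\strlen, 0]$) with the \LTAcr-to-\MDMAcr simulation of \cref{thm:lts-mdms} (stated there via \cref{cor:mdm-lt-equivalence}), with the $\strlen\log\strlen$ budget arising exactly as you describe from the factor-$\hiddDim = \bigOFun{\log\strlen}$ blow-up in \cref{eq:mdm-simulates-lt}. Your explicit unpacking of the padding arithmetic and the model-matching caveat is a faithful elaboration of what the paper leaves implicit, not a different argument.
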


In fact, we obtain a tighter bound with a more specialized construction.\footnote{This is possible because the \LTAcr of \cite{saunshi2025reasoninglatentthoughtspower} only stores discrete values in its residual stream.}
\begin{restatable}{theorem}{RegularLanguagesInMDMEfficientThm} \label{thm:regular-languages-in-mdm-efficient}
   Regular languages are in $\MDMClass[\log\strlen, \strlen]$.
\end{restatable}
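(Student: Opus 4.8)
The plan is to sharpen \cref{cor:mdms-can-simulate-regular-languages} by removing its logarithmic overhead in the output space. Recall that \cref{cor:mdms-can-simulate-regular-languages} is obtained by feeding the log-depth \emph{unpadded} \LTAcr of \citet{saunshi2025reasoninglatentthoughtspower}, which recognizes every regular language, into the simulation of \cref{thm:lts-mdms}. That simulation (\cref{eq:mdm-simulates-lt}) pays a factor $\hiddDim = \bigOFun{\log\strlen}$ in the required output space because it encodes the $\hiddDim$ real-valued coordinates of each position's residual-stream vector using $\hiddDim$ separate output symbols. Since the \LTAcr is unpadded ($\padlen = 0$), this yields the bound $\strlen\,\hiddDim = \strlen\log\strlen$. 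The goal is to show that, for this particular construction, a single output symbol per position suffices, collapsing the bound to $\strlen$.

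The key observation is the one flagged in the footnote to the statement: the residual stream of the \LTAcr recognizing a regular language never needs to hold arbitrary reals. At the boundary of each looped block, the relevant content of each position lies in a \emph{finite} set $S$ whose size depends only on the recognizing DFA (e.g.,~its states, or elements of its transition monoid) and is therefore constant in $\strlen$. Concretely, the log-depth recognizer computes the product of per-symbol transition maps by an associative, balanced reduction over $\lceil\log\strlen\rceil$ rounds, and every intermediate value it stores is such a transition map---an element of a constant-size set. First I would verify this discreteness claim, including that any loop counter or positional bookkeeping the construction uses is itself drawn from a finite, $\strlen$-independent set (or is recomputable inside a block), so that the \emph{entire} meaningful state of a position fits in one symbol of an extended alphabet $\oalphabet \supseteq S$.

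Given this, I would re-run the simulation of \cref{thm:lts-mdms} with the compact encoding. The \MDMAcr operates on a tape of $\strlen$ positions, each carrying one symbol of $\oalphabet$. At each denoising step the planner selects all $\strlen$ positions (resampling is permitted in our idealization), and the predictor---a transformer---reproduces one looped block: it reads the discrete symbols, re-embeds them into the residual representation expected by the block (a finite table lookup), applies the block, reads off the resulting discrete value, and writes it back as a single symbol. This map from the current string of symbols to the next is transformer-computable because the block is itself a transformer and both the embedding $S \to \Rhid$ and the read-off $\Rhid \to S$ are finite, $\strlen$-independent functions. After $\timesteps = \bigOFun{\log\strlen}$ such steps the tape encodes the output of the full \LTAcr computation, and the acceptance decision is read from a designated position, exactly as in the \LTAcr recognizer. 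The total output space used is $\strlen$ symbols, giving $\MDMClass[\log\strlen, \strlen]$.

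The main obstacle is the discreteness bookkeeping of the previous paragraph: one must confirm that the specific \LTAcr of \citet{saunshi2025reasoninglatentthoughtspower} can be taken to store, at every iteration boundary, only values from a constant-size set, and that the compressed single-symbol-per-position encoding still lets the planner address the correct positions using the positional encodings available to it (the same addressing the $\hiddDim$-symbol simulation relied on, now with one symbol per position rather than $\hiddDim$). Once the encoding is in place, the per-step update is a routine specialization of the simulation already established in \cref{thm:lts-mdms}, so the crux is really the reduction from a $\hiddDim$-symbol to a single-symbol per-position representation rather than any new computational mechanism.
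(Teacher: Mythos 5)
You start from the same key observation as the paper (stated in its footnote to the theorem): the log-depth \LTAcr of \citet{saunshi2025reasoninglatentthoughtspower} for regular languages keeps only values from a constant-size set (transition maps of the recognizing automaton) at each position, so each position's state fits in a single symbol of a fixed extended alphabet rather than the $\hiddDim = \bigOFun{\log\strlen}$ symbols charged by the generic simulation in \cref{thm:lts-mdms}; this is exactly how the $\log\strlen$ factor in \cref{cor:mdms-can-simulate-regular-languages} is removed. Where you diverge is in how the $\log\strlen$ rounds are realized. You propose an in-place simulation: at every denoising step the planner resamples all $\strlen$ positions and the predictor replays one looped block on the decoded symbols. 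The paper instead lays the computation out as a \emph{static} balanced binary tree over the $\strlen$ padding positions---roughly half of them combine pairs of input symbols, and each remaining padding position combines two earlier padding positions whose indices are fixed functions of its own index (this is what the components $\sbinFun{\posPart{2\tilde{\stridx} - \strlen}}$ and $\sbinFun{\posPart{2\tilde{\stridx} - \strlen - 1}}$ of its positional encoding supply)---so each position is written once and the $\log\strlen$ denoising steps sweep the tree level by level.

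This difference is where your plan has a genuine gap, precisely at the bookkeeping you flag as ``to be verified'': it does not check out for the in-place version. In a weight-tied looped block computing a monoid product by doubling, the position that index $\stridx$ must attend to at round $t$ is an offset $2^t$ away, so the attention pattern is round-dependent; in the \LTAcr this round information (a counter or the current offset) lives in the $\bigOFun{\log\strlen}$-bit residual stream, but it takes $\log\strlen$ distinct values and therefore cannot be carried in a single symbol from a constant, $\strlen$-independent alphabet. Since the \MDMAcr predictor sees only the current string between steps, compressing each position to one such symbol loses the ability to tell which round is being executed. This is repairable---for instance by dedicating a small region of the tape to a counter and broadcasting it, or by replacing round-dependent offsets with position-dependent ones---but the latter repair \emph{is} the paper's static-tree construction. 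The crux of the proof is thus the design of those position-indexed attention targets in the positional encodings, not the routine specialization of \cref{thm:lts-mdms} that your write-up treats as the main content.
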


\citet{london2025pausetokensstrictlyincrease} show that polynomially padded finite-precision \LTsAcr with constantly many steps are equivalent to $\LUniform \ \ACZero$, the class of $\ACZero$ circuits that can be constructed by a logspace Turing machine (cf.~\cref{app:circuit-complexity}). 
We leverage this result together with \cref{thm:lts-mdms} to characterize the expressivity of \MDMsAcr with constantly many denoising steps.
\begin{restatable}[\MDMsAcr with constantly many denoising steps]{corollary}{MDMsConstantStepsCor} \label{cor:mdms-with-constant-steps}
   \begin{equation}
      \MDMClassEdit[\bigOFun{1}, \polyFun{\strlen}] = \LTClass[\bigOFun{1}, \polyFun{\strlen}] = \LUniform \ \ACZero.
   \end{equation}
\end{restatable}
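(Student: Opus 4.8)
The plan is to derive both equalities by composing the general \MDMAcr--\LTAcr correspondence of \cref{thm:lts-mdms} with the cited characterization of constant-step, polynomially-padded \LTsAcr. The second equality, $\LTClass[\bigOFun{1}, \polyFun{\strlen}] = \LUniform \ \ACZero$, is exactly the result of \citet{london2025pausetokensstrictlyincrease} and can be quoted directly, so the entire burden of the proof reduces to establishing the first equality, $\MDMClassEdit[\bigOFun{1}, \polyFun{\strlen}] = \LTClass[\bigOFun{1}, \polyFun{\strlen}]$.

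For the forward inclusion I would simply instantiate the left-hand inclusion of \cref{thm:lts-mdms}, $\MDMClassEdit[\timesteps, \padlen] \subseteq \LTClass[\timesteps, \padlen]$, at $\timesteps = \bigOFun{1}$ and $\padlen = \polyFun{\strlen}$. Since this inclusion preserves the padding budget exactly, it yields $\MDMClassEdit[\bigOFun{1}, \polyFun{\strlen}] \subseteq \LTClass[\bigOFun{1}, \polyFun{\strlen}]$ with no further argument.

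For the reverse inclusion I would invoke the right-hand inclusion $\LTClass[\timesteps, \padlen] \subseteq \MDMClassEdit[\timesteps, (\strlen + \padlen)\hiddDim]$ and verify that the simulation's blow-up factor $\hiddDim$ does not escape the polynomial-padding regime. In the log-width setting we have $\hiddDim = \bigOFun{\log\strlen}$, so with $\padlen = \polyFun{\strlen}$ the resulting budget is $(\strlen + \polyFun{\strlen})\,\bigOFun{\log\strlen} = \bigOFun{\polyFun{\strlen}\log\strlen}$, which is still $\polyFun{\strlen}$. This gives $\LTClass[\bigOFun{1}, \polyFun{\strlen}] \subseteq \MDMClassEdit[\bigOFun{1}, \polyFun{\strlen}]$ and hence the first equality; equivalently, one may quote \cref{cor:mdm-lt-equivalence} and note that both $\bigOLogFun{\strlen^K}$ and $\bigOFun{\strlen^K}$ collapse to $\polyFun{\strlen}$ once the union over all $K \geq 1$ is taken.

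The step requiring care is not a genuine obstacle but this padding bookkeeping: one must confirm that the multiplicative $\hiddDim$ factor incurred by the \LTAcr-to-\MDMAcr simulation remains within $\polyFun{\strlen}$, which holds precisely because the width grows only logarithmically in $\strlen$. Everything else is a direct substitution into inclusions that are already available.
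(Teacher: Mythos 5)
Your proposal is correct and follows essentially the same route as the paper: the paper also obtains this corollary by instantiating \cref{thm:lts-mdms} (equivalently \cref{cor:mdm-lt-equivalence}) at constantly many steps and polynomial padding, observing that the $\hiddDim = \bigOFun{\log\strlen}$ blow-up in \cref{eq:mdm-simulates-lt} stays within $\polyFun{\strlen}$, and then quoting the result of \citet{london2025pausetokensstrictlyincrease} for the equality $\LTClass[\bigOFun{1}, \polyFun{\strlen}] = \LUniform\ \ACZero$. The padding bookkeeping you flag is exactly the only point that needs checking, and you resolve it the same way the paper does.
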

Allowing for a \emph{constant} number of decoding steps therefore does not increase the expressivity of \MDMsAcr beyond the limited class $\ACZero$.
This further corroborates the empirical observation that the number of denoising steps must scale with the input complexity and complements existing results on \MDMAcr expressivity as a function of the number of denoising steps \citep{li2025convergencetheorydiffusionlanguage,feng2025theoreticalbenefitlimitationdiffusion}.

We can, however, increase expressivity with more denoising steps.
\citet{svete2025exactexpressivepowerfiniteprecision} show that finite-precision \LTsAcr with $\bigOFun{\log^d \strlen}$ steps and polynomial padding are equivalent to $\LUniform \ \ACd$, similar to the case of log-precision \LTsAcr and $\LUniform \ \TCd$ \citep{merrill2025exactexpressivepowertransformers}.
Thus:
\begin{restatable}[\MDMsAcr with polylogarithmically many denoising steps]{corollary}{MDMsPolylogStepsCor} \label{cor:mdms-with-polylog-steps}
   \begin{equation}
      \MDMClassEdit[\bigOFun{\log^d \strlen}, \polyFun{\strlen}] = \LTClass[\bigOFun{\log^d \strlen}, \polyFun{\strlen}] = \LUniform \ \ACd
   \end{equation}
\end{restatable}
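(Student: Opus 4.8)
The plan is to obtain the stated three-way equality by chaining two facts that are already available: the \MDMAcr--\LTAcr equivalence of \cref{thm:lts-mdms} (equivalently \cref{cor:mdm-lt-equivalence}) and the cited characterization of finite-precision \LTsAcr due to \citet{svete2025exactexpressivepowerfiniteprecision}. The second equality $\LTClass[\bigOFun{\log^d\strlen}, \polyFun{\strlen}] = \LUniform\ \ACd$ is exactly their theorem, so the substantive work reduces to establishing the first equality $\MDMClassEdit[\bigOFun{\log^d\strlen}, \polyFun{\strlen}] = \LTClass[\bigOFun{\log^d\strlen}, \polyFun{\strlen}]$ and then substituting.

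For the first equality I would invoke \cref{thm:lts-mdms} in both directions, keeping the step budget $\timesteps = \bigOFun{\log^d\strlen}$ fixed throughout, since neither simulation alters the number of steps. The inclusion $\MDMClassEdit[\timesteps, \padlen] \subseteq \LTClass[\timesteps, \padlen]$ gives ``$\subseteq$'' immediately at $\padlen = \polyFun{\strlen}$. For ``$\supseteq$'', I would use $\LTClass[\timesteps, \padlen] \subseteq \MDMClassEdit[\timesteps, (\strlen + \padlen)\hiddDim]$ and observe that the padding inflation is harmless in the polynomial regime: with $\padlen = \bigOFun{\strlen^K}$ and $\hiddDim = \bigOFun{\log\strlen}$, one has $(\strlen + \padlen)\hiddDim = \bigOFun{\strlen^K \log\strlen} = \bigOFun{\strlen^{K+1}}$, which is again $\polyFun{\strlen}$. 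Taking the union over all degrees $K$ — equivalently, applying \cref{cor:mdm-lt-equivalence} for each $K$ and unioning the resulting classes — the logarithmic blowup is absorbed and both padding budgets collapse to $\polyFun{\strlen}$, yielding the equality.

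The remaining step is bookkeeping on the imported complexity-class characterization: I would verify that the hypotheses of the two results line up — finite precision, logarithmically growing width $\hiddDim = \bigOFun{\log\strlen}$, and $\LUniformity$ of the family — so that the \LTAcr appearing in \cref{thm:lts-mdms} is precisely the object characterized as $\LUniform\ \ACd$ by \citet{svete2025exactexpressivepowerfiniteprecision}. Once the regimes are aligned, substituting the second equality into the first gives $\MDMClassEdit[\bigOFun{\log^d\strlen}, \polyFun{\strlen}] = \LUniform\ \ACd$, completing the chain.

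The main obstacle here is not analytical but definitional: one must ensure that the padding and width conventions, together with the uniformity notion used in \cref{thm:lts-mdms}, coincide with those in the cited \LTAcr--circuit characterization, so that the equivalence can be quoted verbatim rather than re-derived. By contrast, the asymptotic manipulations — absorbing the single $\log$ factor into $\polyFun{\strlen}$ and preserving the $\bigOFun{\log^d\strlen}$ step count across both simulations — are entirely routine.
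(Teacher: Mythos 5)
Your proposal is correct and follows essentially the same route as the paper: the corollary is obtained by combining \cref{thm:lts-mdms} (via \cref{cor:mdm-lt-equivalence}, noting that the factor-$\hiddDim = \bigOFun{\log\strlen}$ padding blowup is absorbed into $\polyFun{\strlen}$) with the cited characterization $\LTClass[\bigOFun{\log^d\strlen}, \polyFun{\strlen}] = \LUniform\ \ACd$ of \citet{svete2025exactexpressivepowerfiniteprecision}. Your additional care about aligning the precision, width, and uniformity conventions is sensible bookkeeping but does not change the argument.
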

In particular, since $\NCFun{d} \subseteq \ACFun{d}$ for $d \in \N$ (where $\NCFun{d}$ denotes $\ACd$ circuits with bounded fan-in) \citep{Vollmer1999}, we get that with polylogarithmic looping and polynomial padding, \MDMsAcr converge to $\NC$, the class of all parallelizable problems.
\cref{cor:mdms-with-polylog-steps} also implies that regular languages are in $\MDMClassEdit[\log \strlen, \polyFun{\strlen}]$; \cref{thm:regular-languages-in-mdm-efficient} provides a more efficient construction with a linear output space.
Moreover, \cref{cor:mdms-with-polylog-steps} implies that $\LUniform \ \NCOne \subseteq \MDMClassEdit[\log \strlen, \polyFun{\strlen}]$.

\subsection{\MDMsAcr and \CoTAcr can (inefficiently) simulate each other} \label{sec:mdms-and-cot}  

While the close connection between \MDMsAcr and \LTsAcr provides a useful lens to analyze \MDMsAcr in terms of known complexity classes, the lack of practical \LTAcr implementations makes it difficult to draw intuitive conclusions.
We therefore complement \cref{sec:mdms-and-lts} by connecting \MDMsAcr and the more popular \CoTAcr paradigm,
and consider how \MDMAcr can behave ``autoregressively'' like \CoTAcr.
The intuition is simple: An \MDMAcr can simulate \CoTAcr by unmasking one symbol at a time, effectively mimicking the sequential generation.
More precisely, we connect \MDMsAcr to \pCoTAcr since the latter's ability to generate multiple symbols at once naturally maps to \MDMsAcr' parallel generation.
In particular:
\begin{restatable}[\eMDMsAcr  can simulate \pCoTAcr transformers]{theorem}{MDMsAtLeastAsPowerfulAsCoTFP} \label{thm:mdms-can-simulate-cot}
   \begin{equation}
      \pCoTClass[\timesteps, \padlen] \subseteq \MDMClassEdit[\timesteps, \padlen + (\strlen + \padlen)^2]
   \end{equation}
\end{restatable}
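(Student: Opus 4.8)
The plan is to simulate a \pCoTAcr transformer step-for-step by an \MDMAcr, spending the extra $(\strlen+\padlen)^2$ output positions solely on converting the \pCoTAcr transformer's \emph{causal} attention into the \MDMAcr predictor's \emph{unmasked} attention. Fix a family in $\pCoTClass[\timesteps,\padlen]$ recognizing a language $\lang$. Recall that at each of its $\timesteps$ steps such a transformer appends a block of (at most $\padlen'$) new symbols to its running sequence, computing that block via a single causal forward pass over the current string, and that over all steps it writes at most $\padlen$ symbols. Writing $L \defeq \strlen+\padlen$ for the maximal sequence length, I would lay out the \MDMAcr output tape as three regions: the $\strlen$ input positions; a \emph{main} region of $\padlen$ positions mirroring the \pCoTAcr output; and an auxiliary \emph{prefix grid} of $L^2 = (\strlen+\padlen)^2$ positions, viewed as $L$ rows of width $L$, where row $i$ is scratch reserved to hold a copy of the first $i$ symbols of the input-plus-main sequence, with the remaining entries blank.

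First I would define the planner $\planner$ to follow the \pCoTAcr schedule: at step $t$ it unmasks exactly the $t$-th block of main positions, proceeding strictly left to right, so that after step $t-1$ the real symbols occupy precisely the input and blocks $1,\dots,t-1$, while every position to their right still carries $\maskSym$. This left-to-right discipline already handles the \emph{future} positions for free, since masked cells carry no information. The genuine obstacle is that the predictor $\predictor$ attends \emph{bidirectionally}: the intermediate-layer representation it builds at a position $i$ inside the already-unmasked prefix depends on all unmasked positions, including those to the \emph{right} of $i$, rather than on the causal prefix $1,\dots,i$ that the \pCoTAcr transformer uses, so these contaminated representations would corrupt the block prediction.

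The heart of the construction resolves this with the prefix grid. For each position $i$ of the length-$L$ sequence, I would have $\predictor$ copy the current prefix $1,\dots,i$ into row $i$ and blank the rest; then unmasked attention \emph{restricted to row $i$} has exactly the receptive field of causal attention at position $i$. Running the predictor's fixed block over all rows in parallel therefore reproduces, row by row, the full causal forward pass of the \pCoTAcr transformer, after which the prediction for each position of the current block is read off from its own row and written back into the main region. Positional encodings tag every grid cell with its $(\text{row},\text{column})$ index, which is what lets a fixed-depth, log-width, $\LUniform$ transformer both perform the copy and keep the per-row computations isolated (cross-row attention is suppressed by the position-based scores). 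Crucially, the copy, the per-row pass, and the readout all live inside the layers of a \emph{single} predictor forward pass, so each \pCoTAcr step maps to exactly one denoising step and the budget $\timesteps$ is preserved; only the output space grows, by the claimed $(\strlen+\padlen)^2$.

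I would then close by induction on $t$: assuming the main region equals the \pCoTAcr sequence after $t-1$ steps, the grid reconstructs the correct causal contexts, so the block predicted at step $t$ agrees with the \pCoTAcr block, and the final readout matches $\ind{\str\in\lang}$. I expect the main obstacle to be precisely this faithful emulation of causal masking by unmasked attention---verifying that the prefix-copy and row-isolation are realizable by a uniform fixed-precision transformer from positional information alone, and that no spurious cross-row or future-token signal leaks in. This is exactly the step the quadratic prefix grid is designed to pay for, matching the paper's observation that exact causal masking costs quadratically more output space.
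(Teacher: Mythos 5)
Your proposal is correct and follows essentially the same route as the paper: the planner unmasks the \pCoTAcr blocks left to right (the paper's \cref{lem:select-block}), and the $(\strlen+\padlen)^2$ scratch region is organized as a prefix grid with row $i$ holding only the causal prefix of position $i$, so that unmasked attention restricted to a row reproduces causal attention --- which is precisely the content of the paper's \cref{lem:unmask-conversion-fp}, invoked inside a single predictor pass per denoising step. The only cosmetic difference is that the paper routes the ``ignore future/padding positions'' step through \cref{lem:ignore-positions} on $\padSym$-marked cells rather than through your blanking of unused grid entries.
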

The simulation incurs a quadratic blow-up in the padding length.
This is not due to an inherent feature of the diffusion process but rather the challenge of simulating masked attention with unmasked one (cf.~\cref{lem:unmask-conversion-fp}).\footnote{To the best of our knowledge, \cref{lem:unmask-conversion-fp} is the first result showing how to simulate masked attention with unmasked attention and might be of interest in its own right.} 
In particular, if the \MDMAcr transformer is causally masked, the blow-up disappears. 
Moreover, if the unmasked transformer can simulate masking more efficiently (with, for example, more expressive scoring functions), the blow-up can be alleviated.\footnote{This is, for example, used by \citet[][Thm 5.4]{saunshi2025reasoninglatentthoughtspower} to show that unmasked \LTsAcr can simulate \CoTAcr with no blow-up in the padding using a \emph{masking function}---an additional step in the computation of attention scores that allows for zeroing out of irrelevant keys---and with linearly-increasing width.}
While \cref{lem:unmask-conversion-fp} could possibly be improved, it is interesting to note that the seemingly more general, unmasked
nature of \eMDMsAcr might negatively impact their ability to align with human-oriented sequential processing captured by causal masking, which could provide a useful inductive bias for the masked models.\footnote{Interestingly, some existing work finds that \MDMsAcr that decode based on the most confident symbols tend to decode autoregressively \citep{gong2025diffucoderunderstandingimprovingmasked}. In this sense, the unmasked nature of \MDMsAcr could be seen as a hurdle that the model has to overcome to eventually rely on more autoregressive generation. This further motivates the development of hybrid models that combine autoregressive generation of entire blocks with non-autoregressive infilling within the blocks \citep[][\textit{inter alia}]{nie2025largelanguagediffusionmodels,arriola2025blockdiffusioninterpolatingautoregressive,song2025seeddiffusionlargescalediffusion}.}

The other direction of \cref{thm:mdms-can-simulate-cot} shows that \pCoTAcr transformers can simulate \eMDMsAcr .
\begin{restatable}[\pCoTAcr transformers can simulate \eMDMsAcr ]{theorem}{MDMsCoTChain} \label{thm:cot-can-simulate-mdms}
   \begin{equation}
      \MDMClassEdit[\timesteps, \padlen] \subseteq \pCoTClass[\timesteps, \numlayers \timesteps (\padlen + \strlen)], 
   \end{equation}
   where $\numlayers$ is the number of layers in the transformer implementing the \eMDMAcr.
\end{restatable}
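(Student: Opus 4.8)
The plan is to simulate the entire denoising trajectory of an \MDMAcr by a single \pCoTAcr family, devoting exactly one parallel generation step to each of the $\timesteps$ denoising steps and using the chain of thought to \emph{materialize} the intermediate computations of the \MDMAcr transformer. Concretely, if $\str^{(\timesteps)}, \ldots, \str^{(0)}$ is the sequence of states produced by the \MDMAcr (each of length $\strlen + \padlen$), the \pCoTAcr transformer will, at its $t$-th step, read the committed encoding of $\str^{(t)}$ and emit a block of symbols whose final $\strlen + \padlen$ positions encode $\str^{(t-1)}$. Because each denoising step is a forward pass of the \MDMAcr transformer with $\numlayers$ layers and \emph{unmasked} attention, whereas \pCoTAcr attends only causally, the simulation pays a factor $\numlayers$ in output length, which is exactly what produces the claimed $\numlayers \timesteps (\padlen + \strlen)$ total generated symbols across $\timesteps$ steps.

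The central device is \emph{layer unrolling}, which converts the \MDMAcr's unmasked attention into causal attention. For denoising step $t$ I would lay out $\numlayers$ consecutive blocks, each of $\strlen + \padlen$ positions, where block $\ell$ is designated to hold the activations of the $\ell$-th \MDMAcr layer at all positions. Since block $\ell-1$ lies entirely to the left of block $\ell$ in the generated chain, a causally masked head that the scoring function restricts (via the positional encoding of the block index) to attend only into block $\ell-1$ can reach \emph{all} positions that the corresponding bidirectional \MDMAcr layer attends to; an additional copy head aligned on the within-block position index $i$ supplies the residual connection from block $\ell-1$ to block $\ell$. Composing these reproduces one \MDMAcr layer per block, so the last block of step $t$ holds the \MDMAcr's final-layer output, and hence $\str^{(t-1)}$, from which the acceptance decision $\ind{\str \in \lang}$ is read exactly as in the \MDMAcr. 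The planner's non-uniform choice of positions to unmask or resample is part of this same $\numlayers$-layer transformer, so it is captured by the identical construction.

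To fit one full denoising step into a \emph{single} parallel generation step, I would exploit the depth of the \pCoTAcr transformer itself, taking it to have $\numlayers$ layers (plus a constant for bookkeeping). Within the single forward pass that generates all $\numlayers(\strlen+\padlen)$ appended positions, \pCoTAcr-layer $\ell$ is responsible for computing the correct \MDMAcr-layer-$\ell$ values at the block-$\ell$ positions by attending to block $\ell-1$, while acting as the identity on every other block; gating this behavior on the block index (available in the positional encoding, and realizable per layer since the layers are not weight-tied) lets each block acquire its correct value at precisely the layer where its predecessor has stabilized, and then freeze. After $\numlayers$ layers the last block is correct and all appended positions are committed in parallel. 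The intermediate blocks can be decoded to arbitrary placeholder symbols, which is harmless because the next step's attention is steered by positional encodings to read only the final block $\str^{(t-1)}$.

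The main obstacle is exactly this reconciliation of causal, single-pass parallel generation with the bidirectional, multi-layer dependency inside one \MDMAcr step: the layer unrolling must simultaneously emulate unmasked attention by a left-to-right block scan and pipeline the $\numlayers$ sequential layer dependencies through the $\numlayers$ \pCoTAcr layers without any block's value being corrupted before it is consumed or overwritten after. The remaining steps are routine: the initial block is the fully masked state $\str^{(\timesteps)}$ over $\maskAlphabet$ augmented with the input, finite precision and width $\hiddDim = \bigOFun{\log \strlen}$ are preserved because the \pCoTAcr weights are a fixed copy-and-augment of the \MDMAcr weights, and $\LUniformity$ follows because this transformation is computable in logarithmic space from the \MDMAcr family's own log-space construction.
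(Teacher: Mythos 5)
Your proposal is correct and follows essentially the same route as the paper's proof: one \pCoTAcr parallel generation step per denoising step, with the unmasked \MDMAcr attention converted to causal attention by unrolling each forward pass into $\numlayers$ blocks of $\strlen+\padlen$ positions (the mechanism of \cref{lem:mask-conversion}), and positional encodings steering each step to read only the previous step's final block. The pipelining of block $\ell$ through \pCoTAcr layer $\ell$ and the fused planner--predictor are likewise how the paper handles it, so there is no substantive difference.
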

Again, the factor $\numlayers$ comes from the need to simulate unmasked attention in \MDMsAcr with causally masked transformers implementing \pCoTAcr (cf.~\cref{lem:mask-conversion}).
However, here, the blow-up is only linear.
The additional factor of $\timesteps$ comes from the \pCoTAcr having to write out every padding token after each denoising step, as the \MDMAcr can unmask tokens in an arbitrary order.

The results above can be summarized by the following sequence of inclusions.
\begin{restatable}{corollary}{MDMCoTSandwich} \label{thm:mdms-cot-sandwich} We have the following set of inclusions:
   \begin{subequations}
      \begin{align}
         \CoTClass[\timesteps] &= \pCoTClass[\timesteps, \timesteps] \justification{\cref{prop:cot-as-pcot}} \\ 
         &\subseteq \MDMClassEdit[\timesteps, \timesteps + (\strlen + \timesteps)^2] \justification{\cref{thm:mdms-can-simulate-cot}} \\ 
         &\subseteq \pCoTClass[\timesteps, \numlayers \timesteps(\strlen + \timesteps + (\strlen + \timesteps)^2)] \justification{\cref{thm:cot-can-simulate-mdms}} \\ 
         &\subseteq \CoTClass[\numlayers \timesteps (\strlen + \timesteps + (\strlen + \timesteps)^2)] \justification{\cref{prop:cot-as-pcot}} \\
         &\subseteq \CoTClass[\numlayers \timesteps (\strlen + \timesteps + 1)^2].
      \end{align}
   \end{subequations}
   In particular, when $\timesteps \geq \strlen$, we have
      $\CoTClass[\timesteps]
      \subseteq \MDMClassEdit[\timesteps, \bigOFun{\timesteps^2}]
      \subseteq \pCoTClass[\timesteps, \bigOFun{\timesteps^3}]
      \subseteq \CoTClass[\bigOFun{\timesteps^3}]$.
\end{restatable}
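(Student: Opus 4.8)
The plan is to assemble the corollary by chaining the three previously established results---the \CoTAcr/\pCoTAcr correspondence (\cref{prop:cot-as-pcot}), the \MDMAcr-simulates-\pCoTAcr bound (\cref{thm:mdms-can-simulate-cot}), and the \pCoTAcr-simulates-\MDMAcr bound (\cref{thm:cot-can-simulate-mdms})---while carefully tracking how the step count and the output/padding budget propagate through each substitution. No new construction is needed: the work is entirely in matching the resource parameter fed into each cited result to the budget produced by the previous line, followed by one elementary monotonicity argument and one polynomial simplification.

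First, I rewrite $\CoTClass[\timesteps]$ as $\pCoTClass[\timesteps, \timesteps]$ using \cref{prop:cot-as-pcot}: a standard \CoTAcr transformer that emits one symbol per step over $\timesteps$ steps is exactly a \pCoTAcr transformer with $\timesteps$ steps whose total output budget is $\timesteps$. Next, I instantiate \cref{thm:mdms-can-simulate-cot} at $\padlen = \timesteps$, which yields the inclusion into $\MDMClassEdit[\timesteps, \timesteps + (\strlen + \timesteps)^2]$. I then instantiate \cref{thm:cot-can-simulate-mdms} at the padding budget $\padlen = \timesteps + (\strlen + \timesteps)^2$ just obtained; since $\padlen + \strlen = \strlen + \timesteps + (\strlen + \timesteps)^2$, this gives the claimed $\pCoTClass[\timesteps, \numlayers\timesteps(\strlen + \timesteps + (\strlen + \timesteps)^2)]$ bound, where the factor $\numlayers$ comes from the masked-to-unmasked attention simulation internal to that theorem. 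Applying the reverse direction of \cref{prop:cot-as-pcot}---a \pCoTAcr transformer with total output $P$ is simulated by a plain \CoTAcr transformer run for $P$ single-symbol steps---collapses this into $\CoTClass[\numlayers\timesteps(\strlen + \timesteps + (\strlen + \timesteps)^2)]$.

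The final inclusion is the only genuinely new (if trivial) step: writing $x = \strlen + \timesteps$, I use $x + x^2 \le (x+1)^2$ (equivalently $0 \le x+1$) to bound $\strlen + \timesteps + (\strlen + \timesteps)^2 \le (\strlen + \timesteps + 1)^2$, and then invoke monotonicity of $\CoTClass[\cdot]$ in its step count---a transformer with a larger step budget recognizes at least every language recognizable with fewer steps, by idling on the surplus steps---to conclude the fifth line. For the ``in particular'' clause, the regime $\timesteps \ge \strlen$ gives $\strlen + \timesteps = \bigThetaFun{\timesteps}$, so $(\strlen + \timesteps)^2 = \bigThetaFun{\timesteps^2}$ dominates and $\timesteps + (\strlen + \timesteps)^2 = \bigOFun{\timesteps^2}$; absorbing the constant $\numlayers$ then turns $\numlayers\timesteps\,\bigOFun{\timesteps^2}$ into $\bigOFun{\timesteps^3}$, yielding the three asymptotic inclusions as stated. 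The main obstacle here is bookkeeping rather than mathematics: the one place demanding care is ensuring that the padding/output budget emitted by each cited theorem is the exact value substituted for $\padlen$ into the next, since an off-by-a-polynomial-factor slip would silently weaken the whole chain. Everything substantive is delegated to the cited results, so no deeper obstacle arises.
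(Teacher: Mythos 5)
Your proposal is correct and follows exactly the route the paper takes: the corollary is proved by chaining \cref{prop:cot-as-pcot}, \cref{thm:mdms-can-simulate-cot}, and \cref{thm:cot-can-simulate-mdms} with the same parameter substitutions, and your closing steps (the bound $x + x^2 \le (x+1)^2$ with monotonicity of $\CoTClass[\cdot]$ in the step budget, and the $\timesteps \ge \strlen$ asymptotic simplification) match the paper's inline justifications. The bookkeeping is accurate throughout.
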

\cref{thm:mdms-cot-sandwich} lower- and upper-bounds \eMDMAcr expressivity based on the expressivity of \CoTAcr transformers.
For example, \eMDMAcr with polynomially many denoising steps remain within the class $\polyCls$, the problems solvable in polynomial time by a non-random-access multitape Turing machine.
This follows from the equivalence of \CoTAcr transformers with polynomially many steps to $\polyCls$ \citep{li2024chain}.
\begin{corollary}[\MDMsAcr with polynomially many denoising steps] 
   For any $K \geq 1$, we have that
   \begin{equation}
      \MDMClassEdit[\timesteps, \strlen^K] \subseteq \CoTClass[\bigOFun{\timesteps \strlen^K}],
   \end{equation}
   meaning that \eMDMsAcr  with polynomially many denoising steps remain in $\polyCls$:
   \begin{equation}
      \MDMClassEdit[\polyFun{\strlen}, \polyFun{\strlen}] \subseteq \CoTClass[\polyFun{\strlen}] \subseteq \polyCls.
   \end{equation}
\end{corollary}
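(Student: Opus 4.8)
The plan is to obtain both inclusions as a straightforward composition of results already established in this section: the simulation \cref{thm:cot-can-simulate-mdms}, the reduction of parallel to ordinary \CoTAcr recorded in \cref{prop:cot-as-pcot}, and finally the polynomial-step \CoTAcr characterization of \citet{li2024chain}. The whole argument is bookkeeping over the various multiplicative blow-ups, so I would structure it as a chain of containments and then verify that every factor stays polynomial.

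First I would instantiate \cref{thm:cot-can-simulate-mdms} with $\padlen = \strlen^K$, which gives
\[
   \MDMClassEdit[\timesteps, \strlen^K] \subseteq \pCoTClass[\timesteps, \numlayers \timesteps (\strlen^K + \strlen)].
\]
The transformer implementing the \eMDMAcr has a \emph{fixed} number of layers, so $\numlayers = \bigOFun{1}$, and since $K \geq 1$ forces $\strlen^K + \strlen = \bigOFun{\strlen^K}$, the padding collapses to $\numlayers \timesteps(\strlen^K + \strlen) = \bigOFun{\timesteps\strlen^K}$, yielding $\MDMClassEdit[\timesteps, \strlen^K] \subseteq \pCoTClass[\timesteps, \bigOFun{\timesteps\strlen^K}]$. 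Next I would convert the parallel \CoTAcr into an ordinary one using \cref{prop:cot-as-pcot} in the form $\pCoTClass[\timesteps, \padlen] \subseteq \CoTClass[\padlen]$ (the same direction used in the closing steps of \cref{thm:mdms-cot-sandwich}): a standard \CoTAcr emits the $\padlen$ output symbols one at a time, raising its step count to the total output length. Applying this with $\padlen = \bigOFun{\timesteps\strlen^K}$ gives the first claim, $\MDMClassEdit[\timesteps, \strlen^K] \subseteq \CoTClass[\bigOFun{\timesteps\strlen^K}]$.

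For the second claim I would substitute $\timesteps = \polyFun{\strlen}$ and recall $\strlen^K = \polyFun{\strlen}$; since the product of two polynomials is a polynomial, $\bigOFun{\timesteps\strlen^K} = \polyFun{\strlen}$, whence $\MDMClassEdit[\polyFun{\strlen}, \polyFun{\strlen}] \subseteq \CoTClass[\polyFun{\strlen}]$, and the final containment $\CoTClass[\polyFun{\strlen}] \subseteq \polyCls$ is exactly the characterization of polynomial-step \CoTAcr by \citet{li2024chain}. There is no deep obstacle here; the one point requiring care is confirming that none of the blow-ups escape polynomial growth. Concretely, I would make explicit that $\numlayers$ is independent of $\strlen$ (only the width grows), so it is absorbed into the hidden constant, and that the $\timesteps \cdot \strlen^K$ factor—being a product of polynomials—remains polynomial, which is precisely what places the result inside $\polyCls$ rather than a strictly larger time class.
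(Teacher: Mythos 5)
Your proposal is correct and follows essentially the same route the paper intends: instantiate \cref{thm:cot-can-simulate-mdms}, collapse the padding via $\pCoTClass[\timesteps, \padlen] \subseteq \CoTClass[\padlen]$ from \cref{prop:cot-as-pcot}, absorb the constant $\numlayers$, and invoke the polynomial-step \CoTAcr characterization of \citet{li2024chain}. Your explicit remark that $\numlayers$ is independent of $\strlen$ is a worthwhile detail the paper leaves implicit, but it does not change the argument.
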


\subsubsection{A separation between \eMDMsAcr  and \CoTAcr transformers}
\citet{merrill2024the,li2024chain} show that \CoTAcr transformers with logarithmically many decoding steps remain in $\TCZero$.
Combining this with \cref{thm:regular-languages-in-mdm-efficient}, the widely accepted assumption that $\TCZero \neq \NCOne$, and known $\NCOne$-completeness of specific regular languages, we obtain the following separation in expressivity under a small (logarithmic) number of decoding steps.
We term the inability of \CoTAcr to leverage parallelism the \defn{sequentiality bottleneck} of \CoTAcr.
\begin{restatable}[A strict separation in efficient reasoning abilities of \eMDMsAcr and \CoTAcr transformers]{corollary}{CoTMDMSeparationCor} \label{thm:cot-mdm-separation}
   \begin{equation}
      \CoTClass[\log\strlen] \subsetneq \MDMClass[\log\strlen, \strlen].
   \end{equation}
\end{restatable}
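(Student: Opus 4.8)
The plan is to establish the two halves of the strict inclusion separately: the containment $\CoTClass[\log\strlen] \subseteq \MDMClass[\log\strlen, \strlen]$, and a single witness language that separates the two classes. For the containment, I would start from $\CoTClass[\log\strlen] = \pCoTClass[\log\strlen, \log\strlen]$ (\cref{prop:cot-as-pcot}) and feed this into the \MDMAcr simulation of \pCoTAcr from \cref{thm:mdms-can-simulate-cot}. The quadratic overhead there comes entirely from emulating causal masking with unmasked attention (\cref{lem:unmask-conversion-fp}); instantiating the simulating \MDMAcr with a \emph{causally masked} predictor removes it, so that $\pCoTClass[\log\strlen, \log\strlen] \subseteq \MDMClass[\log\strlen, \log\strlen]$. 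Since a padding budget can always be relaxed upward and $\log\strlen \leq \strlen$, this yields $\CoTClass[\log\strlen] \subseteq \MDMClass[\log\strlen, \strlen]$.

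For strictness I would exhibit one language lying in $\MDMClass[\log\strlen, \strlen]$ but not in $\CoTClass[\log\strlen]$. The crux is to pick a language that satisfies both constraints at once, and Barrington's theorem supplies exactly such an object: the word problem of a fixed non-solvable group such as $S_5$ (or $A_5$)---the set of strings of generators evaluating to the identity---is regular and $\NCOne$-complete under $\ACZero$ (indeed $\DLOGTIME$-uniform projection) reductions. Because it is regular, \cref{thm:regular-languages-in-mdm-efficient} places it in $\MDMClass[\log\strlen, \strlen]$ immediately.

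It remains to show this witness cannot be recognized in $\CoTClass[\log\strlen]$. I would invoke the upper bound of \citet{merrill2024the,li2024chain}, that $\CoTClass[\log\strlen]$ lies in $\TCZero$. If the witness were in $\CoTClass[\log\strlen]$ it would lie in $\TCZero$; since $\ACZero \subseteq \TCZero$ and $\TCZero$ is closed under $\ACZero$ reductions, its $\NCOne$-completeness would force $\NCOne \subseteq \TCZero$, and together with the standard $\TCZero \subseteq \NCOne$ this collapses $\TCZero = \NCOne$, contradicting the assumed separation $\TCZero \neq \NCOne$. Hence the witness is outside $\CoTClass[\log\strlen]$, proving the inclusion strict. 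The main obstacle is the complexity-theoretic bookkeeping in this last step: one must choose a witness that is simultaneously regular (so that \cref{thm:regular-languages-in-mdm-efficient} applies) and $\NCOne$-hard under reductions weak enough ($\ACZero$/$\DLOGTIME$-uniform) that membership of a single hard language in $\TCZero$ collapses all of $\NCOne$; ensuring the uniformity conditions of the $\TCZero$ upper bound and of Barrington's reductions line up is the delicate part, whereas every other step is a direct application of a result already in hand.
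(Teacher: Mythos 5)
Your proposal is correct and follows essentially the same route as the paper: the separation witness is an $\NCOne$-complete regular language (Barrington's word problem for $S_5$), placed in $\MDMClass[\log\strlen, \strlen]$ via \cref{thm:regular-languages-in-mdm-efficient} and excluded from $\CoTClass[\log\strlen]$ via the $\TCZero$ upper bound of \citet{merrill2024the,li2024chain} together with the assumption $\TCZero \neq \NCOne$. The only difference is that you spell out the containment $\CoTClass[\log\strlen] \subseteq \MDMClass[\log\strlen, \strlen]$ explicitly (using a causally masked predictor to avoid the quadratic padding of \cref{thm:mdms-can-simulate-cot}, a move the paper itself endorses in the remark following that theorem), whereas the paper leaves this direction implicit.
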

Concretely, $\MDMClass[\log\strlen, \strlen] \, \setminus \, \CoTClass[\log\strlen]$, for example, contains all $\NCOne$-complete regular languages.

\section{Discussion} \label{sec:discussion}

\paragraph{Strengths and weaknesses of \MDMsAcr.}
\cref{sec:theoretical-results} provides insights into the suitability of using \MDMsAcr for different classes of problems.
On the one hand, \cref{thm:cot-mdm-separation} reveals the sequentiality bottleneck of \CoTAcr and an expressivity gap between \CoTAcr transformers and \MDMsAcr with logarithmically many model evaluations: While \CoTAcr transformers remain in $\TCZero$, \MDMsAcr can solve $\NCOne$-complete problems.
This formalizes the intuition that \MDMsAcr are more suitable for highly-parallelizable problems and has implications for the practical applications of these two paradigms with a limited number of model evaluations.
For example, the common \emph{state-tracking} benchmark used to evaluate the reasoning abilities \citep{liu2023transformers,10.5555/3692070.3693514} can be solved with \MDMsAcr with logarithmically many steps, while \CoTAcr transformers require linearly many steps.
On the other hand, the equivalence of \MDMsAcr with polylogarithmically many denoising steps to the class $\NC$ (cf.~\cref{cor:mdms-with-polylog-steps}) reveals problems where efficiency gains from parallelism are limited.
For example, assuming the widely-believed hypothesis that $\NC \neq \polyCls$, none of the following ($\polyCls$-complete) problems benefit from \MDMAcr parallelism:
\begin{itemize}[noitemsep,topsep=0pt,leftmargin=10pt]
   \item \textbf{Circuit value problem}: Given a circuit, its inputs, and a gate, calculate the gate's value.
   \item \textbf{Linear programming}: Maximize a linear function subject to linear inequality constraints.
   \item \textbf{Context free grammar (CFG) membership}: Given a CFG $\grammar$ and a string $\str$, is $\str \in \lang(\grammar)$?
   \item \textbf{Horn-satisfiability} ($\polyCls$ version of \texttt{SAT}): Is there a satisfying assignment to a set of Horn clauses?
\end{itemize}
In other words, these problems, in general, require a ``\CoTAcr-style'' step-by-step sequential solution. 
Due to the overhead introduced by unmasked attention of \MDMsAcr (such as the inability to maintain KV-cache), such problems are more efficiently solved by standard autoregressive \CoTAcr transformers.

\paragraph{Equivalence to padded looped transformers.}
\cref{cor:mdm-lt-equivalence} reveals a tight connection between \MDMsAcr and \LTsAcr.
While this suggests these two frameworks are largely interchangeable, important distinctions exist.
On the one hand, unlike \MDMsAcr, standard \LTsAcr perform sequential computations \emph{deterministically}.
This makes \MDMsAcr more suitable for ambiguous generation tasks, where decisions early in the generation make subsequent decisions easier.
\LTsAcr would, in that case, have to keep track of all possible generations in the residual stream until the final---decoding---step.
Moreover, \MDMsAcr are easier to train and steer---since their intermediate computation steps are based on partially masked inputs, the model explicitly learns to solve complex tasks from random sub-tasks, which benefits their reasoning abilities \citep{kim2025trainworstplanbest}.
\LTsAcr, in contrast, only receive training signal from the final decision and have to construct the sub-steps of the computation on their own.
This could lead to suboptimal utilization of the sequential computation or even to failure to use it at all.
Similarly, the human-understandable sub-tasks that \MDMsAcr have to solve make their training more interpretable and easier to control. 
On the other hand, the more information-rich intermediate states of \LTsAcr make them more efficient at storing and processing information, and the lack of sampling steps makes them more efficient at inference time.

\paragraph{Generalizations.}
By focusing on transformer-based \MDMsAcr, we can draw from the rich theory developed on the expressivity of transformers. 
However, \MDMsAcr do not have to be implemented by a transformer---they could, for example, be implemented by a state-space model.
Nevertheless, the parallelizable nature of \MDMsAcr suggests that any reasonable real-world implementation will include parallelizable components---for example, a model implementable by a constant-depth circuit, such as a $\TCZero$ circuit.\footnote{A similar modeling assumption is made by \citet{liu2025perfectdiffusionmathsftc0,liu2025serialscalinghypothesis} for latent diffusion LMs.}
The close connection between transformers (with logarithmically-growing precision and padding) and the class $\TCZero$ \citep{merrill-sabharwal-2023-parallelism,li2024chain} suggests that the results of \cref{sec:theoretical-results} will largely carry over to such implementations.
We conjecture that, regardless of whether \MDMsAcr are implemented by finite-precision transformers ($\ACZero$ circuits) or a more expressive $\TCZero$ circuit, $\MDMClass[\log^d\strlen, \polyFun{\strlen}]$ would remain in $\ACd$ (cf.~\cref{cor:mdms-with-constant-steps}) or a similar class like $\TCd$.
Moreover, we state the sequentiality bottleneck only for $\log \strlen$ decoding steps, since the expressivity of $\CoTClass[\log\strlen]$ is known to be limited.
However, we believe that a similar separation exists for polylogarithmically many decoding steps: While the expressivity of $\CoTClass[\log^d\strlen]$ has not been formalized yet, it likely does not capture all of $\ACd$, unlike $\MDMClass[\log^d\strlen, \polyFun{\strlen}]$ (cf.~\cref{cor:mdms-with-constant-steps}).

\paragraph{Discrepancies.}
We strive to compare different paradigms fairly by analyzing a specific implementation of \MDMsAcr---one based on a specific idealization of transformers.
Some inherent differences between the frameworks, however, remain.
One is the dichotomy between using causal masking for \CoTAcr and unmasked transformers for \MDMsAcr and \LTsAcr.
We focus on unmasked models to analyze the more natural and popular implementations rather than artificially constraining \MDMsAcr and \LTsAcr to causal masking.

\paragraph{Positional encodings.}
Another impactful decision is the nature of positional encodings (PEs).
While we assume relatively simple PEs standard in theoretical literature (in particular, logspace-computable, cf. \cref{app:transformers}), we allow them to come from an outside source not part of the transformer---they may thus carry information not computable by the model.
This highlights a crucial feature: The computational capability in the $\LUniform$ transformer family is deliberately split between the PEs and the transformer's computation mechanism.
Fixed-precision transformers are limited to a subset of $\ACZero$ and cannot compute division, modulo, or binary representations \citep{li2024chain}.
The PEs provide this missing information: Binary encodings of position/length and results of division/modulo operations (see \cref{app:positional-encodings}).
The transformer mechanism can then perform $\ACZero$ operations (addition, thresholding, multiplication by powers of two) on this pre-computed data.
Thus, in \cref{thm:regular-languages-in-mdm-efficient}, the transformer is looking up pre-computed (but still simple, log-space) information from the PEs rather than computing it.
When chaining decoding steps, modulo and division information is needed for correct indexing---making $\LUniform$ PEs a prerequisite for ``looping'' to increase computational power beyond fixed-depth models.
We emphasize that offloading computation to PEs is standard in the expressivity literature \citep{li2024chain,saunshi2025reasoninglatentthoughtspower,london2025pausetokensstrictlyincrease}, often encoding entire circuits to be simulated.
In contrast, our PEs are simpler: Functions of input position and length, not circuit-specific.

\paragraph{Relaxing assumptions.}
Our constructions rely on a planner that perfectly unmasks the input string and an unmasker that models the conditional distributions perfectly. 
These are idealizations of confidence-based planners and learned unmaskers. 
How a trained planner can learn to approximate this ideal behavior is an important question---one of learnability rather than pure expressivity. 
As is standard in expressivity analysis of language models, expressivity is a necessary first step before one can analyze how a model can \emph{learn} to represent such functions. 
While learnability of formal languages is an active area of research, we are not aware of any existing results that would characterize the learnability of functions required for planning (e.g., modulo and division) apart from the issue of the sensitivity of modular counting \citep{hahn-rofin-2024-sensitive}, which suggests that learning planning behaviour used in our constructions is difficult for transformers. 

\paragraph{Empirical validation.}
The main purpose of this work is largely theoretical. 
It provides a framework for understanding the expressivity of \MDMsAcr in relation to other popular paradigms and provides new insights into the tasks that \MDMsAcr are well-suited for.
In fact, this analysis is motivated by existing empirical results that show \MDMsAcr outperforming autoregressive models on tasks that allow for parallel reasoning \citep{kim2025trainworstplanbest,ye2025autoregressiondiscretediffusioncomplex}.
The lack of literature on training \MDMsAcr on formal languages, however, makes it difficult to draw concrete empirical implications of the presented results.
Thus, we see designing targeted experiments and benchmarks that empirically explore the theoretical benefits of \MDMsAcr as an important direction for future work.

\section{Conclusion} \label{sec:conclusion}
We describe the expressivity of masked diffusion LMs (\MDMsAcr) by connecting them to padded looped transformers (\LTsAcr) and chain-of-thought-augmented (\CoTAcr) transformers.
This reveals a close connection between \LTsAcr and \MDMsAcr, which leads to the equivalence of \MDMAcr with polylogarithmically many denoising steps to the class $\NC$ of parallelizable problems, with concrete implications around what problems can benefit from the parallelism afforded by \MDMsAcr.
We also show that \MDMsAcr can (somewhat inefficiently) simulate \CoTAcr transformers. 
We describe the sequentiality bottleneck and the strict expressivity gap between \MDMsAcr and \CoTAcr transformers with logarithmically many model evaluations.
This shows \MDMsAcr to be more suitable for highly-parallelizable problems, while \CoTAcr transformers are more suitable for inherently sequential ones.
Overall, our results provide insights into the strengths and weaknesses of \MDMsAcr and their suitability for different classes of problems.

\section*{Ethics Statement}
This work is theoretical and aims to describe the capabilities of masked diffusion models to better understand their strengths and limitations. 
We do not foresee any direct negative societal impacts.

\section*{Reproducibility Statement}
All our results are theoretical and thus reproducible from the provided proofs in \cref{app:theoretical-gadgets,app:proofs}.

\section*{The use of large language models} \label{app:llm-use}
We used AI-based tools (Gemini and GitHub Copilot) for brainstorming and writing assistance.
We used the tools in compliance with the ICLR 2026 policies.

\subsubsection*{Acknowledgments}
Anej Svete is supported by the ETH AI Center doctoral fellowship.

\bibliography{bibliography}

@article{saunshi2025reasoninglatentthoughtspower,
      title={Reasoning with Latent Thoughts: On the Power of Looped Transformers}, 
      author={Nikunj Saunshi and Nishanth Dikkala and Zhiyuan Li and Sanjiv Kumar and Sashank J. Reddi},
      year={2025},
      eprint={2502.17416},
      archivePrefix={arXiv},
      primaryClass={cs.CL},
      url={https://arxiv.org/abs/2502.17416}, 
      journal={arXiv preprint arXiv:2502.17416}
}

@inproceedings{
li2024chain,
title={Chain of Thought Empowers Transformers to Solve Inherently Serial Problems},
author={Zhiyuan Li and Hong Liu and Denny Zhou and Tengyu Ma},
booktitle={The Twelfth International Conference on Learning Representations},
year={2024},
url={https://openreview.net/forum?id=3EWTEy9MTM}
}

@inproceedings{
merrill2024the,
title={The Expressive Power of Transformers with Chain of Thought},
author={William Merrill and Ashish Sabharwal},
booktitle={The Twelfth International Conference on Learning Representations},
year={2024},
url={https://openreview.net/forum?id=NjNGlPh8Wh}
}

@article{merrill-sabharwal-2023-parallelism,
    title = "The Parallelism Tradeoff: Limitations of Log-Precision Transformers",
    author = "Merrill, William  and
      Sabharwal, Ashish",
    journal = "Transactions of the Association for Computational Linguistics",
    volume = "11",
    year = "2023",
    address = "Cambridge, MA",
    publisher = "MIT Press",
    url = "https://aclanthology.org/2023.tacl-1.31/",
    doi = "10.1162/tacl_a_00562",
    pages = "531--545",
    abstract = "Despite their omnipresence in modern NLP, characterizing the computational power of transformer neural nets remains an interesting open question. We prove that transformers whose arithmetic precision is logarithmic in the number of input tokens (and whose feedforward nets are computable using space linear in their input) can be simulated by constant-depth logspace-uniform threshold circuits. This provides insight on the power of transformers using known results in complexity theory. For example, if L{\ensuremath{\neq}}P (i.e., not all poly-time problems can be solved using logarithmic space), then transformers cannot even accurately solve linear equalities or check membership in an arbitrary context-free grammar with empty productions. Our result intuitively emerges from the transformer architecture{'}s high parallelizability. We thus speculatively introduce the idea of a fundamental parallelism tradeoff: any model architecture as parallelizable as the transformer will obey limitations similar to it. Since parallelism is key to training models at massive scale, this suggests a potential inherent weakness of the scaling paradigm."
}

@article{merrill2025exactexpressivepowertransformers,
      title={Exact Expressive Power of Transformers with Padding}, 
      author={William Merrill and Ashish Sabharwal},
      year={2025},
      eprint={2505.18948},
      archivePrefix={arXiv},
      primaryClass={cs.LG},
      url={https://arxiv.org/abs/2505.18948}, 
        journal={arXiv preprint arXiv:2505.18948}
}

@article{merrill2025littledepthgoeslong,
      title={A Little Depth Goes a Long Way: The Expressive Power of Log-Depth Transformers}, 
      author={William Merrill and Ashish Sabharwal},
      year={2025},
      eprint={2503.03961},
      archivePrefix={arXiv},
      primaryClass={cs.LG},
      url={https://arxiv.org/abs/2503.03961}, 
        journal={arXiv preprint arXiv:2503.03961}
}

@article{xu2025cotloopformalcomparison,
      title={To CoT or To Loop? {A} Formal Comparison Between Chain-of-Thought and Looped Transformers}, 
      author={Kevin Xu and Issei Sato},
      year={2025},
      eprint={2505.19245},
      archivePrefix={arXiv},
      primaryClass={cs.LG},
      url={https://arxiv.org/abs/2505.19245}, 
        journal={arXiv preprint arXiv:2505.19245}
}

@article{london2025pausetokensstrictlyincrease,
      title={Pause Tokens Strictly Increase the Expressivity of Constant-Depth Transformers}, 
      author={Charles London and Varun Kanade},
      year={2025},
      eprint={2505.21024},
      archivePrefix={arXiv},
      primaryClass={cs.LG},
      url={https://arxiv.org/abs/2505.21024},
        journal={arXiv preprint arXiv:2505.21024} 
}

@article{JMLR:v22:20-302,
  author  = {Jorge P{\'e}rez and Pablo Barcel{\'o} and Javier Marinkovic},
  title   = {Attention is Turing-Complete},
  journal = {Journal of Machine Learning Research},
  year    = {2021},
  volume  = {22},
  number  = {75},
  pages   = {1--35},
  url     = {http://jmlr.org/papers/v22/20-302.html}
}

@article{feng2025theoreticalbenefitlimitationdiffusion,
      title={Theoretical Benefit and Limitation of Diffusion Language Model}, 
      author={Guhao Feng and Yihan Geng and Jian Guan and Wei Wu and Liwei Wang and Di He},
      year={2025},
      eprint={2502.09622},
      archivePrefix={arXiv},
      primaryClass={cs.LG},
      url={https://arxiv.org/abs/2502.09622}, 
        journal={arXiv preprint arXiv:2502.09622}
}

@article{radford2019language,
  title={Language Models are Unsupervised Multitask Learners},
  author={Radford, Alec and Wu, Jeff and Child, Rewon and Luan, David and Amodei, Dario and Sutskever, Ilya},
  year={2019},
  journal={OpenAI},
  url={https://cdn.openai.com/better-language-models/language_models_are_unsupervised_multitask_learners.pdf}
}

@inproceedings{ghazvininejad-etal-2019-mask,
    title = "Mask-Predict: {P}arallel Decoding of Conditional Masked Language Models",
    author = "Ghazvininejad, Marjan  and
      Levy, Omer  and
      Liu, Yinhan  and
      Zettlemoyer, Luke",
    editor = "Inui, Kentaro  and
      Jiang, Jing  and
      Ng, Vincent  and
      Wan, Xiaojun",
    booktitle = "Proceedings of the 2019 Conference on Empirical Methods in Natural Language Processing and the 9th International Joint Conference on Natural Language Processing (EMNLP-IJCNLP)",
    month = nov,
    year = "2019",
    address = "Hong Kong, China",
    publisher = "Association for Computational Linguistics",
    url = "https://aclanthology.org/D19-1633/",
    doi = "10.18653/v1/D19-1633",
    pages = "6112--6121",
    abstract = "Most machine translation systems generate text autoregressively from left to right. We, instead, use a masked language modeling objective to train a model to predict any subset of the target words, conditioned on both the input text and a partially masked target translation. This approach allows for efficient iterative decoding, where we first predict all of the target words non-autoregressively, and then repeatedly mask out and regenerate the subset of words that the model is least confident about. By applying this strategy for a constant number of iterations, our model improves state-of-the-art performance levels for non-autoregressive and parallel decoding translation models by over 4 BLEU on average. It is also able to reach within about 1 BLEU point of a typical left-to-right transformer model, while decoding significantly faster."
}

@article{zheng2024reparameterizeddiscretediffusionmodel,
      title={A Reparameterized Discrete Diffusion Model for Text Generation}, 
      author={Lin Zheng and Jianbo Yuan and Lei Yu and Lingpeng Kong},
      year={2024},
      eprint={2302.05737},
      archivePrefix={arXiv},
      primaryClass={cs.CL},
      url={https://arxiv.org/abs/2302.05737}, 
      journal={arXiv preprint arXiv:2302.05737}
}

@article{li2025convergencetheorydiffusionlanguage,
      title={A Convergence Theory for Diffusion Language Models: {A}n Information-Theoretic Perspective}, 
      author={Gen Li and Changxiao Cai},
      year={2025},
      eprint={2505.21400},
      archivePrefix={arXiv},
      primaryClass={cs.LG},
      url={https://arxiv.org/abs/2505.21400}, 
      journal={arXiv preprint arXiv:2505.21400}
}

@article{peng2025pathplanningmaskeddiffusion,
      title={Path Planning for Masked Diffusion Model Sampling}, 
      author={Fred Zhangzhi Peng and Zachary Bezemek and Sawan Patel and Jarrid Rector-Brooks and Sherwood Yao and Avishek Joey Bose and Alexander Tong and Pranam Chatterjee},
      year={2025},
      eprint={2502.03540},
      archivePrefix={arXiv},
      primaryClass={cs.LG},
      url={https://arxiv.org/abs/2502.03540}, 
      journal={arXiv preprint arXiv:2502.03540}
}

@article{liu2025thinkgeneratediscretediffusion,
      title={Think While You Generate: {D}iscrete Diffusion with Planned Denoising}, 
      author={Sulin Liu and Juno Nam and Andrew Campbell and Hannes Stärk and Yilun Xu and Tommi Jaakkola and Rafael Gómez-Bombarelli},
      year={2025},
      eprint={2410.06264},
      archivePrefix={arXiv},
      primaryClass={cs.LG},
      url={https://arxiv.org/abs/2410.06264}, 
      journal={arXiv preprint arXiv:2410.06264}
}

@article{chen2024convergenceanalysisdiscretediffusion,
      title={Convergence Analysis of Discrete Diffusion Model: {E}xact Implementation through Uniformization}, 
      author={Hongrui Chen and Lexing Ying},
      year={2024},
      eprint={2402.08095},
      archivePrefix={arXiv},
      primaryClass={stat.ML},
      url={https://arxiv.org/abs/2402.08095}, 
      journal={arXiv preprint arXiv:2402.08095}
}

@article{nie2025largelanguagediffusionmodels,
      title={Large Language Diffusion Models}, 
      author={Shen Nie and Fengqi Zhu and Zebin You and Xiaolu Zhang and Jingyang Ou and Jun Hu and Jun Zhou and Yankai Lin and Ji-Rong Wen and Chongxuan Li},
      year={2025},
      eprint={2502.09992},
      archivePrefix={arXiv},
      primaryClass={cs.CL},
      url={https://arxiv.org/abs/2502.09992}, 
      journal={arXiv preprint arXiv:2502.09992}
}

@inproceedings{he-etal-2023-diffusionbert,
    title = "{D}iffusion{BERT}: Improving Generative Masked Language Models with Diffusion Models",
    author = "He, Zhengfu  and
      Sun, Tianxiang  and
      Tang, Qiong  and
      Wang, Kuanning  and
      Huang, Xuanjing  and
      Qiu, Xipeng",
    editor = "Rogers, Anna  and
      Boyd-Graber, Jordan  and
      Okazaki, Naoaki",
    booktitle = "Proceedings of the 61st Annual Meeting of the Association for Computational Linguistics (Volume 1: Long Papers)",
    month = jul,
    year = "2023",
    address = "Toronto, Canada",
    publisher = "Association for Computational Linguistics",
    url = "https://aclanthology.org/2023.acl-long.248/",
    doi = "10.18653/v1/2023.acl-long.248",
    pages = "4521--4534",
    abstract = "We present DiffusionBERT, a new generative masked language model based on discrete dif- fusion models. Diffusion models and many pre- trained language models have a shared training objective, i.e., denoising, making it possible to combine the two powerful models and enjoy the best of both worlds. On the one hand, dif- fusion models offer a promising training strat- egy that helps improve the generation quality. On the other hand, pre-trained denoising lan- guage models (e.g., BERT) can be used as a good initialization that accelerates convergence. We explore training BERT to learn the reverse process of a discrete diffusion process with an absorbing state and elucidate several designs to improve it. First, we propose a new noise schedule for the forward diffusion process that controls the degree of noise added at each step based on the information of each token. Sec- ond, we investigate several designs of incorpo- rating the time step into BERT. Experiments on unconditional text generation demonstrate that DiffusionBERT achieves significant improve- ment over existing diffusion models for text (e.g., D3PM and Diffusion-LM) and previous generative masked language models in terms of perplexity and BLEU score. Promising re- sults in conditional generation tasks show that DiffusionBERT can generate texts of compa- rable quality and more diverse than a series of established baselines."
}

@article{jerad2025uniquehardattentiontale,
      title={Unique Hard Attention: A Tale of Two Sides}, 
      author={Selim Jerad and Anej Svete and Jiaoda Li and Ryan Cotterell},
      year={2025},
      eprint={2503.14615},
      archivePrefix={arXiv},
      primaryClass={cs.LG},
      url={https://arxiv.org/abs/2503.14615}, 
      journal={arXiv preprint arXiv:2503.14615}
}

@inproceedings{devlin-etal-2019-bert,
    title = "{BERT}: Pre-training of Deep Bidirectional Transformers for Language Understanding",
    author = "Devlin, Jacob  and
      Chang, Ming-Wei  and
      Lee, Kenton  and
      Toutanova, Kristina",
    editor = "Burstein, Jill  and
      Doran, Christy  and
      Solorio, Thamar",
    booktitle = "Proceedings of the 2019 Conference of the North {A}merican Chapter of the Association for Computational Linguistics: Human Language Technologies, Volume 1 (Long and Short Papers)",
    month = jun,
    year = "2019",
    address = "Minneapolis, Minnesota",
    publisher = "Association for Computational Linguistics",
    url = "https://aclanthology.org/N19-1423/",
    doi = "10.18653/v1/N19-1423",
    pages = "4171--4186",
    abstract = "We introduce a new language representation model called BERT, which stands for Bidirectional Encoder Representations from Transformers. Unlike recent language representation models (Peters et al., 2018a; Radford et al., 2018), BERT is designed to pre-train deep bidirectional representations from unlabeled text by jointly conditioning on both left and right context in all layers. As a result, the pre-trained BERT model can be fine-tuned with just one additional output layer to create state-of-the-art models for a wide range of tasks, such as question answering and language inference, without substantial task-specific architecture modifications. BERT is conceptually simple and empirically powerful. It obtains new state-of-the-art results on eleven natural language processing tasks, including pushing the GLUE score to 80.5 (7.7 point absolute improvement), MultiNLI accuracy to 86.7{\%} (4.6{\%} absolute improvement), SQuAD v1.1 question answering Test F1 to 93.2 (1.5 point absolute improvement) and SQuAD v2.0 Test F1 to 83.1 (5.1 point absolute improvement)."
}

@article{bae2025mixtureofrecursionslearningdynamicrecursive,
      title={Mixture-of-Recursions: Learning Dynamic Recursive Depths for Adaptive Token-Level Computation}, 
      author={Sangmin Bae and Yujin Kim and Reza Bayat and Sungnyun Kim and Jiyoun Ha and Tal Schuster and Adam Fisch and Hrayr Harutyunyan and Ziwei Ji and Aaron Courville and Se-Young Yun},
      year={2025},
      eprint={2507.10524},
      archivePrefix={arXiv},
      primaryClass={cs.CL},
      url={https://arxiv.org/abs/2507.10524}, 
      journal={arXiv preprint arXiv:2507.10524}
}

@article{kim2025trainworstplanbest,
      title={Train for the Worst, Plan for the Best: Understanding Token Ordering in Masked Diffusions}, 
      author={Jaeyeon Kim and Kulin Shah and Vasilis Kontonis and Sham Kakade and Sitan Chen},
      year={2025},
      eprint={2502.06768},
      archivePrefix={arXiv},
      primaryClass={cs.LG},
      url={https://arxiv.org/abs/2502.06768},
      journal={arXiv preprint arXiv:2502.06768}
}

@article{cotterell2024formalaspectslanguagemodeling,
      title={Formal Aspects of Language Modeling}, 
      author={Ryan Cotterell and Anej Svete and Clara Meister and Tianyu Liu and Li Du},
      year={2024},
      eprint={2311.04329},
      archivePrefix={arXiv},
      primaryClass={cs.CL},
      url={https://arxiv.org/abs/2311.04329}, 
      journal={arXiv preprint arXiv:2311.04329}
}

@inproceedings{10.5555/3737916.3740249,
author = {Chan, Robin S. M. and Boumasmoud, Reda and Svete, Anej and Ren, Yuxin and Guo, Qipeng and Jin, Zhijing and Ravfogel, Shauli and Sachan, Mrinmaya and Sch\"{o}lkopf, Bernhard and El-Assady, Mennatallah and Cotterell, Ryan},
title = {On affine homotopy between language encoders},
year = {2025},
isbn = {9798331314385},
publisher = {Curran Associates Inc.},
address = {Red Hook, NY, USA},
abstract = {Pre-trained language encoders—functions that represent text as vectors—are an integral component of many NLP tasks. We tackle a natural question in language encoder analysis: What does it mean for two encoders to be similar? We contend that a faithful measure of similarity needs to be intrinsic, that is, task-independent, yet still be informative of extrinsic similarity—the performance on downstream tasks. It is common to consider two encoders similar if they are homotopic, i.e., if they can be aligned through some transformation. In this spirit, we study the properties of affine alignment of language encoders and its implications on extrinsic similarity. We find that while affine alignment is fundamentally an asymmetric notion of similarity, it is still informative of extrinsic similarity. We confirm this on datasets of natural language representations. Beyond providing useful bounds on extrinsic similarity, affine intrinsic similarity also allows us to begin uncovering the structure of the space of pre-trained encoders by defining an order over them.},
booktitle = {Proceedings of the 38th International Conference on Neural Information Processing Systems},
articleno = {2333},
numpages = {29},
location = {Vancouver, BC, Canada},
series = {NIPS '24}
}

@inproceedings{
ravfogel2025gumbel,
title={Gumbel Counterfactual Generation From Language Models},
author={Shauli Ravfogel and Anej Svete and V{\'e}steinn Sn{\ae}bjarnarson and Ryan Cotterell},
booktitle={The Thirteenth International Conference on Learning Representations},
year={2025},
url={https://openreview.net/forum?id=TUC0ZT2zIQ}
}

@inproceedings{
chatzi2024counterfactual,
title={Counterfactual Token Generation in Large Language Models},
author={Ivi Chatzi and Nina L. Corvelo Benz and Eleni Straitouri and Stratis Tsirtsis and Manuel Gomez Rodriguez},
booktitle={Causality and Large Models @NeurIPS 2024},
year={2024},
url={https://openreview.net/forum?id=S4pwjKmLGR}
}

@inproceedings{
benz2025evaluation,
title={Evaluation of Large Language Models via Coupled Token Generation},
author={Nina L. Corvelo Benz and Stratis Tsirtsis and Eleni Straitouri and Ivi Chatzi and Ander Artola Velasco and Suhas Thejaswi and Manuel Gomez Rodriguez},
booktitle={ICLR 2025 Workshop on Building Trust in Language Models and Applications},
year={2025},
url={https://openreview.net/forum?id=FB8FMU99BC}
}

@InProceedings{pmlr-v97-oberst19a,
  title = 	 {Counterfactual Off-Policy Evaluation with {G}umbel-Max Structural Causal Models},
  author =       {Oberst, Michael and Sontag, David},
  booktitle = 	 {Proceedings of the 36th International Conference on Machine Learning},
  pages = 	 {4881--4890},
  year = 	 {2019},
  editor = 	 {Chaudhuri, Kamalika and Salakhutdinov, Ruslan},
  volume = 	 {97},
  series = 	 {Proceedings of Machine Learning Research},
  month = 	 {09--15 Jun},
  publisher =    {PMLR},
  pdf = 	 {http://proceedings.mlr.press/v97/oberst19a/oberst19a.pdf},
  url = 	 {https://proceedings.mlr.press/v97/oberst19a.html},
  abstract = 	 {We introduce an off-policy evaluation procedure for highlighting episodes where applying a reinforcement learned (RL) policy is likely to have produced a substantially different outcome than the observed policy. In particular, we introduce a class of structural causal models (SCMs) for generating counterfactual trajectories in finite partially observable Markov Decision Processes (POMDPs). We see this as a useful procedure for off-policy “debugging” in high-risk settings (e.g., healthcare); by decomposing the expected difference in reward between the RL and observed policy into specific episodes, we can identify episodes where the counterfactual difference in reward is most dramatic. This in turn can be used to facilitate review of specific episodes by domain experts. We demonstrate the utility of this procedure with a synthetic environment of sepsis management.}
}

@inproceedings{
yang2024masked,
title={Masked Hard-Attention Transformers Recognize Exactly the Star-Free Languages},
author={Andy Yang and David Chiang and Dana Angluin},
booktitle={The Thirty-eighth Annual Conference on Neural Information Processing Systems},
year={2024},
url={https://openreview.net/forum?id=FBMsBdH0yz}
}

@article{li2025characterizingexpressivitytransformerlanguage,
      title={Characterizing the Expressivity of Transformer Language Models}, 
      author={Jiaoda Li and Ryan Cotterell},
      year={2025},
      eprint={2505.23623},
      archivePrefix={arXiv},
      primaryClass={cs.CL},
      url={https://arxiv.org/abs/2505.23623}, 
      journal = {arXiv preprint arXiv:2505.23623}
}

@inproceedings{chiang-cholak-2022-overcoming,
    title = "Overcoming a Theoretical Limitation of Self-Attention",
    author = "Chiang, David  and
      Cholak, Peter",
    editor = "Muresan, Smaranda  and
      Nakov, Preslav  and
      Villavicencio, Aline",
    booktitle = "Proceedings of the 60th Annual Meeting of the Association for Computational Linguistics (Volume 1: Long Papers)",
    month = may,
    year = "2022",
    address = "Dublin, Ireland",
    publisher = "Association for Computational Linguistics",
    url = "https://aclanthology.org/2022.acl-long.527/",
    doi = "10.18653/v1/2022.acl-long.527",
    pages = "7654--7664",
    abstract = "Although transformers are remarkably effective for many tasks, there are some surprisingly easy-looking regular languages that they struggle with. Hahn shows that for languages where acceptance depends on a single input symbol, a transformer{'}s classification decisions get closer and closer to random guessing (that is, a cross-entropy of 1) as input strings get longer and longer. We examine this limitation using two languages: PARITY, the language of bit strings with an odd number of 1s, and FIRST, the language of bit strings starting with a 1. We demonstrate three ways of overcoming the limitation implied by Hahn{'}s lemma. First, we settle an open question by constructing a transformer that recognizes PARITY with perfect accuracy, and similarly for FIRST. Second, we use layer normalization to bring the cross-entropy of both models arbitrarily close to zero. Third, when transformers need to focus on a single position, as for FIRST, we find that they can fail to generalize to longer strings; we offer a simple remedy to this problem that also improves length generalization in machine translation."
}

@inproceedings{NIPS2017_3f5ee243,
 author = {Vaswani, Ashish and Shazeer, Noam and Parmar, Niki and Uszkoreit, Jakob and Jones, Llion and Gomez, Aidan N and Kaiser, \L ukasz and Polosukhin, Illia},
 booktitle = {Advances in Neural Information Processing Systems},
 editor = {I. Guyon and U. Von Luxburg and S. Bengio and H. Wallach and R. Fergus and S. Vishwanathan and R. Garnett},
 pages = {},
 publisher = {Curran Associates, Inc.},
 title = {Attention is All you Need},
 url = {https://proceedings.neurips.cc/paper_files/paper/2017/file/3f5ee243547dee91fbd053c1c4a845aa-Paper.pdf},
 volume = {30},
 year = {2017}
}

@misc{benhamu2025acceleratedsamplingmaskeddiffusion,
      title={Accelerated Sampling from Masked Diffusion Models via Entropy Bounded Unmasking}, 
      author={Heli Ben-Hamu and Itai Gat and Daniel Severo and Niklas Nolte and Brian Karrer},
      year={2025},
      eprint={2505.24857},
      archivePrefix={arXiv},
      primaryClass={cs.LG},
      url={https://arxiv.org/abs/2505.24857}, 
}

@inproceedings{nowak-etal-2024-representational,
    title = "On the Representational Capacity of Neural Language Models with Chain-of-Thought Reasoning",
    author = "Nowak, Franz  and
      Svete, Anej  and
      Butoi, Alexandra  and
      Cotterell, Ryan",
    editor = "Ku, Lun-Wei  and
      Martins, Andre  and
      Srikumar, Vivek",
    booktitle = "Proceedings of the 62nd Annual Meeting of the Association for Computational Linguistics (Volume 1: Long Papers)",
    month = aug,
    year = "2024",
    address = "Bangkok, Thailand",
    publisher = "Association for Computational Linguistics",
    url = "https://aclanthology.org/2024.acl-long.676/",
    doi = "10.18653/v1/2024.acl-long.676",
    pages = "12510--12548",
    abstract = "The performance of modern language models (LMs) has been improved by chain-of-thought (CoT) reasoning, i.e., the process of generating intermediate results that guide the model towards a final answer. A possible explanation for this improvement is that CoT reasoning extends an LM{'}s computational power, as RNNs and transformers with additional scratch space are known to be Turing complete. Comparing LMs to Turing machines, however, introduces a category error{---}Turing machines decide language membership, whereas LMs define distributions over strings. To bridge this gap, we formalize CoT reasoning in a probabilistic setting. We present several results on the representational capacity of recurrent and transformer LMs with CoT reasoning, showing that they can represent the same family of distributions over strings as probabilistic Turing machines."
}

@inproceedings{amini-etal-2023-generating,
    title = "Generating Text from Language Models",
    author = "Amini, Afra  and
      Cotterell, Ryan  and
      Hewitt, John  and
      Malagutti, Luca  and
      Meister, Clara  and
      Pimentel, Tiago",
    editor = "Chen, Yun-Nung (Vivian)  and
      Margot, Margot  and
      Reddy, Siva",
    booktitle = "Proceedings of the 61st Annual Meeting of the Association for Computational Linguistics (Volume 6: Tutorial Abstracts)",
    month = jul,
    year = "2023",
    address = "Toronto, Canada",
    publisher = "Association for Computational Linguistics",
    url = "https://aclanthology.org/2023.acl-tutorials.4/",
    doi = "10.18653/v1/2023.acl-tutorials.4",
    pages = "27--31",
    abstract = "An increasingly large percentage of natural language processing (NLP) tasks center around the generation of text from probabilistic language models. Despite this trend, techniques for improving or specifying preferences in these generated texts rely mostly on intuition-based heuristics. Further, there lacks a unified presentation of their motivations, practical implementation, successes and pitfalls. Practitioners must, therefore, choose somewhat blindly between generation algorithms{---}like top-p sampling or beam search{---}which can lead to wildly different results. At the same time, language generation research continues to criticize and improve the standard toolboxes, further adding entropy to the state of the field. In this tutorial, we will provide a centralized and cohesive discussion of critical considerations when choosing how to generate from a language model. We will cover a wide range of empirically-observed problems (like degradation, hallucination, repetition) and their corresponding proposed algorithmic solutions from recent research (like top-p sampling and its successors). We will then discuss a subset of these algorithms under a unified light; most stochastic generation strategies can be framed as locally adapting the probabilities of a model to avoid failure cases. Finally, we will then cover methods in controlled generation, that go beyond just ensuring coherence to ensure text exhibits specific desired properties. We aim for NLP practitioners and researchers to leave our tutorial with a unified framework which they can use to evaluate and contribute to the latest research in language generation."
}

@inproceedings{wei2022chainofthought,
author = {Wei, Jason and Wang, Xuezhi and Schuurmans, Dale and Bosma, Maarten and Ichter, Brian and Xia, Fei and Chi, Ed H. and Le, Quoc V. and Zhou, Denny},
title = {Chain-of-thought prompting elicits reasoning in large language models},
year = {2022},
isbn = {9781713871088},
publisher = {Curran Associates Inc.},
address = {Red Hook, NY, USA},
abstract = {We explore how generating a chain of thought—a series of intermediate reasoning steps—significantly improves the ability of large language models to perform complex reasoning. In particular, we show how such reasoning abilities emerge naturally in sufficiently large language models via a simple method called chain-of-thought prompting, where a few chain of thought demonstrations are provided as exemplars in prompting.Experiments on three large language models show that chain-of-thought prompting improves performance on a range of arithmetic, commonsense, and symbolic reasoning tasks. The empirical gains can be striking. For instance, prompting a PaLM 540B with just eight chain-of-thought exemplars achieves state-of-the-art accuracy on the GSM8K benchmark of math word problems, surpassing even finetuned GPT-3 with a verifier.},
booktitle = {Proceedings of the 36th International Conference on Neural Information Processing Systems},
articleno = {1800},
numpages = {14},
location = {New Orleans, LA, USA},
series = {NIPS '22}
}

@article{liu2025perfectdiffusionmathsftc0,
      title={Perfect diffusion is $\mathsf{TC}^0$---{B}ad diffusion is {T}uring-complete}, 
      author={Yuxi Liu},
      year={2025},
      eprint={2507.12469},
      archivePrefix={arXiv},
      primaryClass={cs.CC},
      url={https://arxiv.org/abs/2507.12469}, 
      journal={arXiv preprint arXiv:2507.12469}
}

@article{liu2025serialscalinghypothesis,
      title={The Serial Scaling Hypothesis}, 
      author={Yuxi Liu and Konpat Preechakul and Kananart Kuwaranancharoen and Yutong Bai},
      year={2025},
      eprint={2507.12549},
      archivePrefix={arXiv},
      primaryClass={cs.LG},
      url={https://arxiv.org/abs/2507.12549}, 
      journal={arXiv preprint arXiv:2507.12549}
}

@article{wu2025fastdllmtrainingfreeaccelerationdiffusion,
      title={Fast-dLLM: Training-free Acceleration of Diffusion LLM by Enabling KV Cache and Parallel Decoding}, 
      author={Chengyue Wu and Hao Zhang and Shuchen Xue and Zhijian Liu and Shizhe Diao and Ligeng Zhu and Ping Luo and Song Han and Enze Xie},
      year={2025},
      eprint={2505.22618},
      archivePrefix={arXiv},
      primaryClass={cs.CL},
      url={https://arxiv.org/abs/2505.22618}, 
      journal={arXiv preprint arXiv:2505.22618}
}

@article{dehghani2019universaltransformers,
      title={Universal Transformers}, 
      author={Mostafa Dehghani and Stephan Gouws and Oriol Vinyals and Jakob Uszkoreit and Łukasz Kaiser},
      year={2019},
      eprint={1807.03819},
      archivePrefix={arXiv},
      primaryClass={cs.CL},
      url={https://arxiv.org/abs/1807.03819}, 
      journal={arXiv preprint arXiv:1807.03819}
}

@misc{svete2025exactexpressivepowerfiniteprecision,
      title={Exact Expressive Power of Finite-precision Looped Transformers}, 
      author={Anej Svete and Will Merrill and Ashish Sabharwal},
      year={2025},
}

@article{zhang2025surveyparalleltextgeneration,
      title={A Survey on Parallel Text Generation: From Parallel Decoding to Diffusion Language Models}, 
      author={Lingzhe Zhang and Liancheng Fang and Chiming Duan and Minghua He and Leyi Pan and Pei Xiao and Shiyu Huang and Yunpeng Zhai and Xuming Hu and Philip S. Yu and Aiwei Liu},
      year={2025},
      eprint={2508.08712},
      archivePrefix={arXiv},
      primaryClass={cs.CL},
      url={https://arxiv.org/abs/2508.08712}, 
      journal={arXiv preprint arXiv:2508.08712}
}

@article{sunSpeedAlwaysWins2025,
  title = {Speed Always Wins: {A} Survey on Efficient Architectures for Large Language Models},
  shorttitle = {Speed Always Wins},
  author = {Sun, Weigao and Hu, Jiaxi and Zhou, Yucheng and Du, Jusen and Lan, Disen and Wang, Kexin and Zhu, Tong and Qu, Xiaoye and Zhang, Yu and Mo, Xiaoyu and Liu, Daizong and Liang, Yuxuan and Chen, Wenliang and Li, Guoqi and Cheng, Yu},
  year = {2025},
  month = aug,
  eprint = {2508.09834},
  primaryclass = {cs},
  publisher = {arXiv},
  doi = {10.48550/arXiv.2508.09834},
  urldate = {2025-08-15},
  journal={arXiv preprint arXiv:2508.09834},
  url={https://arxiv.org/abs/2508.09834}
}

@article{strobl-etal-2024-formal,
    title = "What Formal Languages Can Transformers Express? A Survey",
    author = "Strobl, Lena  and
      Merrill, William  and
      Weiss, Gail  and
      Chiang, David  and
      Angluin, Dana",
    journal = "Transactions of the Association for Computational Linguistics",
    volume = "12",
    year = "2024",
    address = "Cambridge, MA",
    publisher = "MIT Press",
    url = "https://aclanthology.org/2024.tacl-1.30/",
    doi = "10.1162/tacl_a_00663",
    pages = "543--561",
    abstract = "As transformers have gained prominence in natural language processing, some researchers have investigated theoretically what problems they can and cannot solve, by treating problems as formal languages. Exploring such questions can help clarify the power of transformers relative to other models of computation, their fundamental capabilities and limits, and the impact of architectural choices. Work in this subarea has made considerable progress in recent years. Here, we undertake a comprehensive survey of this work, documenting the diverse assumptions that underlie different results and providing a unified framework for harmonizing seemingly contradictory findings."
}

@article{gong2025diffucoderunderstandingimprovingmasked,
      title={DiffuCoder: Understanding and Improving Masked Diffusion Models for Code Generation}, 
      author={Shansan Gong and Ruixiang Zhang and Huangjie Zheng and Jiatao Gu and Navdeep Jaitly and Lingpeng Kong and Yizhe Zhang},
      year={2025},
      eprint={2506.20639},
      archivePrefix={arXiv},
      primaryClass={cs.CL},
      url={https://arxiv.org/abs/2506.20639}, 
      journal={arXiv preprint arXiv:2506.20639}
}

@article{arriola2025blockdiffusioninterpolatingautoregressive,
      title={Block Diffusion: Interpolating Between Autoregressive and Diffusion Language Models}, 
      author={Marianne Arriola and Aaron Gokaslan and Justin T. Chiu and Zhihan Yang and Zhixuan Qi and Jiaqi Han and Subham Sekhar Sahoo and Volodymyr Kuleshov},
      year={2025},
      eprint={2503.09573},
      archivePrefix={arXiv},
      primaryClass={cs.LG},
      url={https://arxiv.org/abs/2503.09573}, 
      journal={arXiv preprint arXiv:2503.09573}
}

@article{song2025seeddiffusionlargescalediffusion,
      title={Seed Diffusion: A Large-Scale Diffusion Language Model with High-Speed Inference}, 
      author={Yuxuan Song and Zheng Zhang and Cheng Luo and Pengyang Gao and Fan Xia and Hao Luo and Zheng Li and Yuehang Yang and Hongli Yu and Xingwei Qu and Yuwei Fu and Jing Su and Ge Zhang and Wenhao Huang and Mingxuan Wang and Lin Yan and Xiaoying Jia and Jingjing Liu and Wei-Ying Ma and Ya-Qin Zhang and Yonghui Wu and Hao Zhou},
      year={2025},
      eprint={2508.02193},
      archivePrefix={arXiv},
      primaryClass={cs.CL},
      url={https://arxiv.org/abs/2508.02193}, 
      journal={arXiv preprint arXiv:2508.02193}
}

@book{Vollmer1999,
  author    = {Vollmer, Heribert},
  title     = {Introduction to Circuit Complexity: A Uniform Approach},
  year      = {1999},
  series    = {Texts in Theoretical Computer Science. An EATCS Series},
  publisher = {Springer-Verlag},
  address   = {Berlin, Heidelberg},
  isbn      = {978-3-540-64310-4},
  doi       = {10.1007/978-3-662-03927-4}
}

@inproceedings{
rutte2025generalized,
title={Generalized Interpolating Discrete Diffusion},
author={Dimitri von R{\"u}tte and Janis Fluri and Yuhui Ding and Antonio Orvieto and Bernhard Sch{\"o}lkopf and Thomas Hofmann},
booktitle={Forty-second International Conference on Machine Learning},
year={2025},
url={https://openreview.net/forum?id=rvZv7sDPV9}
}

@inproceedings{
liu2023transformers,
title={Transformers Learn Shortcuts to Automata},
author={Bingbin Liu and Jordan T. Ash and Surbhi Goel and Akshay Krishnamurthy and Cyril Zhang},
booktitle={The Eleventh International Conference on Learning Representations },
year={2023},
url={https://openreview.net/forum?id=De4FYqjFueZ}
}

@inproceedings{10.5555/3692070.3693514,
author = {Merrill, William and Petty, Jackson and Sabharwal, Ashish},
title = {The illusion of state in state-space models},
year = {2024},
publisher = {JMLR.org},
abstract = {State-space models (SSMs) have emerged as a potential alternative to transformers. One theoretical weakness of transformers is that they cannot express certain kinds of sequential computation and state tracking (Merrill \& Sabharwal, 2023a), which SSMs are explicitly designed to address via their close architectural similarity to recurrent neural networks. But do SSMs truly have an advantage (over transformers) in expressive power for state tracking? Surprisingly, the answer is no. Our analysis reveals that the expressive power of S4, Mamba, and related SSMs is limited very similarly to transformers (within TC0), meaning these SSMs cannot solve simple state-tracking problems like permutation composition and consequently are provably unable to accurately track chess moves with certain notation, evaluate code, or track entities in a long narrative. To supplement our formal analysis, we report experiments showing that S4 and Mamba indeed struggle with state tracking. Thus, despite their recurrent formulation, the "state" in common SSMs is an illusion: S4, Mamba, and related models have similar expressiveness limitations to nonrecurrent models like transformers, which may fundamentally limit their ability to solve real-world state-tracking problems. Moreover, we show that only a minimal change allows SSMs to express and learn state tracking, motivating the development of new, more expressive SSM architectures.},
booktitle = {Proceedings of the 41st International Conference on Machine Learning},
articleno = {1444},
numpages = {15},
location = {Vienna, Austria},
series = {ICML'24}
}

@Article{Furst1984,
author={Furst, Merrick
and Saxe, James B.
and Sipser, Michael},
title={Parity, circuits, and the polynomial-time hierarchy},
journal={Mathematical systems theory},
year={1984},
month={Dec},
day={01},
volume={17},
number={1},
pages={13-27},
abstract={A super-polynomial lower bound is given for the size of circuits of fixed depth computing the parity function. Introducing the notion of polynomial-size, constant-depth reduction, similar results are shown for the majority, multiplication, and transitive closure functions. Connections are given to the theory of programmable logic arrays and to the relativization of the polynomial-time hierarchy.},
issn={1433-0490},
doi={10.1007/BF01744431},
url={https://doi.org/10.1007/BF01744431}
}

@inproceedings{svete-cotterell-2024-transformers,
    title = "Transformers Can Represent $n$-gram Language Models",
    author = "Svete, Anej  and
      Cotterell, Ryan",
    editor = "Duh, Kevin  and
      Gomez, Helena  and
      Bethard, Steven",
    booktitle = "Proceedings of the 2024 Conference of the North American Chapter of the Association for Computational Linguistics: Human Language Technologies (Volume 1: Long Papers)",
    month = jun,
    year = "2024",
    address = "Mexico City, Mexico",
    publisher = "Association for Computational Linguistics",
    url = "https://aclanthology.org/2024.naacl-long.381/",
    doi = "10.18653/v1/2024.naacl-long.381",
    pages = "6845--6881",
    abstract = "Plenty of existing work has analyzed the abilities of the transformer architecture by describing its representational capacity with formal models of computation. However, the focus so far has been on analyzing the architecture in terms of language \textit{acceptance}. We contend that this is an ill-suited problem in the study of \textit{language models} (LMs), which are definitionally \textit{probability distributions} over strings. In this paper, we focus on the relationship between transformer LMs and $n$-gram LMs, a simple and historically relevant class of language models. We show that transformer LMs using the hard or sparse attention mechanisms can exactly represent any $n$-gram LM, giving us a concrete lower bound on their probabilistic representational capacity. This provides a first step towards understanding the mechanisms that transformer LMs can use to represent probability distributions over strings."
}

@inproceedings{borenstein-etal-2024-languages,
    title = "What Languages are Easy to Language-Model? A Perspective from Learning Probabilistic Regular Languages",
    author = "Borenstein, Nadav  and
      Svete, Anej  and
      Chan, Robin  and
      Valvoda, Josef  and
      Nowak, Franz  and
      Augenstein, Isabelle  and
      Chodroff, Eleanor  and
      Cotterell, Ryan",
    editor = "Ku, Lun-Wei  and
      Martins, Andre  and
      Srikumar, Vivek",
    booktitle = "Proceedings of the 62nd Annual Meeting of the Association for Computational Linguistics (Volume 1: Long Papers)",
    month = aug,
    year = "2024",
    address = "Bangkok, Thailand",
    publisher = "Association for Computational Linguistics",
    url = "https://aclanthology.org/2024.acl-long.807/",
    doi = "10.18653/v1/2024.acl-long.807",
    pages = "15115--15134",
    abstract = "What can large language models learn? By definition, language models (LM) are distributionsover strings. Therefore, an intuitive way of addressing the above question is to formalize it as a matter of learnability of classes of distributions over strings. While prior work in this direction focused on assessing the theoretical limits, in contrast, we seek to understand the empirical learnability. Unlike prior empirical work, we evaluate neural LMs on their home turf{---}learning probabilistic languages{---}rather than as classifiers of formal languages. In particular, we investigate the learnability of regular LMs (RLMs) by RNN and Transformer LMs. We empirically test the learnability of RLMs as a function of various complexity parameters of the RLM and the hidden state size of the neural LM. We find that the RLM rank, which corresponds to the size of linear space spanned by the logits of its conditional distributions, and the expected length of sampled strings are strong and significant predictors of learnability for both RNNs and Transformers. Several other predictors also reach significance, but with differing patterns between RNNs and Transformers."
}

@inproceedings{svete-etal-2024-transformers,
    title = "Can Transformers Learn $n$-gram Language Models?",
    author = "Svete, Anej  and
      Borenstein, Nadav  and
      Zhou, Mike  and
      Augenstein, Isabelle  and
      Cotterell, Ryan",
    editor = "Al-Onaizan, Yaser  and
      Bansal, Mohit  and
      Chen, Yun-Nung",
    booktitle = "Proceedings of the 2024 Conference on Empirical Methods in Natural Language Processing",
    month = nov,
    year = "2024",
    address = "Miami, Florida, USA",
    publisher = "Association for Computational Linguistics",
    url = "https://aclanthology.org/2024.emnlp-main.550/",
    doi = "10.18653/v1/2024.emnlp-main.550",
    pages = "9851--9867",
    abstract = "Much theoretical work has described the ability of transformers to represent formal languages. However, linking theoretical results to empirical performance is not straightforward due to the complex interplay between the architecture, the learning algorithm, and training data. To test whether theoretical lower bounds imply \textit{learnability} of formal languages, we turn to recent work relating transformers to $n$-gram language models (LMs). We study transformers' ability to learn random $n$-gram LMs of two kinds: ones with arbitrary next-symbol probabilities and ones where those are defined with shared parameters. We find that classic estimation techniques for $n$-gram LMs such as add-$\lambda$ smoothing outperform transformers on the former, while transformers perform better on the latter, outperforming methods specifically designed to learn $n$-gram LMs."
}

@inproceedings{
butoi2025training,
title={Training Neural Networks as Recognizers of Formal Languages},
author={Alexandra Butoi and Ghazal Khalighinejad and Anej Svete and Josef Valvoda and Ryan Cotterell and Brian DuSell},
booktitle={The Thirteenth International Conference on Learning Representations},
year={2025},
url={https://openreview.net/forum?id=aWLQTbfFgV}
}

@inproceedings{
feng2023towards,
title={Towards Revealing the Mystery behind Chain of Thought: A Theoretical Perspective},
author={Guhao Feng and Bohang Zhang and Yuntian Gu and Haotian Ye and Di He and Liwei Wang},
booktitle={Thirty-seventh Conference on Neural Information Processing Systems},
year={2023},
url={https://openreview.net/forum?id=qHrADgAdYu}
}

@article{lanham2023measuringfaithfulnesschainofthoughtreasoning,
      title={Measuring Faithfulness in Chain-of-Thought Reasoning}, 
      author={Tamera Lanham and Anna Chen and Ansh Radhakrishnan and Benoit Steiner and Carson Denison and Danny Hernandez and Dustin Li and Esin Durmus and Evan Hubinger and Jackson Kernion and Kamilė Lukošiūtė and Karina Nguyen and Newton Cheng and Nicholas Joseph and Nicholas Schiefer and Oliver Rausch and Robin Larson and Sam McCandlish and Sandipan Kundu and Saurav Kadavath and Shannon Yang and Thomas Henighan and Timothy Maxwell and Timothy Telleen-Lawton and Tristan Hume and Zac Hatfield-Dodds and Jared Kaplan and Jan Brauner and Samuel R. Bowman and Ethan Perez},
      year={2023},
      eprint={2307.13702},
      archivePrefix={arXiv},
      primaryClass={cs.AI},
      url={https://arxiv.org/abs/2307.13702}, 
      journal={arXiv preprint arXiv:2307.13702}
}

@inproceedings{10.5555/3618408.3619203,
author = {Leviathan, Yaniv and Kalman, Matan and Matias, Yossi},
title = {Fast inference from transformers via speculative decoding},
year = {2023},
publisher = {JMLR.org},
abstract = {Inference from large autoregressive models like Transformers is slow - decoding K tokens takes K serial runs of the model. In this work we introduce speculative decoding - an algorithm to sample from autoregressive models faster without any changes to the outputs, by computing several tokens in parallel. At the heart of our approach lie the observations that (1) hard language-modeling tasks often include easier subtasks that can be approximated well by more efficient models, and (2) using speculative execution and a novel sampling method, we can make exact decoding from the large models faster, by running them in parallel on the outputs of the approximation models, potentially generating several tokens concurrently, and without changing the distribution. Our method can accelerate existing off-the-shelf models without retraining or architecture changes. We demonstrate it on T5-XXL and show a 2X-3X acceleration compared to the standard T5X implementation, with identical outputs.},
booktitle = {Proceedings of the 40th International Conference on Machine Learning},
articleno = {795},
numpages = {13},
location = {Honolulu, Hawaii, USA},
series = {ICML'23}
}

@inproceedings{Lou2024DiscreteDM,
  title={Discrete Diffusion Modeling by Estimating the Ratios of the Data Distribution},
  author={Aaron Lou and Chenlin Meng and Stefano Ermon},
  booktitle={ICML},
  year={2024},
  url={https://arxiv.org/abs/2310.16834}
}

@article{ye2025autoregressiondiscretediffusioncomplex,
      title={Beyond Autoregression: Discrete Diffusion for Complex Reasoning and Planning}, 
      author={Jiacheng Ye and Jiahui Gao and Shansan Gong and Lin Zheng and Xin Jiang and Zhenguo Li and Lingpeng Kong},
      year={2025},
      eprint={2410.14157},
      archivePrefix={arXiv},
      primaryClass={cs.CL},
      url={https://arxiv.org/abs/2410.14157}, 
      journal={arXiv preprint arXiv:2410.14157}
}

@inproceedings{hahn-rofin-2024-sensitive,
    title = "Why are Sensitive Functions Hard for Transformers?",
    author = "Hahn, Michael  and
      Rofin, Mark",
    editor = "Ku, Lun-Wei  and
      Martins, Andre  and
      Srikumar, Vivek",
    booktitle = "Proceedings of the 62nd Annual Meeting of the Association for Computational Linguistics (Volume 1: Long Papers)",
    month = aug,
    year = "2024",
    address = "Bangkok, Thailand",
    publisher = "Association for Computational Linguistics",
    url = "https://aclanthology.org/2024.acl-long.800/",
    doi = "10.18653/v1/2024.acl-long.800",
    pages = "14973--15008",
    abstract = "Empirical studies have identified a range of learnability biases and limitations of transformers, such as a persistent difficulty in learning to compute simple formal languages such as PARITY, and a bias towards low-degree functions. However, theoretical understanding remains limited, with existing expressiveness theory either overpredicting or underpredicting realistic learning abilities. We prove that, under the transformer architecture, the loss landscape is constrained by the input-space sensitivity: Transformers whose output is sensitive to many parts of the input string inhabit isolated points in parameter space, leading to a low-sensitivity bias in generalization. We show theoretically and empirically that this theory unifies a broad array of empirical observations about the learning abilities and biases of transformers, such as their generalization bias towards low sensitivity and low degree, and difficulty in length generalization for PARITY. This shows that understanding transformers' inductive biases requires studying not just their in-principle expressivity, but also their loss landscape."
}
\bibliographystyle{iclr2026_conference}

\newpage

\appendix

\section{Related work on the expressivity of \MDMsAcr} \label{app:related-work}

Existing theoretical work on \MDMsAcr focuses on the limitations of the factorized backward process and its convergence properties, which can be linked to formal language generation and recognition.

\citet{feng2025theoreticalbenefitlimitationdiffusion} analyze the implications of the factorized backward process on the expressivity of \MDMsAcr.
They find that \MDMsAcr can approximate \ngram LM distributions arbitrarily well with constantly many sampling steps and linearly many sampling steps suffice to approximately sample from any regular language. 
They also provide a linear lower bound on the number of sampling steps required to capture the support of general regular languages---a consequence of the assumed factorization, which is incompatible with the sequential generation of regular languages.
Crucially, this differs from our \emph{recognition} setting in which the string to be processed is given and only its membership decision has to be generated.

This line of work is tightened by \citet{li2025convergencetheorydiffusionlanguage}, who build on work by \citet{chen2024convergenceanalysisdiscretediffusion} and analyze the approximation error (measured by the KL divergence) of the distribution generated by \MDMsAcr with respect to the number of sampling steps and the mutual information (dependence) between the symbols in different positions.
Intuitively, the larger the mutual information is, the larger the number of sampling steps has to be to capture the same distribution (equivalently, the fewer symbols per step can be generated).
They show that the KL divergence between the generated and the ground-truth distributions decays linearly with the number of sampling steps (with scaling that depends on the mutual information between the symbols in the strings), and show this decay to be optimal in general.
This generalizes the results on \ngram distributions and regular languages by \citet{feng2025theoreticalbenefitlimitationdiffusion}.

\citet{liu2025perfectdiffusionmathsftc0,liu2025serialscalinghypothesis} take a different perspective and analyze the expressivity of latent diffusion LMs.
They show that, under an analogous assumption to our \cref{assumption:uniform-unmasking,assumption:perfect-approximation}, latent diffusion LMs converge to the data distribution in constantly many steps, which means that the computational depth of such models is limited.
They show that breaking this assumption makes latent diffusion LMs Turing complete, analogous to our results in \cref{sec:mdms-and-cot}.

While these results provide useful insights into the limitations and affordances of \MDMsAcr, their asymptotic and approximate nature makes it difficult to draw concrete conclusions about the (reasoning) capabilities of \MDMsAcr in the sense of the work on transformers' expressivity.
Our work complements these results by providing a more fine-grained analysis of \MDMAcr capabilities based on their connection to \LTsAcr and \CoTAcr transformers, which allows us to leverage the existing theory developed on transformer expressivity and apply it directly to \MDMsAcr.

\section{Preliminaries} \label{app:preliminaries}

\subsection{Notation} \label{app:notation}

Let $\alphabet$ be an alphabet.
A \defn{language} $\lang$ is a subset of $\strings$.
A \defn{language recognizer} is a function $\recognizer\colon \strings \to \{\rejectSym, \acceptSym\}$, where $\rejectSym$ and $\acceptSym$ are designated reject and accept symbols. 
$\recognizer$'s language is $\langFun{\recognizer} \defeq \set{\str \in \strings \mid \recognizerFun{\str} = \acceptSym}$.
Two recognizers $\recognizer_1$ and $\recognizer_2$ are \defn{equivalent} if and only if $\langFun{\recognizer_1} = \langFun{\recognizer_2}$.

We denote the concatenation of two strings $\str_1, \str_2 \in \strings$ as $\str_1 \circ \str_2$ or simply $\str_1\str_2$.
We define the \defn{interleaving} of the vectors $\vx, \vy \in \R^\hiddDim$ as $\interleaveFun{\vx}{\vy} \in \R^{2\hiddDim}$ where 
\begin{equation}
   \interleaveFun{\vx}{\vy}_\dimidx \defeq 
   \begin{cases}
      \evx_{\sfrac{(\dimidx+1)}{2}} & \ifcondition \dimidx \text{ is odd}, \\
      \evy_{\sfrac{\dimidx}{2}} & \otherwisecondition \dimidx.
   \end{cases}
\end{equation}

We use $\onehot{\sym} \in \{0, 1\}^{\nsymbols}$ to denote the one-hot encoding of symbol $\sym \in \alphabet$.
We use $\bin_\numPrec(\stridx)$ to denote the binary encoding of natural number $\stridx$ using $\numPrec$ binary bits and $\sbin_\numPrec(\stridx)$ to denote the signed binary encoding $2\bin_\numPrec(\stridx)-\one_\numPrec$, where $\one_\numPrec$ is the $\hiddDim$-dimensional vector of all ones.
We will leave out $\numPrec$ when it is clear from the context.

For $\hiddDim \in \N$, we define $\softmax\colon\R^\hiddDim\to\R^\hiddDim$ as $\softmaxFun{\vx}{\dimidx} = \exp(\evx_\dimidx)/\sum_{\dimidx=1}^\hiddDim \exp(\evx_\dimidx)$ for $\vx\in\R^\hiddDim$ and $\dimidx\in [\hiddDim]$. 
We also use the shorthand $\posPart{x} \defeq \max\{x, 0\}$.
We denote with $\probOverFun{\sX}$ the set of all probability distributions over a set $\sX$.

\subsection{Circuit Complexity} \label{app:circuit-complexity}
Computational circuits are a model of parallel computation.
They have been widely used in the study of the expressivity of neural networks.
Circuits process binary input strings through a series of logical operations to produce binary outputs.\footnote{By representing symbols from any alphabet with binary encodings, circuits (or circuit functions) can be used to process strings over any finite alphabet. We focus on binary strings for simplicity.}
Formally, a \defn{boolean circuit} is a directed acyclic graph where source nodes represent the \strlen-bit input, and a single sink node represents the output. 
Non-source vertices are called \defn{gates} and are labeled with logical operations (e.g., \andGate, \orGate, \notGate). 
The \defn{size} of a circuit is the number of gates, and its \defn{depth} is the longest path from any input to the output.

A circuit computes a function $\circuit\colon \{0,1\}^\strlen \to \{\rejectSym, \acceptSym\}$ for some $\strlen \in \N$, where $\rejectSym$ and $\acceptSym$ are designated reject and accept symbols.
The value $\circuitFun{\str}$ for input string $\str \in \{0,1\}^\strlen$ is computed by evaluating the gates in topological order starting from the input bits.
We say that the circuit $\circuit$ \defn{accepts} a string $\str$ if $\circuitFun{\str} = \acceptSym$.

\defn{Circuit families} process input strings of variable length.
A circuit family is a sequence of circuits $\circuitFamily \defeq \{\circuit_{\strlen}\}_{\strlen \in \N}$ where $\circuit_\strlen$ processes inputs of length $\strlen$. 
A circuit family is said to recognize a language if for any given input string, the corresponding circuit outputs $\acceptSym$ if and only if the string is in the language.

A \defn{circuit complexity class} is a set of circuit families that satisfy certain constraints on size, depth, and the types of gates used.
This paper focuses on two common classes:
\begin{itemize}[noitemsep,topsep=0pt]
   \item \defn{$\ACd$}: Circuits with \notGate, \andGate, and \orGate gates that have unbounded fan-in and depth $\bigOFun{\log^d\strlen}$.
   \item \defn{$\TCd$}: The extension of $\ACd$ that adds \defn{threshold gates}, which output $\acceptSym$ if the sum of their inputs exceeds a given threshold. 
   It is known that $\ACZero \subsetneq \TCZero$ and $\ACd \subseteq \TCd$.
   For example, \textsc{Parity}, the language of binary strings with an even number of 1s, is in $\TCZero$ but not in $\ACZero$ \citep{Furst1984}.
   \item \defn{$\NCOne$}: This class consists of circuits that can be computed in parallel with a logarithmic depth, a polynomial number of gates, and constant fan-in. 
   It is known that $\TCZero \subseteq \NCOne$.
\end{itemize}

Without additional constraints, circuit families can recognize undecidable languages by having arbitrary, ``hard-coded'' solutions for each input length. 
To avoid this and ensure the model of computation is realistic, we can impose a \defn{uniformity} condition. 
A circuit family is \defn{uniform} if there exists a Turing machine that, given an input of $1^\strlen$, can generate a description of the circuit $\circuit_\strlen$.
In particular, a circuit class is \defn{$\LUniform$} if a Turing machine using $\bigOFun{\log \strlen}$ space can construct its description from the input $1^\strlen$.
This ensures the circuits for different input lengths are related by a systematic procedure.

\subsection{Finite-precision fixed-point arithmetic} \label{app:fixed-point-arithmetic}

We assume that the operations performed by our computational models rely on finite-precision fixed-point arithmetic.
This model is based on ones used by \citet{li2024chain,saunshi2025reasoninglatentthoughtspower,london2025pausetokensstrictlyincrease}. 

\begin{definition}[Fixed-Point Representation] 
   Let $\numPrec \in \N$ be the number of bits devoted to each of the integer and fractional parts. 
   We use $\F_\numPrec$ to denote the set
   \begin{equation}
      \F_\numPrec \defeq \{ x_{\pm} \cdot a \cdot 2^{-\numPrec} \mid x_{\pm} \in \{-1, 1\}, a \in \{0, 1, \ldots, 2^{2\numPrec}-1\} \}
   \end{equation}
\end{definition}
We define $\maxFNum \defeq \max \F_\numPrec = 2^\numPrec - 2^{-\numPrec}$.
All values exceeding $\maxFNum$ are considered out of range and are rounded to $\maxFNum$.
Note, however, that $\maxFNum$ does \emph{not} behave like infinity---it does not ``consume'' all subsequent operations.
For example, $\maxFNum - x \neq \maxFNum$ for some non-negative $x \in \F_\numPrec$ is a valid number.

To handle the results of arithmetic operations that may not be exactly representable in the fixed-point format, we define a standard for rounding.
\begin{definition}[Rounding]
   For any $x \in \R$ and any closed subset $\F$ of $\R$ containing 0, we define rounding $\roundFun{x, \F}$ as the closest number to $x$ in $\F$. 
   In case of a tie, the value with the smaller absolute value is chosen.
\end{definition}
We denote the rounding operation as $\roundOpFun{\cdot} \defeq \round(\cdot, \F_\numPrec)$. 
This operation is applied to vectors and matrices element-wise.
All binary operations are defined by first performing the ideal mathematical operation and then rounding the result to the nearest representable value in $\F_\numPrec$. 
Division by zero is considered an error condition resulting in an incorrect output.
We also note that $\sfrac{\maxFNum}{2} = 2^{\numPrec - 1} - 2^{-\numPrec}$.

For operations involving more than two numbers, rounding is applied iteratively.
\begin{definition}[Summation with Iterative Rounding]
   For $\numPrec, N \in \N$ and $\vx \in \R^N$, we define summation with iterative rounding to $\numPrec$ fractional bits as the function $\textsc{sum}_\numPrec\colon \bigcup_{N\in\N}(\F_{\numPrec})^{N}\rightarrow\F_{\numPrec}$, where for any $N\in\N^{+}$ and $\vx \in (\F_\numPrec)^N$:
   \begin{equation}
      \textsc{sum}_\numPrec(\vx) \defeq \roundOpFun{\dots\roundOpFun{\roundOpFun{\evx_1 + \evx_2} + \evx_3} + \dots + \evx_N}
   \end{equation}
\end{definition}
This iterative rounding process is not associative and the order of operations can affect the final result. 
Based on this, we can also define more complex operations such as the \textbf{fixed-point inner product} $\langle \vx, \vy \rangle_\numPrec \defeq \textsc{sum}_\numPrec(\vx \odot \vy)$, where $\odot$ denotes the element-wise product of two vectors, and \textbf{fixed-point matrix product} for matrices $\mA$ and $\mB$, where $(\mA \times_\numPrec \mB)_{i,j} \defeq \langle (\mA_{i,:})^\top, \mB_{:,j} \rangle_\numPrec$.

\subsection{Transformers} \label{app:transformers}

We consider both \defn{unmasked} \citep{devlin-etal-2019-bert} and \defn{causally masked} transformers \citep{radford2019language}.
Concretely, we work with fixed-point transformers whose underlying arithmetic operations are replaced with their fixed-point counterparts.
A transformer $\tf$ consists of four parts: 
\begin{enumerate}[label=(\arabic*),topsep=0pt,noitemsep]
   \item a \defn{symbol embedding} $\symbolembedding\colon \alphabet \to \F^{\hiddDim}$ of the form $\symbolembedding(\sym) \defeq \inMtx \onehot{\sym}$ for $\sym \in\alphabet$, where $\inMtx \in \F^{\hiddDim \times \nsymbols}$ and $\onehot{\sym} \in \F^{\nsymbols}$ is the one-hot encoding of $\sym$,
   \item a \defn{positional encoding} $\posencoding\colon \N \times \N \to \F^\hiddDim$,
   \item $\numlayers$ \defn{layers} $\tflayer^{(1)}, \ldots, \tflayer^{(\numlayers)}$, each of which consists of two sub-layers: A multi-head self-attention layer and a position-wise fully-connected feed-forward network $\mlp$, and
   \item an \defn{output layer} $\transoutput$ of the form $\transoutput(\hiddState) \defeq \softmax(\outMtx \hiddState)$ for $\hiddState\in\F^{\hiddDim}$, where $\outMtx \in \F^{\eosnsymbols \times \hiddDim}$.
\end{enumerate}
Each layer has its own parameters and is indexed by the layer name and the depth for attention and feedforward layers.
We use $\hiddDim$ to denote the \defn{width} of a transformer.
A transformer with layers $\tflayer^{(1)}, \ldots, \tflayer^{(\numlayers)}$ computes $\hiddState^{(\layeridx)}_\stridx \in \F^{\hiddDim}$ for $\layeridx \in \set{1, \ldots, \numlayers}$ and each position $\stridx \in \NTo{\strlen}$ in the input string $\str = \sym_1\cdots\sym_\strlen \in \strings$ as follows:
\begin{subequations}
   \begin{align}
      \hiddState^{(0)}_\stridx &\defeq \symbolembedding(\sym_\stridx) + \posencoding(\stridx, \strlen) \in \F^{\hiddDim} \text{ for } \stridx \in \NTo{\strlen} \\
      \hiddMtx^{(\layeridx)} &\defeq \begin{pmatrix}
         \hiddState^{(\layeridx)\top}_1 &
         \cdots &
         \hiddState^{(\layeridx)\top}_\strlen
      \end{pmatrix}^\top \in \F^{\strlen \times \hiddDim} \\
      \queryMtx^{(\layeridx)} &\defeq \hiddMtx^{(\layeridx)} \mW_Q^{(\layeridx)}, \quad 
      \keyMtx^{(\layeridx)} \defeq \hiddMtx^{(\layeridx)} \mW_K^{(\layeridx)}, \quad
      \valueMtx^{(\layeridx)} \defeq \hiddMtx^{(\layeridx)} \mW_V^{(\layeridx)} \quad \in \F^{\strlen \times \hiddDim} \\
      \mG^{(\layeridx)} &\defeq \softmax(\attnMaskFun{\queryMtx^{(\layeridx)} \keyMtx^{(\layeridx)\top}}) \valueMtx^{(\layeridx)} + \hiddMtx^{(\layeridx)} \in \F^{\strlen \times \hiddDim} \label{eq:softmax-attention-matrix} \\
      \hiddMtx^{(\layeridx + 1)} &\defeq \mG^{(\layeridx)} + \mlp(\mG^{(\layeridx)}) \in \F^{\strlen \times \hiddDim}
   \end{align} 
\end{subequations}
Here, $\attnMask\colon \F^{\strlen \times \strlen} \to (\F \cup \set{-\infty})^{\strlen \times \strlen}$ is the \defn{masking function}.\footnote{Similar to \citet{li2024chain,saunshi2025reasoninglatentthoughtspower}, we define masking with a function rather than an additive matrix since subtracting $\maxFNum$ from an arbitrary number in $\F$ does not necessarily result in $-\maxFNum$.}\textsuperscript{,}\footnote{Examples of masking functions include the identity function (unmasked attention) and the identity function on the upper-triangular portion of the matrix that replaces the lower-triangular part with $-\maxFNum$ (causally masked attention).}
We say that the $\layeridx\textsuperscript{th}$ layer $\tflayer^{(\layeridx)}$ computes the function $\tflayer^{(\layeridx)}\colon \F^{\strlen \times \hiddDim} \to \F^{\strlen \times \hiddDim}$, defined by the function $\tflayer^{(\layeridx)}\colon \hiddMtx^{(\layeridx - 1)} \mapsto \hiddMtx^{(\layeridx)}$ for $\layeridx \in \set{1, \ldots, \numlayers}$.
We also denote with $\tf$ the function $\tf\colon \strings \to \F^{\strlen \times \hiddDim}$, defined as $\tf\colon \str \mapsto \hiddMtx^{(\numlayers)}$.

We use the following definition of the multi-layer perceptron.
\begin{definition}[Multi-layer perceptron] \label{def:mlp}
   A \defn{multi-layer perceptron} (MLP) is a function $\mlp\colon \F^\hiddDim \to \F^{\hiddDim'}$ that can be expressed as a composition of affine transformations and the $\ReLU$ activation function:
   \begin{equation}
      \mlp(\vx) = \ReLU(\mW_2 (\ReLU(\mW_1 \vx + \vb_1)) + \vb_2),
   \end{equation}
   where $\mW_1 \in \F^{H \times \hiddDim}$, $\mW_2 \in \F^{\hiddDim' \times H}$, $\vb_1 \in \F^{H}$, $\vb_2 \in \F^{\hiddDim'}$ for $\hiddDim, \hiddDim', H \in \N$.
\end{definition}

\subsubsection{Scaling transformer size with input length} \label{app:scaling-transformers}
The exact expressivity of a transformer model depends on seemingly unimportant details \citep{jerad2025uniquehardattentiontale}. 
One such example is the interplay between the scaling of the \defn{numerical precision} $\numPrec$ of the values stored in the representations $\hiddState$ and the scaling of the \defn{width} $\hiddDim$.
To be able to uniquely identify the $\strlen$ positions in the input string, the ``volume'' of the embedding space, i.e., the number of possible distinct representations $\hiddState$, must be at least $\strlen$.
This implies that the product $\numPrec \cdot \hiddDim$ must \emph{scale} at least logarithmically with $\strlen$: $\numPrecFun{\strlen} \cdot \hiddDimFun{\strlen} = \bigOmegaFun{\log\strlen}$.
Existing work focuses on two modeling choices: 
\begin{enumerate}[label=(\arabic*)]
   
   \item \defn{log-precision} transformers where $\numPrecFun{\strlen} = \bigThetaFun{\log\strlen}$ with either constant width $\hiddDimFun{\strlen} = \bigThetaFun{1}$ or polynomial width $\hiddDimFun{\strlen} = \bigThetaFun{\poly(\strlen)}$ \citep{merrill-sabharwal-2023-parallelism,merrill2024the,li2024chain,london2025pausetokensstrictlyincrease,merrill2025littledepthgoeslong,merrill2025exactexpressivepowertransformers}; and
   
   \item \defn{constant-precision logarithmic-width} transformers where $\numPrec = \bigThetaFun{1}$ and $\hiddDimFun{\strlen} = \bigThetaFun{\log\strlen}$ \citep{li2024chain,saunshi2025reasoninglatentthoughtspower,london2025pausetokensstrictlyincrease}.
   
\end{enumerate}
Despite having the same volume of the representation space, the two models differ in their expressivity.
For example, constant-precision transformers are constrained to use the volume in a ``distributed'' manner across model dimensions without the ability to summarize the information into individual values or store pointers to arbitrary positions in them---both of these require precision growing with string length.
Such summarization is required in certain steps of the transformer architecture---for example, when computing the attention scores.
This limits the expressivity of constant-precision transformers compared to log-precision transformers: While log-precision transformers can compute certain $\TCZero$ functions, constant-precision transformers with polynomial width fall within $\ACZero$ \citep{li2024chain,london2025pausetokensstrictlyincrease}.
Similar separation results extend to popular variants of transformers:
\begin{enumerate*}[label=(\alph*)]
   \item Transformers with chain-of-thought reasoning (cf. \cref{app:cot}) with logarithmic precision can simulate $\TCZero$ circuits of size corresponding to the number of reasoning steps while constant-precision transformers with polynomial width are constrained to $\ACZero$ circuits \citep{li2024chain,merrill2024the}.
   \item Transformers with additional padding space (blank thinking tokens; cf. \cref{app:looped-padded-transformers}) with logarithmic precision can express exactly $\TCZero$ functions while constant-precision transformers with polynomial width are equivalent to $\ACZero$ circuits \citep{merrill2025exactexpressivepowertransformers,london2025pausetokensstrictlyincrease}.
\end{enumerate*}
However, the trend of coarse quantization while increasing the model size makes fixed-precision logarithmic-width transformers particularly appealing, which is why we focus on \emph{finite precision} transformers with \emph{logarithmic width}.

Analogously to circuit families, each string length $\strlen$ is processed by a separate transformer model.
To process all of $\strings$, we therefore define a \defn{transformer family} $\{\tf_\strlen\}$ as a sequence of transformers where each $\tf_\strlen$ processes strings of length $\strlen$.
Further, we again impose a uniformity condition on the family, which will relate the transformers for different input lengths.

\begin{definition}[Uniform transformer families; variant of \text{\citealp[][Def. 3.6]{london2025pausetokensstrictlyincrease}}] \label{def:uniform-transformers}
   Let $\XCls$ be a computational complexity class.
   A transformer family $\{\tf_\strlen\}$ is \defn{$\XUniform$} if there exist Turing machines $\tm_1$ and $\tm_2$ whose resource usage is constrained by the complexity class $\XCls$ such that: 
   \begin{enumerate}[label=(\arabic*),topsep=0pt,noitemsep]
      \item $\tm_1$ takes input $1^\strlen$ and outputs a description of $\tf_\strlen$, and
      \item $\tm_2$ takes input $(1^\strlen, \bin(\stridx))$ and outputs $\posencoding(\stridx, \strlen)$.
   \end{enumerate} 
\end{definition}

\cref{def:uniform-transformers} allows for size-dependent transformers while keeping them closely related (as the same Turing machines must construct them for all $\strlen$). 
It also facilitates natural connections with uniform circuit classes (cf.~\cref{app:circuit-complexity}) \citep{london2025pausetokensstrictlyincrease}.
All our results concern $\LUniform$ transformer families, in which case, the Turing machines in \cref{def:uniform-transformers} operate in logarithmic space.

\subsubsection{Transformer language models and symbol predictors} \label{app:language-models-and-next-symbol-predictors}
Transformers can implement (non-)autoregressive LMs and deterministic symbol predictors. 
A transformer LM computes the probability distribution over the $\strlen + 1\textsuperscript{st}$ symbol given the length-$\strlen$ string $\str$ as
\begin{equation}
   \pLMFun{\textcolor{ETHPurple}{\eossym} \mid \str} \defeq \transoutput(\hiddState^{(\numlayers)}_{|\str|})_{\textcolor{ETHPurple}{\eossym}} \qquad \text{for } \textcolor{ETHPurple}{\eossym} \in \eosalphabet.
\end{equation}
To define \defn{infilling} probabilities, let $\str \in \kleene{\maskAlphabet}$ be a possibly (partially) masked string of length $\strlen$ and let $\stridx \in \NTo{\strlen}$ such that $\sym_\stridx = \maskSym$.
We then define the probability distribution over the symbol at the masked position $\stridx$ as
\begin{equation}
   \pLMInfFun{\textcolor{ETHPurple}{\sym} \mid \str} \defeq \transoutput(\hiddState^{(\numlayers)}_{\stridx})_{\textcolor{ETHPurple}{\sym}} \qquad \text{for } \textcolor{ETHPurple}{\sym} \in \alphabet.
\end{equation}
Note that the next-symbol probabilities are computed based on the contextual representation of the previous symbol while the infilling probabilities are defined based on the contextual representation at the masked position.
That is, for autoregressive next-symbol prediction, the transformer uses the hidden state at the last (previous) position, whereas for infilling, it uses the hidden state at the position to be filled. 

To define a deterministic transformer-based symbol predictor, we define a decoding step.
\begin{definition}[Decoding step] \label{def:decoding-step}
   For any $\strlen \in \N$, let $\residStream \in \R^{\strlen \times \hiddDim}$ .
   The \defn{decoding step} $\decodeStep\colon \R^{\strlen \times \hiddDim} \to \eosalphabet^{\strlen}$ is defined as
   \begin{equation}
      \decodeStep(\residStream)_\stridx = \argmax_{\sym \in \eosalphabet} \transoutput(\residStream)_{\stridx, :}
   \end{equation}
   where the output function $\transoutput$ is applied to $\residStream$ row-wise.
   This can be used either to deterministically infill the masked positions in the string or to deterministically predict the next symbol based on the final representation.
   For next-symbol prediction, we also define the shorthand $\decoder \colon \strings \to \eosalphabet$ as
   \begin{equation}
      \decoderFun{\str} \defeq \decodeStep(\tf(\str))_{\strlen}.
   \end{equation}
\end{definition}

\subsubsection{Language encoders and model equivalence} \label{app:model-equivalence}
We are interested in the expressivity of neural networks as language recognizers and LMs, which at a high level, describes their behavior on input strings.
This behavior---either the prediction of language membership or the computation of string probabilities---is completely determined by the \defn{contextual representations} of the input strings produced by the neural network.
We abstract the computation of string representations using a \defn{language encoder}---a length-preserving function $\langEnc\colon \strings \to \kleene{(\R^{\hiddDim})}$ for some $\hiddDim \in \N$ \citep{10.5555/3737916.3740249,cotterell2024formalaspectslanguagemodeling}.
We regard the output $\langEnc(\str)$ of a language encoder for a string $\str$ as a $|\str| \times \hiddDim$ matrix, where each row corresponds to the contextual representation of the symbol at the corresponding position.
In the context of transformers, the language encoder is the function $\langEncFun{\str} \defeq \hiddMtx^{(\numlayers)}$ for $\str \in \strings$.

All aspects of the model's behavior that we might be interested in can be described in terms of the contextual representations---the (logits of the) next-symbol or infilling probabilities are determined by a linear transformation of the contextual representations, and the membership test is determined by a linear classifier based on the contextual representations of the final symbol in the string.
Studying the expressivity of a model thus reduces to determining what types of contextual representations can be produced by the model.
If we can show that two language encoders $\langEnc_1, \langEnc_2 \colon \strings \to \kleene{(\R^\hiddDim)}$ compute the same contextual representations for all input strings, we say that $\langEnc_1$ and $\langEnc_2$ are \defn{equivalent}.
Moreover, we say that two sets of models are equivalent if each model in one set is equivalent to at least one model in the other set.

\paragraph{Model equivalence and variable-length outputs.}
Sometimes, we will compare the expressivity of models that produce outputs of different lengths, for example when comparing the expressivity of padded and non-padded models, or comparing \CoTAcr-augmented transformers with non-augmented ones.
This makes direct comparison of contextual representations more difficult.
Whenever this is the case, we will explicitly state how the outputs of different lengths are aligned.
For example, when comparing a model that produces outputs of length $\strlen$ with a model that produces outputs of length $K \strlen$ for some constant $K \in \N$, we may assume that the first $(K - 1) \strlen$ positions of the longer output are used for intermediate computations and only the last $\strlen$ positions are used to produce the final output.
Thus, we will only compare the last $\strlen$ positions of the longer output with the $\strlen$-length output of the shorter model and base model equivalence on that.

\subsubsection{Sampling from a (transformer) language model} \label{app:sampling-from-language-models}
The next-symbol and infilling probabilities from \cref{app:language-models-and-next-symbol-predictors} can be used to sample from the LM by sampling from $\pLMFun{\cdot \mid \str}$ or $\pLMInfFun{\cdot \mid \str}$, respectively.\footnote{In practice, the raw probabilities are often post-processed with temperature scaling or sampling adaptors \citep{amini-etal-2023-generating}. The framework described here can easily be adapted to those settings.}
There are many possible ways to implement the sampling.
In this paper, we assume that it is performed using the Gumbel-max trick \citep{pmlr-v97-oberst19a}, which both provides a convenient and fast implementation as well as interpretable traces of the sampling procedure \citep{chatzi2024counterfactual,ravfogel2025gumbel,benz2025evaluation}.
\begin{definition}[Gumbel-max sampling] \label{def:gumbel-max-sampling}
   Let $\vl \defeq \outMtx \hiddState$ for $\hiddState\in\R^{\hiddDim}$ and $\outMtx \in \R^{\eosnsymbols \times \hiddDim}$ be the logits of a probability distribution over $\eosalphabet$.
   The \defn{Gumbel-max sampling} is defined as
   \begin{equation} \label{eq:gumbel-max-sampling}
      \sym = \argmax_{\sym' \in \eosalphabet} (\evp_{\sym'} + g_{\sym'}),
   \end{equation}
   where $g_{\sym'}$ are i.i.d. samples from the Gumbel distribution with the cumulative distribution function $F(x) = \exp(-\exp(-x))$ for $x \in \R$.
\end{definition}
It is well known that \cref{eq:gumbel-max-sampling} results in samples from $\softmaxFun{\vl}{}$.
The Gumbel-max sampling can be used to either sample the next symbol $\sym_{\strlen + 1}$ from the next-symbol distribution $\pLMFun{\cdot \mid \str}$ or to sample the masked symbol $\sym_\stridx$ from the infilling distribution $\pLMInfFun{\cdot \mid \str}$.

We rely on the Gumbel-max sampling because it conveniently decouples sampling from the model's representations.
In particular, provided that a model is able to implement the $\argmax$ operation and is able to receive the Gumbel noise as an input, it can sample from the model's distribution without any additional operations.
This will provide a convenient way to precisely link different modeling frameworks in a unified manner.

\subsection{Chain-of-thought reasoning} \label{app:cot}

At a high level, chain-of-thought (\CoTAcr) transformers process a string by generating intermediate reasoning steps. 
These steps can be seen as a generated string itself or as an intermediate thinking process that is used to generate the final output.
\begin{definition}[\CoTAcr Generation]
   A causally masked transformer implementing the next-symbol distribution $\pLM \colon \strings \to \probOverFun{\eosalphabet}$ \defn{generates} strings $\unmsym_1 \cdots \unmsym_\timesteps$, given $\str \in \strings$, as 
   \begin{equation}
      \unmsym_t \sim \pLM(\cdot \mid \str \circ \unmsym_1 \cdots \unmsym_{t-1}) 
   \end{equation}
   where $\unmsym_\timesteps = \eos$ and $\unmsym_t \neq \eos$ for $t < \timesteps$.
\end{definition}

While some existing work analyzes the distributions induced by \CoTAcr transformers \citep{nowak-etal-2024-representational,xu2025cotloopformalcomparison}, much of the existing literature \citep{JMLR:v22:20-302,feng2023towards,merrill2024the,li2024chain,saunshi2025reasoninglatentthoughtspower} focuses on modeling string \defn{acceptance} by \CoTAcr transformers by determinizing $\pLM$.
\begin{definition}[\CoTAcr Acceptance]
   A causally masked transformer implementing the next-symbol predictor $\decoder\colon \strings \to \acceptAlphabet$ generates the sequence of reasoning steps 
   \begin{equation}
      \unmsym_t \defeq \decoder(\str \circ \unmsym_1 \cdots \unmsym_{t-1}) 
   \end{equation}
   for $\str \in \strings$, $t \in \NTo{\timesteps}$, and a pre-determined $\timesteps \in \N$.
   $\decoder$ \defn{accepts} a string $\str$ in $\timesteps$ steps if $\unmsym_\timesteps = \acceptSym$ and rejects it if $\unmsym_\timesteps = \rejectSym$.
\end{definition}

To facilitate a more convenient connection to \MDMsAcr, we introduce a \emph{parallel} chain-of-thought process that can be seen as a generalization of \CoTAcr transformers, one that generates multiple symbols at once.
\begin{definition}[Parallel chain of thought (\pCoTAcr)]
   Let $\predictor \colon \kleene{\maskAlphabet} \to \maskAcceptAlphabet$ be a causally masked transformer symbol predictor and $\timesteps, \padlen' \in \N$.
   A \defn{parallel chain-of-thought transformer} $\decoderpar\colon \strings \to \kleene{\maskAlphabet}$ processes a string $\str$ from $\strings$ for $t \in \NTo{\timesteps}$ as follows:
   \begin{equation}
      \thinkstr^{(t)}_\stridx \sim \predictor(\str \circ \thinkstr^{(1)} \circ \cdots \circ \thinkstr^{(t - 1)} \circ \underbrace{\maskSym \cdots \maskSym}_{\padlen'})_{\strlen + (t - 1) \padlen' + \stridx}  \text{\quad for } \stridx \in \NTo{\padlen'}.
   \end{equation}
   Whenever $\timesteps$ and $\padlen$ are clear from the context, we write $\decoderpar(\str) \defeq \thinkstr^{(0)} \circ \cdots \circ \thinkstr^{(\timesteps)}$.
\end{definition}

A \pCoTAcr transformer generates strings in $\strings$ by running the \pCoTAcr process on the empty string $\eps$ and outputting $\decoderpar(\eps)$.
\begin{definition}[Acceptance by a \pCoTAcr Transformer]
   We say that a \pCoTAcr transformer \defn{accepts} a string $\str$ if there exist $\timesteps, \padlen' \in \N$ such that it holds for $\thinkstr^{(1)} \circ \cdots \circ \thinkstr^{(\timesteps)} = \decoderpar(\str)$ that $\thinkstr^{(\timesteps)}_{\padlen'} = \acceptSym$. 
   $\decoderpar$ rejects $\str$ if $\thinkstr^{(\timesteps)}_{\padlen'} = \rejectSym$. 
\end{definition}
We denote the class of languages recognizable by a \pCoTAcr transformers that generate $\padlen'$ symbols at a time for $\timesteps$ steps as $\pCoTClass[\timesteps, \padlen]$, where $\padlen \defeq \padlen' \timesteps$ is the total number of generated symbols.

The following relationships between \CoTAcr and \pCoTAcr transformers are clear.
\begin{restatable}[\CoTAcr and \pCoTAcr]{proposition}{CoTAsPCoTProp} \label{prop:cot-as-pcot}
   We have $\pCoTClass[\timesteps, \timesteps] = \CoTClass[\timesteps]$ and $\pCoTClass[\timesteps, \padlen] \subseteq \CoTClass[\padlen]$.
\end{restatable}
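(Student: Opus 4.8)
The plan is to prove the two relationships separately, in each case reducing the comparison to the only two structural differences between the frameworks: the number of symbols emitted per step, and whether a step reads a \emph{next-symbol} logit at the last real position (as $\decoder$ does, \cref{def:decoding-step}) or an \emph{infilling} logit at a trailing $\maskSym$ (as $\decoderpar$ does). Both sides are causally masked transformer families, so the content of each construction is to reconcile these two differences without changing the step count or the committed output, which is all that matters under the alignment convention of \cref{app:model-equivalence}.

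For the equality $\pCoTClass[\timesteps, \timesteps] = \CoTClass[\timesteps]$, I would first note that $\padlen = \timesteps$ forces $\padlen' = \padlen/\timesteps = 1$, so $\decoderpar$ emits exactly one symbol per step and its trace has the same block-by-block shape as a \CoTAcr trace: at step $t$ both condition on $\str$ together with the $t-1$ previously committed symbols, and both declare acceptance from the final emitted symbol. Only the read-position difference remains, which I would record as a small interchange lemma for causally masked families. For $\CoTClass[\timesteps] \subseteq \pCoTClass[\timesteps,\timesteps]$, the key observation is that causal masking leaves the representations of all real positions unchanged when a single $\maskSym$ is appended; running the \CoTAcr layers on $\str' \circ \maskSym$ therefore reproduces the last real symbol's representation at position $|\str'|$ verbatim, and I would add one attention layer in which the $\maskSym$ at position $|\str'|+1$ attends solely to position $|\str'|$ (selectable from the \LUniform positional encodings of \cref{def:uniform-transformers}) and copies that representation, so the output layer emits the same logits and hence, under $\argmax$ decoding, the same symbol. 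For the reverse inclusion I would, at position $|\str'|$, reconstruct the computation that a $\maskSym$ at position $|\str'|+1$ would perform: since the mask embedding and the positional encoding of $|\str'|+1$ are constants computable from position, a head at $|\str'|$ can apply the mask's query against the keys of positions $1,\dots,|\str'|$, with the mask's uninformative self-term folded into the feed-forward as an additive constant.

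For $\pCoTClass[\timesteps, \padlen] \subseteq \CoTClass[\padlen]$, given $L$ realized by $\decoderpar$ emitting $\padlen' = \padlen/\timesteps$ symbols per step, I would simulate it by re-emitting the same $\padlen$ symbols one at a time over $\padlen$ steps, indexing the sequential outputs $v_1,\dots,v_\padlen$ by $v_{(t-1)\padlen'+j} = \thinkstr^{(t)}_j$; by the equality already established it suffices to match each $v_{(t-1)\padlen'+j}$ to $\thinkstr^{(t)}_j$. The crux is that $\decoderpar$ generates block $t$ in parallel, so $\thinkstr^{(t)}_j$ is conditioned only on the completed blocks $1,\dots,t-1$ and the trailing $\maskSym$'s of block $t$, and does \emph{not} see its in-block predecessors $\thinkstr^{(t)}_1,\dots,\thinkstr^{(t)}_{j-1}$, which are still masked, whereas a naive sequential simulator would already have committed them. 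I would remove this discrepancy by designing the predictor so that, for a query at position $p$, every key in the same block as $p$ but at a smaller offset is suppressed: augment queries and keys with a block-index feature (computable because the \LUniform encodings let each position recover its block $\lceil (p-\strlen)/\padlen' \rceil$ and offset through the feed-forward) and shape the scores so a same-block predecessor receives $-\maxFNum$, i.e.\ is treated exactly as the $\maskSym$ it replaced. An induction on $p$ then gives $v_p = \thinkstr^{(t)}_j$ throughout, so in particular the final symbol agrees and the recognizers accept the same strings.

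I expect the main obstacle to be precisely this query-dependent, block-structured masking in the second claim: unlike plain causal masking, a key's visibility depends jointly on the query's and the key's block, so the suppression must be realized inside the attention scores rather than by a fixed triangular mask. A secondary technical point is the softmax-normalization bookkeeping in the infilling-to-next-symbol direction, where the phantom mask contributes a known constant to the denominator. Both are handled by standard attention and positional gadgets and change neither the step budget ($\timesteps$, respectively $\padlen$) nor the committed output length, so the stated inclusions hold.
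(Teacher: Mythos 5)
The paper offers no proof of this proposition---it is asserted as ``clear'' in \cref{app:cot}---so there is no argument of record to compare yours against; what follows assesses your construction on its own terms. You correctly isolate the two genuine discrepancies (the infilling-versus-next-symbol read position, and the fact that block $t$ of a \pCoTAcr trace is generated with its in-block predecessors still masked), but your resolution of the second one is wrong as stated. Assigning score $-\maxFNum$ to a same-block predecessor does not make it behave ``exactly as the $\maskSym$ it replaced'': in the \pCoTAcr run, the position at offset $i<j$ of block $t$ is \emph{visible} to the query at offset $j$ and contributes a key and value derived from the $\maskSym$ embedding and its positional encoding---it enters every attention head's weighted average and its softmax denominator, and across layers it can aggregate information from the real prefix and relay it to later offsets (precisely the pause-token effect the paper cites from \citet{london2025pausetokensstrictlyincrease}). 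Suppressing it therefore simulates a \emph{different} predictor, whose committed blocks, and hence whose final accept/reject symbol, may differ; the inclusion $\pCoTClass[\timesteps,\padlen]\subseteq\CoTClass[\padlen]$ does not follow from your argument.

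The repair is not a routine gadget. What is needed is a ``shadow'' channel in which each position $q$ at offset $i$ of block $t$ also carries the representation it had \emph{when it was still a mask} (computable layer-by-layer from the real prefix through block $t-1$ together with the shadow representations of same-block positions at offsets $\le i$), plus an attention mechanism that presents real keys/values to queries from later blocks but shadow keys/values to queries from the same block. Since which of the two a key should present depends jointly on the query's and the key's block indices, this cannot be realized by a fixed additive mask, and splitting it across two heads breaks the joint softmax normalization---so either a genuinely new construction or an explicit restriction of the predictor class is required. The same obstruction appears in miniature in what you call a ``secondary technical point'': the phantom mask's self-term cannot be folded into the feed-forward as an additive constant, because it sits inside the softmax denominator rather than outside the attention average, and this affects the $\pCoTClass[\timesteps,\timesteps]\subseteq\CoTClass[\timesteps]$ direction of the equality as well. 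I would either prove the proposition for predictors that ignore mask positions and argue separately that this is without loss of generality (which is not obvious), or present the shadow-stream construction in full.
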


\subsection{Padded looped transformers} \label{app:looped-padded-transformers}

Looped (or universal) transformers use a fixed block of transformer layers that is applied repeatedly to the input string \citep{dehghani2019universaltransformers}.
This increases the depth of the model, enabling more complex reasoning by applying layers multiple times, and does not increase the model size, as the same block is reused for each iteration, thus reducing the memory footprint and computational cost \citep{bae2025mixtureofrecursionslearningdynamicrecursive}.
We define looped transformers as follows.
\begin{definition}[Looped transformer (\LTAcr)] \label{def:looped-transformer}
   Let $\numlayers, \timesteps \in \N$ and let $1 \leq \layeridx_1 \leq \layeridx_2 \leq \numlayers$.
   Given a depth-$\numlayers$ transformer, a \defn{looped transformer} (\LTAcr) computes symbol contextual representations $\hiddMtx$ by
   \begin{enumerate}[label=(\arabic*.)]
      \item Computing the initial hidden states $\hiddMtx^{(0)}$ for the input string $\str = \sym_1\cdots\sym_\strlen$ and computing $\hiddMtx^{(\layeridx_1)}$ as with the first $\layeridx_1$ layers of the transformer
      \item Applying the transformer layers $\layeridx_1 + 1, \ldots, \layeridx_2$ $\timesteps$ times to the hidden states $\hiddMtx^{(\layeridx_1)}$ to obtain $\hiddMtx^{(\layeridx_1 + \timesteps(\layeridx_2 - \layeridx_1))}$.
      \item Applying the transformer layers $\layeridx_2 + 1, \ldots, \numlayers$ to the hidden states $\hiddMtx^{(\layeridx_1 + \timesteps(\layeridx_2 - \layeridx_1))}$ to obtain the final representations $\hiddMtx$ that are passed to the output layer.
   \end{enumerate} 
\end{definition}
The representations $\hiddMtx$ can then be used in the same way as described in \cref{app:language-models-and-next-symbol-predictors}.

The dynamic computational depth of \LTsAcr endows them with the ability to perform more complex reasoning tasks by iteratively refining their hidden states over multiple timesteps.
Importantly, these reasoning steps include both sequential and parallel processing of the input symbols, allowing for both parallel efficiency and depth in the reasoning process.

Padded transformers additionally pad the input string with padding (pause) symbols.
\begin{definition}[Padded Transformer] \label{def:padded-transformer}
   Given $\padlen \in \N$, a \defn{padded transformer} $\tf$ transformer computes the contextual representations $\residStream$ of a string $\str \in \strings$ by processing the padded input $\str \circ \underbrace{\padSym \cdots \padSym}_{\padlen}$ (possibly by looping, cf. \cref{def:looped-transformer}), where $\padSym \notin \alphabet$ is a designated padding symbol.
\end{definition}
Instead of being restricted to the contextual representations of the $\strlen$ input symbols, a padded transformer can determine string membership or symbol probabilities based on the contextual representations of the $\padlen$ additional padded symbols as well.
This additional space can be used to perform more operations and is analogous to increasing the circuit width in circuit complexity.

Padding and looping together increase the expressivity of transformers.
\begin{restatable}[Expressivity of padded looped transformers]{remark}{LoopedTransformersCharacterization} \label{thm:looped-transformers-characterization}
   The following characterizations of padded looped transformers are known:
   \begin{enumerate}[label=(\arabic*)]
      \item $\text{Regular languages} \subseteq \LTClass[\log\strlen, 0]$ \citep[][Thm. 5.1]{saunshi2025reasoninglatentthoughtspower}, 
      \item $\CoTClass[\timesteps] \preceq \LTClass[\timesteps, 0]$, where the width of the \LTAcr scales linearly with $\timesteps$ \citep[][Thm. 5.4]{saunshi2025reasoninglatentthoughtspower}, 
      \item $\LTClass[\log\strlen, \polyFun{\strlen}] = \ACZero$ \citep[][Thm. 4.1]{london2025pausetokensstrictlyincrease}, 
      \item $\LTClass[\log\strlen, \polyFun{\strlen}] = \TCZero$, where, in contrast to our model, the precision $\numPrec$ \emph{scales} logarithmically with string length $\strlen$ (\citet[][Thm. 4.5]{london2025pausetokensstrictlyincrease}, \citet[][Thm. 1]{merrill2025exactexpressivepowertransformers}),
      \item $\LTClass[\log^d{\strlen}, \polyFun{\strlen}] = \TCd$, where the precision $\numPrec$ \emph{scales} logarithmically with string length $\strlen$ \citep[][Thm. 3]{merrill2025exactexpressivepowertransformers}. 
   \end{enumerate}
\end{restatable}

\paragraph{Stochastic padded looped transformers.}
The looping mechanism naturally accommodates the unmasking steps of \MDMsAcr.
However, unlike \LTAcr transformers, the \MDMAcr unmasking steps can be \emph{stochastic}---the predictor \emph{samples} the unmasked symbol from the infilling probability distribution defined by the transformer.
To offer a more suitable analogue to \MDMsAcr, we introduce \emph{stochastic} \LTsAcr, which receive Gumbel-distributed noise as additional input to each loop, mimicking the sampling process of \MDMsAcr.
\begin{definition}[Stochastic padded looped transformer] \label{def:stochastic-looped-transformer}
   A \defn{stochastic padded looped transformer} is a padded looped transformer in which $\hiddMtx^{(\layeridx_1 + t(\layeridx_2 - \layeridx_1))}$ is augmented by a matrix of Gumbel-distributed noise variables at each time step $t \in \NTo{\timesteps}$.
\end{definition}
The fact that the MLPs in a transformer layer can implement the $\argmax$ operation used for Gumbel sampling (cf. \cref{lem:mlp-for-argmax}) allows \LTsAcr to implement the planner and predictor of an \MDMAcr as we detail later (cf. \cref{thm:lts-can-simulate-mdms}).

Including stochasticity in \LTsAcr is required for a natural connection to stochastic models such as \MDMsAcr and \CoTAcr transformers.
While this departs from the standard definitions of \LTsAcr, it is a natural extension that allows us to capture the stochastic nature of \MDMsAcr while retaining the looping structure.
Since the Gumbel noise is assumed to come from an external source in the sampling procedure of \MDMsAcr and \CoTAcr transformers, we believe adding it as input to \LTsAcr is natural.

\subsection{Our idealization of masked diffusion models} \label{app:idealization}

In the following, $\maskAlphabet \defeq \alphabet \cup \{\maskSym\}$ where $\maskSym \notin \alphabet$ is the mask symbol and $\acceptAlphabet \defeq \alphabet \cup \{\rejectSym, \acceptSym\}$, where $\rejectSym$ and $\acceptSym$ are the reject and accept symbols.

\begin{definition}[Planner]
   A \defn{planner} is a length-preserving function $\planner\colon \kleene{\maskAlphabet} \to \probOver(\{\rejectSym, \acceptSym\})$.
   We distinguish two cases:
   \begin{enumerate}[label=\textbf{Case \arabic*:},leftmargin=*]
      \item An \defn{unrestricted planner} is a planner that can choose to resample any symbol.
      \item A \defn{mask dominated} planner is one where, if $\sym_\stridx \neq \maskSym$, then $\mleft(\planner(\str)_\stridx\mright)_{\rejectSym} = 1$, i.e., $\planner$ never tries to resample already unmasked symbols.
   \end{enumerate} 
   A \defn{deterministic} planner is a length-preserving function $\planner\colon \kleene{\maskAlphabet} \to \kleene{\{\rejectSym, \acceptSym\}}$.
\end{definition}

\begin{definition}[Symbol Predictor]
   A \defn{symbol predictor} is a length-preserving function $\predictor\colon \kleene{\maskAlphabet} \to \kleene{\probOver(\acceptAlphabet)}$.
   A \defn{deterministic} symbol predictor is a length-preserving function $\predictor\colon \kleene{\maskAlphabet} \to \kleene{\acceptAlphabet}$.
\end{definition}

\begin{definition}[Masked Diffusion Model]
   Given a planner $\planner$ and a symbol predictor $\predictor$, we call $\mdmModel = (\planner, \predictor)$ a \defn{masked diffusion model} (\MDMAcr).
\end{definition}

An \MDMAcr with a mask dominated planner corresponds closely to standard \MDMsAcr, where at each step, a subset of masked positions is selected for unmasking and then filled in.
For example, setting the planner to implement uniformly random unmasking recovers the true reverse process of an \MDMAcr.
We, however, additionally allow for unrestricted planners, which can also choose to \emph{resample} already unmasked positions and we take this to be our default setting---whenever we refer to an \MDMAcr, we mean one with an unrestricted planner.
While this departs from standard \MDMAcr formulations, it allows us to sidestep the limitations of the inability of the \MDMAcr to ``change'' its decisions and thus to correct earlier mistakes.
This has recently been identified as a key limitation of \MDMsAcr and is the focus of much recent work on improving \MDMsAcr \citep[][\textit{inter alia}]{rutte2025generalized,song2025seeddiffusionlargescalediffusion}.

\MDMsAcr generate strings by iteratively unmasking and filling in symbols over a series of discrete denoising steps.
We call this the \defn{unmasking process}.
\begin{definition}[Unmasking process]
   Let $\mdmModel$ be an \MDMAcr and $\timesteps, \padlen \in \N$. 
   Given a string $\str \in \strings$, $\mdmModel$ generates the string  $\mdmModel(\str) \defeq \unmstr^{(\timesteps)}$ as follows for $t \in \NTo{\timesteps}$:
   \begin{subequations}
      \begin{align}
         \unmstr^{(0)} &= \underbrace{\maskSym \cdots \maskSym}_{\padlen} \\
         \stru^{(t)} &\sim \planner(\str \circ \unmstr^{(t - 1)}) \\
         \unmstr^{(t)}_\stridx &=
         \begin{cases}
            \unmsym^{(t - 1)}_\stridx & \ifcondition \stru^{(t)}_\stridx = 0 \\
            \unmsym_\stridx \sim \predictor(\str \circ \unmstr^{(t - 1)})_{\strlen + \stridx} & \otherwisecondition
         \end{cases} \text{\quad for } \stridx \in \NTo{\padlen}
      \end{align}
   \end{subequations}
\end{definition}
By running the \MDMAcr on the empty string $\eps$ and outputting $\mdmModel(\eps)$ (constraining the generated symbols to $\alphabet \subsetneq \acceptAlphabet$), an \MDMAcr generates strings in $\strings$ and thus defines an LM.

We can also use the \MDMAcr to define a membership test for languages by using the unmasking process to emit reasoning (intermediate) computations and looking at the final symbol of the generated string.
\begin{definition}[Acceptance by an \MDMAcr]
   We say that the diffusion model $\mdmModel$ with a deterministic $\planner$ and $\predictor$ \defn{accepts} a string $\str\in \strings$ if there exist $\timesteps, \padlen \in \N$ such that it holds for $\unmstr^{(\timesteps)} = \mdmModel(\str)$ that $\unmsym^{(\timesteps)}_\padlen = \acceptSym$. 
   $\mdmModel$ rejects $\str$ if $\unmsym^{(\timesteps)}_\padlen = \rejectSym$. 
\end{definition}

\paragraph{Transformer \MDMsAcr.}
We focus on \MDMsAcr where both the planner and predictor are implemented by transformers.
\begin{definition}[Transformer planner and predictor]
   A \defn{transformer planner} is a planner $\kleene{\maskAlphabet} \to \probOver(\{\rejectSym, \acceptSym\})$ where the logits over $\{\rejectSym, \acceptSym\}$ are computed by a transformer.
   
   A \defn{transformer predictor} is a predictor $\predictor\colon \kleene{\maskAlphabet} \to \kleene{\acceptAlphabet}$ where the logits over $\acceptAlphabet$ at each position are computed by a transformer.
\end{definition}

\section{A discussion of the theoretical model} \label{app:theoretical-model} 

Our formalization of \MDMsAcr, centered around the planner and predictor, is motivated by the need to analyze their expressivity.
While this definition intentionally departs from the exact analytical reverse process, it remains well-grounded in both empirical practice and theoretical considerations.
In the following, we justify our modeling choices and clarify their connections to practical implementations.

\subsection{The planner as a principled departure from uniform unmasking} \label{app:planner-centrality}

At an intuitive level, our formalism distills the essential structure of practical \MDMsAcr: A process that iteratively unmasks and fills in symbols.
In our idealization, this process is governed by a model that chooses which symbols to unmask and when.
This is motivated by practical considerations---although the analytical reverse process of the \MDMAcr forward masking process would unmask symbols uniformly at random, this is not how practical \MDMsAcr operate, since they empirically benefit from strategic, context-aware unmasking \citep{ghazvininejad-etal-2019-mask,peng2025pathplanningmaskeddiffusion,zheng2024reparameterizeddiscretediffusionmodel,liu2025thinkgeneratediscretediffusion,kim2025trainworstplanbest,benhamu2025acceleratedsamplingmaskeddiffusion}.
Forcing a trained \MDMAcr to follow a uniform unmasking schedule is thus suboptimal---it prevents the model from decoding in an order that makes tasks easier to solve \citep{kim2025trainworstplanbest} and can lead to uncontrolled error propagation \citep{ghazvininejad-etal-2019-mask}.
The dedicated planner captures many empirically successful planned unmasking strategies that consistently outperform random unmasking such as:
\begin{itemize}
   \item \textbf{Confidence-based unmasking} that unmask based on the model's prediction confidence.
   \item \textbf{Difficulty-based scheduling} that masks informative symbols longer so they are generated last with maximum context \citep{he-etal-2023-diffusionbert}.
   \item \textbf{Structured schedules} such as blockwise-autoregressive unmasking \citep{nie2025largelanguagediffusionmodels}.
   \item \textbf{Learned planners} that use a separate trained model to guide unmasking decisions \citep{peng2025pathplanningmaskeddiffusion}.
   \item \textbf{Confidence thresholding} that unmasks symbols whose confidence lies above a theory-suggested threshold \citep{wu2025fastdllmtrainingfreeaccelerationdiffusion}.
\end{itemize}

Theoretically, uniform unmasking is only optimal if the symbol predictor is perfect and unfactorized, which is not the case in practice \citep{peng2025pathplanningmaskeddiffusion}.
Furthermore, under a factorized backward process, uniform unmasking is computationally inefficient even with a perfect predictor.
It requires at least a linear number of denoising steps in the string length to capture dependencies in even simple formal languages, and the KL divergence to the true data distribution converges slowly \citep{feng2025theoreticalbenefitlimitationdiffusion,li2025convergencetheorydiffusionlanguage}.
This lower bound negates any potential speed benefit over autoregressive models.
A planner is therefore a necessary component for efficient, accurate generation, as it provides an educated choice of which symbols to unmask and generate in parallel, mitigating the weakness of the independence assumption.
Frameworks such as that by \citet{zheng2024reparameterizeddiscretediffusionmodel} and augmented \MDMAcr evidence lower bounds that incorporate explicit planner terms \citep{peng2025pathplanningmaskeddiffusion,liu2025thinkgeneratediscretediffusion} demonstrate that a planner can be a principled, optimizable component of the generative process.

To further justify our choice of modeling the planner as a separate component, we show that any \MDMAcr that can be implemented as a combination of a planner and a predictor can in fact be implemented by a single transformer that unmasks symbols based on their confidence---$\topk$ decoding.
This means that all our results apply to the popular model of unmasking symbols based on their confidence \citep{ghazvininejad-etal-2019-mask,peng2025pathplanningmaskeddiffusion,zheng2024reparameterizeddiscretediffusionmodel,liu2025thinkgeneratediscretediffusion,kim2025trainworstplanbest,benhamu2025acceleratedsamplingmaskeddiffusion}.
\begin{definition}[$\topk$ unmasking]
   Let $\tf$ be a transformer and $k \in \N$. 
   The \defn{$\topk$ unmasking process} of $\tf$ is defined by the planner of the form 
   \begin{equation}
      \planner(\unmstr)_\stridx = 
      \begin{cases}
         \acceptSym & \ifcondition \stridx \in \topkFun{\tf(\unmstr)} \\
         \rejectSym & \otherwisecondition
      \end{cases}
      \text{ for } \stridx \in \NTo{\strlen},
   \end{equation}
   where $\unmstr \in \kleene{\maskAlphabet}$ with $\strlen = |\unmstr|$ and $\topkFun{\tf(\unmstr)}$ selects the $k$ positions in $\NTo{\strlen}$ with the largest maximal logits in $\tf(\unmstr)$.
   The predictor is defined as
   \begin{equation}
      \predictor(\unmstr)_\stridx = \pLMInfFun{\sym \mid \unmstr}
      \text{ for } \stridx \in \NTo{\strlen}.
   \end{equation}
   where $\pLMInfFun{\sym \mid \unmstr}$ is the infilling probability distribution of the $\stridx\textsuperscript{th}$ token defined by $\tf$.
\end{definition}
\begin{restatable}[Combining the planner and predictor]{theorem}{SingleModelEquivalenceThm} \label{thm:single-model-equivalence}
   Let $\mdmModel = (\planner, \predictor)$ be a masked diffusion model with a planner $\planner$ and a symbol predictor $\predictor$.
   Then, there exists a transformer $\tf$ such that it holds for any $\unmstr \in \kleene{\maskAlphabet}$ with $\strlen = |\unmstr|$ and $\stridx \in \NTo{\strlen}$ that
   \begin{subequations}
      \begin{align}
         \stridx \in \topkFun{\tf(\unmstr)} &\iff \planner(\unmstr)_\stridx = \acceptSym \\
         \pLMInfFun{\sym \mid \unmstr} &= \predictor(\unmstr)_{\stridx}
      \end{align}
   \end{subequations}
   where $\pLMInfFun{\sym \mid \unmstr}$ denotes the infilling probability distribution of the $\stridx\textsuperscript{th}$ token defined by $\tf$.
\end{restatable}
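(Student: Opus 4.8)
The plan is to construct $\tf$ by running the predictor transformer and the planner transformer \emph{in parallel} inside a single transformer, and then to attach one output layer that serves both roles at once. The key observation making this possible is that the infilling distribution $\pLMInfFun{\cdot \mid \unmstr}$ at a position depends only on the \emph{differences} among the output logits at that position (the softmax is invariant to adding a common scalar to every logit), whereas the $\topk$ selection depends only on the \emph{maximum} logit per position. These two quantities can be controlled independently: I use the predictor block to fix the logit differences (hence the distribution) and the planner block to fix a common per-position additive offset (hence the confidence that drives $\topk$). Throughout, I assume, as is implicit in comparing against a $\topk$ rule, that $\planner$ accepts exactly $k$ positions on each input.

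First I would form the parallel composition. Let $\tf^\predictor$ and $\tf^\planner$ be the transformers underlying the predictor and planner. I take $\tf$ to have width $\hiddDim = \hiddDim_\predictor + \hiddDim_\planner + \bigOFun{1}$, allocating the first $\hiddDim_\predictor$ residual coordinates to the predictor, the next $\hiddDim_\planner$ to the planner, and a few bookkeeping coordinates. The symbol embedding and positional encoding of $\tf$ are the direct sum of those of $\tf^\predictor$ and $\tf^\planner$, so both blocks receive the same input $\unmstr$. Each attention layer of $\tf$ uses the disjoint union of the heads of the two models, with value and output projections (and the MLPs) block-structured so that the predictor heads read from and write to only predictor coordinates, and likewise for the planner; since multi-head attention sums per-head contributions, the two blocks evolve exactly as in $\tf^\predictor$ and $\tf^\planner$ and never interfere. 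If the depths differ, I pad the shallower model with identity layers (an attention/MLP block whose contribution through the residual connection is zero). Thus, after the stacked layers, the residual stream at position $\stridx$ holds the predictor's and the planner's final states side by side.

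Next I would extract the two scalars needed by the output layer. Using an MLP on the planner block (\cref{lem:mlp-for-argmax}), I compute the binary accept indicator $\alpha_\stridx \defeq \ind{\planner(\unmstr)_\stridx = \acceptSym}$ as an $\argmax$/threshold of the planner logits over $\set{\rejectSym, \acceptSym}$ and store it in a dedicated coordinate; using an MLP on the predictor block I compute the per-position maximum $m_\stridx \defeq \max_{\sym} \vl^\predictor_{\stridx, \sym}$ of the predictor logits $\vl^\predictor_\stridx$ and store it as well. The output layer then produces, at position $\stridx$, the logit vector
\begin{equation}
   \vl_\stridx \defeq \vl^\predictor_\stridx - m_\stridx \one + M\, \alpha_\stridx\, \one,
\end{equation}
which is linear in the residual stream and hence realizable by $\outMtx$. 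By shift-invariance of the softmax, $\softmax(\vl_\stridx) = \softmax(\vl^\predictor_\stridx) = \predictor(\unmstr)_\stridx$, giving the second claim $\pLMInfFun{\sym \mid \unmstr} = \predictor(\unmstr)_\stridx$. Simultaneously, $\max_{\sym} \vl_{\stridx, \sym} = M \alpha_\stridx$, so every accepted position has confidence $M$ and every rejected position has confidence $0$; choosing any representable $M > 0$ makes the accepted positions exactly the $k$ largest, i.e., $\topkFun{\tf(\unmstr)} = \set{\stridx : \planner(\unmstr)_\stridx = \acceptSym}$, which is the first claim.

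The main obstacle is making the output layer honor both requirements \emph{simultaneously under finite-precision arithmetic}. The subtraction of $m_\stridx$ is what guarantees a strict, range-safe separation between accepted and rejected confidences: without it the raw predictor logits could themselves reach $\maxFNum$ and tie with the boosted accepted positions, while a naive additive boost would have to exceed the full spread of the logits and hence saturate. The needed facts---that an MLP can compute the argmax/threshold and the coordinatewise maximum, and that the idealized (numerically stabilized) softmax is exactly shift-invariant---are the technical crux; the remaining alphabet bookkeeping (identifying the output alphabet of $\tf$ with $\acceptAlphabet$ and restricting the infilling distribution to $\alphabet$) and the parallel-composition details are routine.
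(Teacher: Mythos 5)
Your proposal is correct and follows essentially the same route as the paper's proof: run $\planner$ and $\predictor$ in parallel in disjoint blocks of the residual stream, extract the planner's accept decision with the $\argmax$ MLP of \cref{lem:mlp-for-argmax}, and then apply a per-position additive shift to the predictor's logits so that $\topk$ tracks the planner while the softmax (by shift-invariance) still reproduces $\predictor(\unmstr)_\stridx$. The only difference is cosmetic --- the paper penalizes rejected positions by subtracting $\maxFNum$, whereas you normalize by the per-position maximum and boost accepted positions, which is arguably the more saturation-safe variant of the same idea --- and you make explicit the (implicit) assumption that $\planner$ accepts exactly $k$ positions.
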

\begin{proof}
   The construction of $\tf$ combines the planner $\planner$ and predictor $\predictor$ into a single model that predicts the next symbol based on the planner's decision and the predictor's distribution.
   In particular, $\tf$ runs $\planner$ and $\predictor$ in parallel. 
   The planner's output decision $\planner(\unmstr)_\stridx$ of $\acceptSym$ or $\rejectSym$ can be made based on implementing the $\argmax$ function with an MLP (cf. \cref{lem:mlp-for-argmax}) if the planner is deterministic or by inserting the noise from the Gumbel-max sampling (cf. \cref{def:gumbel-max-sampling}) if the planner is probabilistic.
   $\tf$ can then use this information to  down-weight the logits of that are chosen not to be unmasked by the predictor by subtracting $\maxFNum$ from the logits of the symbols that are not chosen by the planner.
   This ensures that only the symbols that are chosen by the planner can be predicted by the predictor.
   The predictor $\predictor$ can in parallel compute its residual stream and the accompanying logits before combining them with the planner's decisions. 
   The subtraction of the $\maxFNum$ from the logits of the symbols that are not chosen by the planner ensures that $\topk$ will only select the symbols that are chosen by the planner and the simulation of the predictor ensures that the infilling distributions match.
\end{proof}

\subsection{Editing and non-editing \MDMsAcr.} \label{app:editing-mdms}

The choice to allow the \MDMAcr to resample already unmasked symbols also departs from the analytical reverse process of \MDMsAcr, which only unmask masked symbols.
This is, however, a principled choice that allows the \MDMAcr to correct earlier mistakes and is supported by empirical evidence that resampling already unmasked symbols can improve generation quality \citep{rutte2025generalized}.
In practice, this choice enables more concise connections to \LTsAcr and \pCoTAcr transformers.
Moreover, all results in this work can easily be adapted to \MDMsAcr with mask dominated planners that do not resample already unmasked symbols.
This is justified by the following theorem that shows that any \MDMAcr with an unrestricted planner can be simulated by an \MDMAcr with a mask dominated planner by increasing the output space by a factor of $\timesteps$ to account for the inability to resample.
In the following, we refer to \MDMsAcr with mask dominated planners as \emph{simple} \MDMsAcr (\sMDMsAcr).
\begin{restatable}[\sMDMsAcr can simulate \eMDMsAcr]{theorem}{MDMsCanSimulateFPLTsThm} \label{thm:mdms-can-simulate-emdms}
   \begin{equation}
      \MDMClassEdit[\timesteps, \padlen] \subseteq \MDMClassSimple[\timesteps, \timesteps \padlen].
   \end{equation}
\end{restatable}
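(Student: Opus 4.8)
The plan is to simulate the editing \MDMAcr by \emph{unrolling its trajectory in space}: rather than overwriting positions in place, the simple \MDMAcr records each successive state $\unmstr^{(t)}$ of the editing \MDMAcr in its own fresh block of $\padlen$ output positions. Concretely, I would partition the $\timesteps\padlen$ output positions of the \sMDMAcr into $\timesteps$ consecutive blocks $B_1, \ldots, B_\timesteps$, each of size $\padlen$, with $B_t = \{(t-1)\padlen + 1, \ldots, t\padlen\}$, and maintain the invariant that after step $t$ of the \sMDMAcr, block $B_t$ holds exactly the string $\unmstr^{(t)}$ produced by the editing \MDMAcr after its $t$-th denoising step. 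The implicit all-mask state $\unmstr^{(0)}$ needs no storage.

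For the planner of the \sMDMAcr at step $t$, I would have it select \emph{all} positions of the first block that is still fully masked. Since blocks $B_1, \ldots, B_{t-1}$ are already filled and $B_t, \ldots, B_\timesteps$ are masked at the start of step $t$, this selects exactly $B_t$; as every position in $B_t$ is masked, this planner is mask dominated, as required. Detecting the first fully-masked block is a finite-precision transformer computation given block boundaries supplied by the (logspace-computable) positional encodings. The predictor writing $B_t$ would embed copies of the editing model's planner $\planner$ and predictor $\predictor$, both fed the reconstructed argument $\str \circ \unmstr^{(t-1)}$ read off from the input together with block $B_{t-1}$. For each logical position $\stridx \in \NTo{\padlen}$, it simulates the editing planner's resampling decision $\stru^{(t)}_\stridx$; if that decision is $0$ it copies the symbol $\unmsym^{(t-1)}_\stridx$ from $B_{t-1}$ by attention, and otherwise it emits a fresh sample from $\predictor(\str\circ\unmstr^{(t-1)})_{\strlen+\stridx}$. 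As in \cref{thm:single-model-equivalence}, the copy-versus-sample selection is realized by running both branches in parallel and down-weighting the unwanted logits by $\maxFNum$, with the planner's (possibly stochastic) decision implemented through Gumbel-max noise (\cref{def:gumbel-max-sampling}) and the $\argmax$ gadget of \cref{lem:mlp-for-argmax}. The external Gumbel noise available to the \sMDMAcr is routed to mirror exactly the sampling of the editing planner and predictor, coupling the two trajectories.

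By induction on $t$, the invariant $B_t = \unmstr^{(t)}$ then holds for all $t$, matching the editing \MDMAcr's unmasking recurrence step for step. Acceptance transfers directly: the editing \MDMAcr accepts when $\unmsym^{(\timesteps)}_\padlen = \acceptSym$, which under this alignment is the last position $\timesteps\padlen$ of block $B_\timesteps$; invoking the convention of \cref{app:model-equivalence} that only the final $\padlen$ output positions carry the decision, the \sMDMAcr accepts on exactly the same strings. I expect the main obstacle to be ensuring the step-$t$ transformer reads \emph{only} $\str$ and the relevant previous block $B_{t-1}$, ignoring the already-written earlier blocks and the still-masked later ones, so that the embedded $\planner$ and $\predictor$ receive precisely the length-$(\strlen+\padlen)$ argument they expect. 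This block-selective reading must be driven by positional encodings identifying block indices and by attention that zeroes out irrelevant blocks; it is analogous to, and no harder than, the attention-restriction arguments used elsewhere in the paper, and it is exactly what accounts for the $\timesteps$-fold blow-up in output space, since a separate block is consumed for every denoising step the editing model takes.
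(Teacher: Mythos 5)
Your proposal is correct and follows essentially the same route as the paper's proof: unrolling the editing \MDMAcr's trajectory across $\timesteps$ fresh blocks of $\padlen$ positions, with a mask-dominated planner that selects the next all-masked block (the paper uses \cref{lem:select-block} for this) and a predictor that reconstructs $\str \circ \unmstr^{(t-1)}$ by attending to the most recently unmasked block while ignoring masked and stale positions. Your write-up is somewhat more explicit than the paper's about the copy-versus-resample branching and the Gumbel-noise coupling, but the decomposition and the source of the $\timesteps$-fold blow-up are identical.
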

\begin{proof}
   At a high level, the \sMDMAcr's planner selects the decoding space by selecting the next $\padlen$ positions to unmask, and the predictor
   \begin{enumerate*}[label=\textit{(\arabic*)}]
      \item reads the string generated so far,
      \item simulates a step of the \eMDMAcr transformer on the string, and
      \item writes the updated values into a \emph{new} portion of the padding space.
   \end{enumerate*}
   
   More precisely, we can implement the \sMDMAcr transformer as follows:
   \begin{enumerate}[label=(\arabic*)]
      \item The \sMDMAcr transformer uses $\timesteps \padlen$ masked symbols to store the generated symbols at each of the $\timesteps$ unmasking steps of the \eMDMAcr transformer.
      \item The planner is implemented as the transformer from \cref{lem:select-block} that acts independently of the input string and uses positional encodings to select the next $\padlen$ positions to unmask.
      \item The predictor is implemented as a transformer that predicts the values of the $\padlen$ masked symbols based on the current input string.
      It reads the values from the padding space by attending to the last $\padlen$ unmasked positions in the padding space analogous to the dump-decode-read mechanism from \cref{lem:idenitity-dump}.
      The only difference is that the predictor has to attend to the \emph{last} unmasked block of decoded values.
      This can be done with a two additional transformer layers by 
      \begin{enumerate*}[label=\textit{(\alph*)}]
         \item ignoring all masked symbols and
         \item ignoring positions with identical positional encodings to their right.
      \end{enumerate*} 
   \end{enumerate}   
\end{proof}

\section{Theoretical gadgets} \label{app:theoretical-gadgets} 
This section contains various theoretical gadgets that are used in the proofs of the main results.
Not all of these are novel; some are modified restatements from their original sources.

In the following, $\strlen \in \N$ always refers to the length of the original input string. 
If the string is additionally padded, the number of padding symbols is denoted by $\padlen$, meaning that the entire input to the transformer is of length $\strlen + \padlen$.

\subsection{Positional Encodings} \label{app:positional-encodings}
Uniquely identifying positions in a string requires the ``volume'' of the representation space to grow with the string length.
In the case of finite-precision logarithmic-width transformers, this is achieved with positional encodings that encode the binary representation of the position in the string.
The following lemma follows from the definition of fixed-point arithmetic, and the rounding and thresholding applied therein.
\begin{restatable}{lemma}{FinitePrecisionPropertiesOurVersion} \label{lem:our-finite-precision-properties}
   Let $x \in \F_\numPrec$ for some $\numPrec \in \N$ such that $x > \log{2} (\numPrec + 1)$.
   Then, it holds that
   \begin{subequations}
      \begin{align}
         \exp(x) &= \maxFNum, \\
         \exp(-x) &= 0.
      \end{align}
   \end{subequations}
\end{restatable}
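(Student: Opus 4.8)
The plan is to reduce both identities to elementary facts about the fixed-point rounding map $\roundOpFun{\cdot}$ and the range-clamping convention of $\F_\numPrec$, after first passing the hypothesis through the (real-valued) exponential. Throughout, I read the identities as statements about the \emph{fixed-point} exponential, i.e.\ the exact real exponential followed by the rounding/clamping of \cref{app:fixed-point-arithmetic}, and I interpret $\log 2\,(\numPrec+1)$ as $(\numPrec+1)\log 2 = \log\!\left(2^{\numPrec+1}\right)$.

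First I would translate the assumption. Since $\exp$ is strictly increasing and $x > (\numPrec+1)\log 2$, applying $\exp$ and $\exp(-\,\cdot\,)$ yields the two exact real bounds
\[
   \exp(x) > 2^{\numPrec+1} \qquad\text{and}\qquad 0 < \exp(-x) < 2^{-(\numPrec+1)},
\]
where positivity of $\exp(-x)$ follows because $x>0$. These describe the exact mathematical values prior to any rounding.

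For the first identity I would compare the exact value against $\maxFNum = 2^\numPrec - 2^{-\numPrec}$. As $2^{\numPrec+1} = 2\cdot 2^\numPrec > 2^\numPrec > \maxFNum$, the exact value $\exp(x)$ strictly exceeds the largest representable number $\maxFNum$. By the out-of-range convention (every value exceeding $\maxFNum$ is rounded to $\maxFNum$), the fixed-point result is exactly $\maxFNum$. For the second identity I would apply the round-to-nearest rule to $\exp(-x)$. The two relevant grid points of $\F_\numPrec$ are $0$ (with $a=0$) and the smallest strictly positive value $2^{-\numPrec}$ (with $a=1$), whose midpoint is $2^{-(\numPrec+1)}$. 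Since $0 < \exp(-x) < 2^{-(\numPrec+1)}$ strictly, the exact value lies strictly below this midpoint, so $0$ is the unique nearest grid point and $\roundOpFun{\exp(-x)} = 0$.

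The argument is a direct calculation, so I do not expect a serious obstacle; the only points needing care are bookkeeping ones. Specifically, I must confirm the inequalities are \emph{strict}, so that the round-to-nearest map returns a unique value and the tie-breaking clause (smaller absolute value) never activates, and I must correctly identify $2^{-\numPrec}$ (rather than $0$) as the nearest nonzero grid point so that the comparison midpoint is $2^{-(\numPrec+1)}$ — precisely the threshold produced by the hypothesis.
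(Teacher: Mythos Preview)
Your proposal is correct and takes essentially the same approach as the paper, which simply states that the lemma ``follows from the definition of fixed-point arithmetic, and the rounding and thresholding applied therein'' without spelling out details. Your argument unpacks precisely those definitions: clamping of out-of-range values for $\exp(x)$ and the round-to-nearest rule (with the midpoint $2^{-(\numPrec+1)}$ between the grid points $0$ and $2^{-\numPrec}$) for $\exp(-x)$; your reading of $\log 2\,(\numPrec+1)$ as $(\numPrec+1)\log 2$ is also the intended one, as it is the only interpretation under which the stated threshold yields the claimed bounds.
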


\cref{lem:finite-precision-properties,lem:pos-enc-properties} readily follow from \cref{lem:our-finite-precision-properties}.
\begin{restatable}[\text{\citet[][Lemmata E.1 and E.2]{li2024chain}}]{lemma}{FinitePrecisionProperties} \label{lem:finite-precision-properties}
   For $\maxFNum$ from \cref{app:fixed-point-arithmetic}, it holds that
   \begin{subequations}
      \begin{align}
         \exp(\maxFNum) &= \maxFNum, \\
         \exp(-\maxFNum) &= 0.
      \end{align}
   \end{subequations}
\end{restatable}
\begin{restatable}[\text{\citet[][Lem. E.3]{li2024chain}}]{lemma}{PosEncProperties} \label{lem:pos-enc-properties}
   For $\strlen \in \N$, $\stridx \in \NTo{\strlen}$, define the vectors $\vq_\stridx \in \R^{2 \ceil{\log\strlen}}$ and $\vk_\stridx \in \R^{2 \ceil{\log\strlen}}$ as follows:
   \begin{subequations}
      \begin{align}
         \vq_\stridx &\defeq \maxFNum \cdot (\interleaveFun{\sbinFun{\stridx}}{\one_{\ceil{\log\strlen}}}) \\
         \vk_{\stridx'} &\defeq \interleaveFun{\sbinFun{\stridx'}}{(-\one_{\ceil{\log\strlen}})}.
      \end{align}
   \end{subequations}
   Then, it holds that
   \begin{equation}
      \inner{\vq_\stridx}{\vk_{\stridx'}} =
      \begin{cases}
         0 & \ifcondition \stridx = \stridx' \\
         -\maxFNum & \otherwisecondition.
      \end{cases}
   \end{equation}
   Thus, 
   \begin{equation}
      \exp(\inner{\vq_\stridx}{\vk_{\stridx'}}) =
      \begin{cases}
         1 & \ifcondition \stridx = \stridx' \\
         0 & \otherwisecondition.
      \end{cases}
   \end{equation}
\end{restatable}
The following slight generalization of \cref{lem:pos-enc-properties} is also easy to show.
\begin{restatable}{lemma}{PosEncPropertiesGen} \label{lem:our-pos-enc-properties}
   For $\strlen \in \N$, $\stridx \in \NTo{\strlen}$, define the vectors $\vq_\stridx \in \R^{2 \ceil{\log\strlen}}$ and $\vk_\stridx \in \R^{2 \ceil{\log\strlen}}$ as follows:
   \begin{subequations}
      \begin{align}
         \vq_\stridx &\defeq \sfrac{\maxFNum}{m} \cdot (\interleaveFun{\sbinFun{\stridx}}{\one_{\ceil{\log\strlen}}}) \\
         \vk_{\stridx'} &\defeq \interleaveFun{\sbinFun{\stridx'}}{(-\one_{\ceil{\log\strlen}})}.
      \end{align}
   \end{subequations}
   Then, it holds that
   \begin{equation}
      \inner{\vq_\stridx}{\vk_{\stridx'}} =
      \begin{cases}
         0 & \ifcondition \stridx = \stridx' \\
         x & \otherwisecondition.
      \end{cases}
   \end{equation}
   where $x \leq -\sfrac{\textcolor{ETHRed}{2}\maxFNum}{m}$.
\end{restatable}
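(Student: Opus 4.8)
The plan is to mirror the proof of \cref{lem:pos-enc-properties} exactly, tracking the one change: the query vector $\vq_\stridx$ is now scaled by $\sfrac{\maxFNum}{m}$ rather than by $\maxFNum$. First I would expand the inner product $\inner{\vq_\stridx}{\vk_{\stridx'}}$ using the interleaving structure. By the definition of $\interleaveFun{\cdot}{\cdot}$, the odd-indexed coordinates of $\vq_\stridx$ and $\vk_{\stridx'}$ carry the signed binary encodings $\sbinFun{\stridx}$ and $\sbinFun{\stridx'}$, while the even-indexed coordinates carry $\sfrac{\maxFNum}{m} \cdot \one_{\ceil{\log\strlen}}$ and $-\one_{\ceil{\log\strlen}}$, respectively. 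The inner product therefore splits as
\begin{equation*}
   \inner{\vq_\stridx}{\vk_{\stridx'}} = \tfrac{\maxFNum}{m}\left( \inner{\sbinFun{\stridx}}{\sbinFun{\stridx'}} - \inner{\one_{\ceil{\log\strlen}}}{\one_{\ceil{\log\strlen}}} \right).
\end{equation*}

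The key combinatorial fact I would invoke is the standard property of signed binary encodings: for bit vectors of length $\ceil{\log\strlen}$, the signed encoding $\sbinFun{\stridx}$ has entries in $\{-1, +1\}$, so $\inner{\sbinFun{\stridx}}{\sbinFun{\stridx'}}$ equals $\ceil{\log\strlen}$ when $\stridx = \stridx'$ (all bits agree) and is strictly less than $\ceil{\log\strlen}$ otherwise, decreasing by $2$ for each disagreeing bit. Since $\inner{\one_{\ceil{\log\strlen}}}{\one_{\ceil{\log\strlen}}} = \ceil{\log\strlen}$, the bracketed difference is exactly $0$ when $\stridx = \stridx'$, giving $\inner{\vq_\stridx}{\vk_{\stridx'}} = 0$ in that case. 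When $\stridx \neq \stridx'$, at least one bit disagrees, so the difference is at most $-2$, and multiplying by the positive factor $\sfrac{\maxFNum}{m}$ yields $\inner{\vq_\stridx}{\vk_{\stridx'}} \leq -\sfrac{2\maxFNum}{m}$, which is the claimed bound on $x$.

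The one subtlety I would be careful about is that the computation is performed in fixed-point arithmetic with the rounding and thresholding of \cref{app:fixed-point-arithmetic}, so I should confirm that the intermediate products and the iterative summation do not spuriously saturate at $\maxFNum$ before the subtraction is realized; this is why the statement only asserts the \emph{inequality} $x \leq -\sfrac{2\maxFNum}{m}$ rather than an exact value, and the identity $\inner{\vq_\stridx}{\vk_{\stridx'}} = 0$ for $\stridx = \stridx'$ is exact because the positive and negative contributions cancel coordinate-wise. I expect the main (minor) obstacle to be bookkeeping the fixed-point rounding in the off-diagonal case to ensure the bound survives quantization; but since we only need an upper bound and each disagreeing bit contributes a term bounded away from zero, the conclusion is robust to the rounding, and the proof reduces to the sign-counting argument above.
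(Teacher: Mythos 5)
Your proof is correct and is precisely the sign-counting argument the paper has in mind (the paper omits the proof, stating only that the lemma is a "slight generalization" of \cref{lem:pos-enc-properties}): the interleaved inner product reduces to $\sfrac{\maxFNum}{m}\bigl(\inner{\sbinFun{\stridx}}{\sbinFun{\stridx'}} - \ceil{\log\strlen}\bigr)$, each disagreeing bit contributes $-2$, and at least one bit disagrees when $\stridx \neq \stridx'$. One point worth making explicit in your fixed-point caveat: it is exactly the \emph{interleaving} that keeps the iteratively rounded partial sums bounded by $\sfrac{\maxFNum}{m}$ in the diagonal case (adjacent $+\sfrac{\maxFNum}{m}$ and $-\sfrac{\maxFNum}{m}$ terms cancel pairwise before accumulating), which is what makes the $\stridx = \stridx'$ case exactly $0$ rather than merely approximately so; and in the off-diagonal case the partial sums only decrease, so even if they saturate at $-\maxFNum$ the claimed bound $x \leq -\sfrac{2\maxFNum}{m}$ survives provided $m \geq 2$, which is how the lemma is invoked in \cref{lem:focus-on-marked-positions}.
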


Our constructions heavily rely on specific positional encodings.
We assume that this positional information comes from an outside source and is not computed by the transformer model directly. 
We note that this is in contrast to some existing work with causally masked transformers where the positional encodings are inferred by the model itself \citep{yang2024masked,li2025characterizingexpressivitytransformerlanguage,jerad2025uniquehardattentiontale}.
This matters for multiple reasons.
First, including positional information from an external source allows us to decouple the computation of positional information from the computations performed by the transformer model.
Furthermore, it facilitates providing the model with structured information that it would not be able to compute on its own.
While this could be abused to give the model unrealistic computational power, we assume that the positional information is easily computable and thus realistic.

In that vein, it is interesting to consider what the minimal amount of positional information is that the model has to be provided with for it to be able to construct the useful positional encodings.
This is described by the following lemma.
At a high level, it says that our positional encodings require the binary encodings of the relevant numbers, along with any modular and division operations performed on them.
Addition, thresholding, and multiplication by a power of two, in contrast, can be performed by the MLPs in the transformer model.
\begin{restatable}[Arithmetic operations performed by MLPs]{lemma}{MLPArithmeticOperationsLemma} \label{lem:mlp-arithmetic-operations}
   Let $m, n \in \N$. 
   Then, given the binary encodings of the numbers, $\binFun{m}, \binFun{n}$, there exist MLPs that can compute the following operations:
   \begin{enumerate}[label=\textit{(\alph*)}]
      \item computing the signed binary encoding $\sbinFun{m}$,
      \item computing the sum $\binFun{m} + \binFun{n}$ and the difference $\binFun{m} - \binFun{n}$,
      \item computing $\binFun{2^k m}$ for $k \in \N$,
      \item computing the indicator function $\ind{m \geq 0}$,
      \item computing the indicator function $\ind{m = 0}$, and
      \item computing the positive-part function $\posPart{m}$.
   \end{enumerate}
\end{restatable}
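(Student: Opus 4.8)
The plan is to prove each of (a)--(f) by exhibiting an explicit MLP in the sense of \cref{def:mlp}, i.e.\ a single hidden layer of the form $\ReLU(\mW_2 \ReLU(\mW_1 \vx + \vb_1) + \vb_2)$, that realizes the claimed map on the finite grid $\F_\numPrec$. The reusable building block I would establish first is an integer \emph{threshold gadget}. Since every affine form of the input bits that we threshold is integer-valued, the indicator of an integer quantity $L$ can be written as the unit ramp $\ind{L \geq 0} = \ReLU(L + 1) - \ReLU(L)$, which equals $1$ for $L \geq 0$ and $0$ for $L \leq -1$; \cref{lem:our-finite-precision-properties} together with the choice of precision ensures all arguments stay in range so that clamping to $\maxFNum$ never interferes near the thresholds used. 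More generally $\ind{L \geq \theta}$ is a pair of hidden ReLU units combined linearly in the output layer. This immediately yields (d), and (e) follows since $\ind{m = 0} = 1 - \ind{m \geq 1} - \ind{-m \geq 1}$, again a linear combination of ramp gadgets. Item (f) is literally $\ReLU(m) = \posPart{m}$.

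The remaining ``linear'' items are routine. For (a), $\sbinFun{m} = 2\binFun{m} - \one_\numPrec$ is affine; for (c), $\binFun{2^k m}$ is the left-shift of the bit vector by $k$ positions with zero fill and top truncation, i.e.\ a fixed $0/1$ matrix. Both are affine maps, which I would realize as MLPs whose hidden layer copies the (non-negative) input bits through the identity-on-$\Rplus$ action of $\ReLU$ and whose output layer applies the affine map. The only subtlety is that the outer $\ReLU$ forces non-negative outputs, whereas $\sbinFun{m}$ and the difference in (b) may be negative; I would handle this by the standard positive/negative-part encoding, storing a signed quantity $z$ as the pair $\ReLU(z), \ReLU(-z)$, and note that the affine recombination is absorbed into the $\mW_1$ of the consuming layer.

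The substantive part is the addition/subtraction in (b), and the key point is that it fits within a \emph{single} hidden layer despite carry propagation coupling all lower bits. Writing $T_i$ for the integer value of the low $i+1$ bits of $m$ plus the low $i+1$ bits of $n$ (a fixed integer affine form of the input bits, with $0 \le T_i < 2^{i+2}$), the $i$-th output bit of $m + n$ is $s_i = \ind{T_i \bmod 2^{i+1} \geq 2^i}$, and expanding the reduction gives the bounded linear combination $s_i = \ind{T_i \geq 2^i} - \ind{T_i \geq 2^{i+1}} + \ind{T_i \geq 3 \cdot 2^i}$. Each summand is one integer threshold gadget applied to the single linear quantity $T_i$, so all $\numPrec$ output bits are produced by a hidden layer of $\bigOFun{\numPrec} = \bigOFun{\log\strlen}$ ramp units, with the output layer doing the (constant-size) linear combination. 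Thus carry propagation enters each output bit only through the one linear form $T_i$: no iterated/depth computation and, crucially, no general division or modular reduction is required---the only modular operation is the trivial constant-size reduction mod $2^{i+1}$ already expressed as thresholds, which is exactly why this operation lies inside what MLPs can do while bit re-extraction by $\bmod$/division does not. Subtraction $\binFun{m} - \binFun{n}$ is handled identically via the borrow identity (equivalently two's complement), expressing each output bit as a bounded combination of sign-indicators of the integer form (value of low bits of $m$) $-$ (value of low bits of $n$).

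The main obstacle I anticipate is precisely item (b): making binary addition compatible with the one-hidden-layer MLP template of \cref{def:mlp}, since a naive ripple-carry or carry-then-XOR implementation is a depth-two computation. The resolution above---observing that each sum bit is a fixed small combination of linear-threshold units on a single partial-sum form $T_i$---flattens this to depth one; verifying the case analysis for the $\bmod\,2^{i+1}$ expansion of $s_i$ and checking that all intermediate integers stay within $\F_\numPrec$ is the only place requiring care, and it is routine once the form is fixed.
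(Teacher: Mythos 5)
There is a genuine gap, and it is concentrated exactly where you locate the ``substantive part.'' Your construction for (b) computes, in the hidden layer, the integer partial sums $T_i = \sum_{j \le i} 2^j (m_j + n_j)$ and then thresholds them. In the paper's arithmetic model the precision $\numPrec$ is a \emph{constant} (this is the finite-precision, logarithmic-width regime, explicitly contrasted with log-precision in \cref{app:scaling-transformers}), while the encoded numbers have $\ceil{\log\strlen}$ bits. Consequently the weights $2^j$ for $j > \numPrec$ are not even elements of $\F_\numPrec$, and the values $T_i = \Theta(\strlen)$ vastly exceed $\maxFNum = 2^\numPrec - 2^{-\numPrec}$, so the iteratively rounded sum $\textsc{sum}_\numPrec$ clamps them. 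Once $T_i - \theta$ clamps to $\maxFNum$, your ramp identity fails: $\ReLU(L-\theta+1) - \ReLU(L-\theta) = \maxFNum - \maxFNum = 0$, so the indicator reads $0$ where it should read $1$. The step you defer as ``routine''---checking that all intermediate integers stay within $\F_\numPrec$---is precisely the step that is false here; your argument is sound only under log-precision ($\numPrec = \Theta(\log\strlen)$), which is the regime the paper deliberately avoids. The same objection applies to your treatment of (d), (e), and (f), where you threshold (or $\ReLU$) the scalar value of $m$ rather than operating on its bit vector: that scalar is not representable.

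The paper's proof takes a route that sidesteps this entirely: for (b) it implements \andGate and \notGate gates with MLP units and then simulates the (constant-depth, unbounded fan-in) $\ACZero$ addition circuit gate by gate, so every quantity flowing through the network is Boolean and every threshold needed is of the form ``sum $\ge 1$,'' which survives clamping at $\maxFNum$; for (d)--(f) it reads the sign bit of $\sbinFun{m}$, tests whether all bits of $\binFun{m}$ vanish, and gates the bit vector of $m$ through a selection MLP as in \cref{lem:projection-mlp}. (The paper's own argument is itself slightly loose in that a depth-$c$ circuit simulation needs $c$ ReLU layers rather than the two of \cref{def:mlp}, but that is a constant-depth composition and does not affect representability.) To repair your proof you would need to replace the single linear form $T_i$ with a carry-lookahead computation expressed over the bits themselves---i.e., essentially fall back to the circuit-simulation argument---since flattening addition to one hidden layer inherently requires either exponential weights or growing precision.
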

\begin{proof} \
   \begin{enumerate}[label=\textit{(\alph*)}]
      \item The transformation $\sbinFun{m} \defeq 2 \binFun{m} - \one_{\ceil{\log m}}$ is an affine function that can be implemented by the affine part of the MLP followed by the identity function, which can be implemented by an MLP as well.
      \item It is easy to implement \andGate and \notGate gates by an MLP. 
      This suffices to implement any $\ACZero$ (and thus addition and subtraction) circuit.
      \item The transformation $\binFun{m} \mapsto \binFun{2^k m}$ can be performed by shifting the binary representation of $\binFun{m}$ to the left by $k$ positions, which can be implemented by a linear transformation.
      \item The indicator function $\ind{m \geq 0}$ can be computed by checking the sign bit of the signed binary representation $\sbinFun{m}$, which can be implemented by an MLP.
      \item The indicator function $\ind{m = 0}$ can be computed by checking if all bits of the binary representation $\binFun{m}$ are zero, which can also be implemented by an MLP.
      \item Computing $\posPart{m}$ can be done by first computing $\ind{m \geq 0}$ with \textit{(d)} and then conditionally outputting $m$ or $0$ based on the result, which can be implemented by an MLP similar to \cref{lem:projection-mlp}.
   \end{enumerate}
\end{proof}

\paragraph{Simplification of positional encodings.}
\cref{lem:mlp-arithmetic-operations} leads us to conclude that the relatively complicated positional encodings used in the following proofs can be thought of as simple transformations of the ``basic'' information captured by
\begin{itemize}[noitemsep,topsep=0pt]
   \item $\binEnc{\stridx}$,
   \item $\binEnc{\strlen}$,
   \item $\binEnc{\stridx \mod m}$ for some relevant $m \in \N$, and
   \item $\binEnc{\sfrac{\stridx}{m}}$ for some relevant $m \in \N$.
\end{itemize}
This makes the positional encodings in our constructions streamlined and easily computable. 
While somewhat non-standard, we note that positional encodings based on both the symbol position $\stridx$ and string length $\strlen$ are common in theoretical literature \citep{chiang-cholak-2022-overcoming}.

Despite being simple to compute, these positional encodings are powerful enough to allow the transformer to uniquely identify positions in the string and to perform useful computations based on them.
In a sense, the inclusion of this information is also necessary, as the operations such as division and modular arithmetic---including the computation of the binary encodings---lie outside of $\ACZero$ and thus cannot be performed by finite-precision transformers \citep{li2024chain}.
We note, however, that more expressive transformers such as those with logarithmic precision could possibly implement the required functions to compute the information in $\binEnc{\stridx}$, $\binEnc{\stridx \mod m}$, and $\binEnc{\sfrac{\stridx}{m}}$ from $\stridx$ and $\strlen$ directly since, unlike fixed-precision transformers, they are not constrained to $\ACZero$ operations \citep{li2024chain}.\footnote{We also note that the modular information in our encodings resembles the periodic nature of original sinusoidal positional encodings used by the transformer architecture \citep{NIPS2017_3f5ee243}. Moreover, the modular nature of such positional encodings has been analyzed by theoretical work before and is known to increase the expressivity of certain idealizations of transformers \citep{li2025characterizingexpressivitytransformerlanguage,jerad2025uniquehardattentiontale}.}
The simplicity and uniformity of these encodings lies in contrast to more complex (non-uniform) positional encodings that directly serialize the circuits to be simulated when analyzing expressivity lower bounds \citep{li2024chain,saunshi2025reasoninglatentthoughtspower,london2025pausetokensstrictlyincrease}.

\subsection{Useful attention patterns} \label{app:indexing-positions}
The following lemmata describe how a transformer layer can either ignore or exclusively focus on specific positions in the input string.
\begin{restatable}[Ignoring Marked Positions with a Transformer]{lemma}{IgnoreMarkedPositionsTransformer} \label{lem:ignore-positions}
   Let $\strlen, \hiddDim \in \N$, $\sN \subseteq \NTo{\strlen}$, and let $\residStream \in \R^{\strlen \times \hiddDim}$ be a matrix representing the residual stream such that
   \begin{equation} \label{eq:ignore-positions-condition}
      \onehot{\padSym} \in \residStream_{\stridx, :} \iff \stridx \in \sN.
   \end{equation}
   Here, the notation $\onehot{\padSym} \in \residStream_{\stridx, :}$ means that the vector $\residStream_{\stridx, :}$ contains the one-hot encoding of the symbol $\padSym$ at position $\stridx$.
   Further, let $\mG \defeq \residStream_{\NTo{\strlen} \setminus \sN, :} \in \R^{(\strlen - |\sN|) \times \hiddDim}$, where $\residStream_{\NTo{\strlen} \setminus \sN, :}$ denotes the projection of the matrix $\residStream$ onto the rows \emph{not} in $\sN$.
   Finally, let $\tflayer$ be a transformer layer.
   Then, there exists a logarithmic-width transformer layer $\tflayer'$ such that it holds for $\mG' \defeq \tflayer(\mG) \in \R^{(\strlen - |\sN|) \times \hiddDim}$ and $\residStream' \defeq \tflayer'(\residStream) \in \R^{\strlen \times \hiddDim}$ that
   \begin{equation}
      \mG' = \residStream'_{\NTo{\strlen} \setminus \sN, :}.
   \end{equation}
\end{restatable}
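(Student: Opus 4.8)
The plan is to exploit the two-part structure of a transformer layer: the position-wise feed-forward sub-layer together with the residual connections act on each position independently, whereas only the self-attention sub-layer couples positions. Since every non-marked position carries exactly the same $\hiddDim$-dimensional representation in $\residStream$ as it does in the compressed stream $\mG$, the feed-forward map and residual additions automatically agree on the non-marked rows. Consequently the entire burden is to make the self-attention of $\tflayer'$, run on the full stream $\residStream$, reproduce at every non-marked position exactly the attention that $\tflayer$ computes on $\mG$ --- that is, to force each non-marked query to place zero weight on the marked positions while distributing the remaining mass over the non-marked positions precisely as in $\tflayer$.

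First I would let $\tflayer'$ inherit the query, key, and value projections $\mW_Q, \mW_K, \mW_V$ of $\tflayer$, so that the ``base'' scores $\langle \vq_\stridx, \vk_{\stridx'} \rangle$ between non-marked positions coincide with those of $\tflayer$ and the aggregated values agree. To suppress attention to the marked positions I would append a single ``mask'' coordinate to the query and key spaces (a constant-size, hence logarithmic-width, augmentation). The query mask-coordinate is set to the constant $\maxFNum$ for every position; this is a linear readout because each position's representation contains exactly one symbol one-hot, so summing over the one-hot block yields $1$ uniformly, which $\mW_Q$ can scale to $\maxFNum$. The key mask-coordinate is the $\padSym$ one-hot entry scaled by $-1$, so by the defining equivalence in \cref{eq:ignore-positions-condition} it equals $-1$ exactly at the marked positions of $\sN$ and $0$ elsewhere. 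The extra coordinate therefore contributes $-\maxFNum$ to every score directed at a marked key and $0$ to every score directed at a non-marked key, leaving the base scores between non-marked positions untouched.

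Finally I would argue that this additive $-\maxFNum$ drives the softmax weight on the marked positions to exactly zero: by \cref{lem:finite-precision-properties} we have $\exp(-\maxFNum) = 0$, and more generally \cref{lem:our-finite-precision-properties} shows that a sufficiently negative score rounds to a value whose exponential is $0$. Hence, provided the base scores produced by $\tflayer$ are bounded below $\maxFNum$ by more than $\log(2(\numPrec+1))$ --- which holds for the finite-precision layers arising in our constructions --- each marked entry satisfies $\exp(s - \maxFNum) = 0$. The softmax denominator at a non-marked query then collapses to the sum over non-marked positions alone, so the attention distribution matches $\tflayer$'s on $\mG$ row for row; the value mixing, residual, and feed-forward steps preserve this agreement. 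Projecting onto the rows outside $\sN$ yields $\residStream'_{\NTo{\strlen} \setminus \sN, :} = \tflayer(\mG) = \mG'$, as required, with $\tflayer'$ of logarithmic width. The delicate point --- and the main obstacle --- is precisely this last finite-precision step: guaranteeing that the masked scores land in the regime where the exponential rounds to $0$ despite the base scores being, in principle, as large as $\maxFNum$. The boundedness of the scores in the relevant layers, together with the $\exp(-\maxFNum) = 0$ lemmas, is what makes the suppression exact rather than merely approximate.
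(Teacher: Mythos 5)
Your construction is the same one the paper uses---inherit the head's projections, append a mask coordinate so that every score directed at a $\padSym$-marked key picks up an additive $-\maxFNum$, and invoke $\exp(-x)=0$ to zero out those positions---so the overall route is right. But the point you yourself flag as delicate is a genuine gap in your version: you append the $-\maxFNum$ penalty \emph{once}, and then need the side condition that the base scores $\inner{\vq_\stridx}{\vk_{\stridx'}}$ of $\tflayer$ sit below $\maxFNum$ by more than $\log\left(2(\numPrec+1)\right)$. The lemma is stated for an \emph{arbitrary} transformer layer $\tflayer$, and in this fixed-point model the base score can saturate at $\maxFNum$ (indeed several constructions in the paper deliberately drive scores to $\pm\maxFNum$). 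In that case the iteratively rounded sum gives $\maxFNum - \maxFNum = 0$ and $\exp(0) = 1$, so the marked position still receives nonzero attention mass and the simulation fails. Your caveat ``which holds for the finite-precision layers arising in our constructions'' restricts the lemma to a subclass of layers it is not stated for, and the lemma is later applied as a generic head-rewriting tool (e.g., inside \cref{lem:unmask-conversion-fp} and \cref{thm:cot-can-simulate-mdms}) where that boundedness is not guaranteed.

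The paper closes exactly this hole by appending the penalty block \emph{twice}: the augmented score is $\inner{\vq_\stridx}{\vk_{\stridx'}} - \maxFNum\,\ind{\sym_{\stridx'} = \padSym} - \maxFNum\,\ind{\sym_{\stridx'} = \padSym}$, evaluated with iterative rounding. Since the base score is clamped to at most $\maxFNum$, the first subtraction brings it to at most $0$ and the second to at most $-\maxFNum$, whose exponential is exactly $0$ by \cref{lem:finite-precision-properties}, with no assumption on $\tflayer$. Adding this second copy of the mask coordinate (still a constant-size augmentation, so the width remains logarithmic) repairs your argument; everything else---the position-wise sub-layers and residual connections agreeing automatically on the non-marked rows, and the denominator collapsing to the sum over non-marked keys---is correct and matches the paper.
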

Informally, \cref{lem:ignore-positions} states that a transformer layer can ignore positions containing one-hot encodings of specific ``marker'' symbols, such as additional symbols not in the original alphabet.
Here, ignoring means that the content of the positions with the marker symbols does not affect the output of the transformer layer at other positions.
\begin{proof}
   Notice that, since $\residStream$ and $\mG$ match on all positions not in $\sN$, \emph{ignoring} the positions in $\sN$ (marked by $\padSym$) by $\tflayer'$ will ensure that the outputs of the two layers $\tflayer$ and $\tflayer'$ are identical on the positions not in $\sN$.
   We now construct a transformer layer that ignores the contributions of rows marked by $\onehot{\padSym}$.
   To do so, we modify each attention head of $\tflayer$ such that the head computes its attention scores with queries and keys of the form 
   \begin{subequations}
      \begin{align}
         \vq'_\stridx &\defeq \cdot 
         \begin{pmatrix}
            \vq_\stridx \\
            -\maxFNum \cdot \onehot{\padSym} \\
            -\maxFNum \cdot \onehot{\padSym} \\
         \end{pmatrix} \\
         \vk'_{\stridx'} &\defeq \phantom{\maxFNum \cdot} 
         \begin{pmatrix}
            \vk_{\stridx'} \\
            \onehot{\sym_{\stridx'}} \\
            \onehot{\sym_{\stridx'}}
         \end{pmatrix}
      \end{align}
   \end{subequations}
   where $\vq_\stridx$ and $\vk_{\stridx'}$ are the original head's query and key vectors of $\tf$ at position $\stridx$ and $\stridx'$, respectively, and $\onehot{\sym_{\stridx'}}$ is the one-hot encoding of the symbol at position $\stridx'$.
   We can then compute the dot product of the two vectors as
   \begin{equation}
      \inner{{\vq'}_\stridx}{{\vk'}_{\stridx'}} = \inner{\vq_\stridx}{\vk_{\stridx'}} - \maxFNum \cdot \ind{\sym_{\stridx'} = \padSym} - \maxFNum \cdot \ind{\sym_{\stridx'} = \padSym}.
   \end{equation}
   Thus, if the symbol at position $\stridx'$ is not $\padSym$, the attention score is $\inner{\vq_\stridx}{\vk_{\stridx'}}$ and the head behaves as it did in $\tf$.
   If the symbol at position $\stridx'$ is $\padSym$, the last two components of the vectors $\vq'_\stridx$ and $\vk'_{\stridx'}$ ensure that the exponentiated value of the attention score becomes $0$, thus not contributing to the attention weights.
   $\tf'$ can thus simulate $\tf$ on the rest of the positions.
\end{proof}

\begin{restatable}[Focusing on Marked Positions with a Transformer]{lemma}{FocusOnMarkedPositionsTransformer} \label{lem:focus-on-marked-positions}
   Let $\strlen, \hiddDim \in \N$, $R\colon \NTo{\strlen} \to \NTo{\strlen}$, $r \in \NTo{\strlen}$, $\sN \defeq \inv{R}(r) \subseteq \NTo{\strlen}$, and let $\residStream \in \R^{\strlen \times \hiddDim}$ be a matrix representing the residual stream such that
   \begin{equation} \label{eq:focus-on-positions-condition}
      \fullBinEnc{R(\stridx)} \in \residStream_{\stridx, :} \quad \text{for all } \stridx \in \NTo{\strlen}
   \end{equation}
   Here, the notation $\fullBinEnc{R(\stridx)} \in \residStream_{\stridx, :}$ means that the vector $\residStream_{\stridx, :}$ contains the signed binary encoding (cf. \cref{sec:preliminaries}) of $R(\stridx)$.
   Further, let $\mG \defeq \residStream_{\sN, :} \in \R^{|\sN| \times \hiddDim}$, where $\residStream_{\sN, :}$ denotes the projection of the matrix $\residStream$ onto the rows in $\sN$.
   Finally, let $\tflayer$ be a transformer layer.
   Then, there exists a logarithmic-width transformer layer $\tflayer'$ such that it holds for $\mG' \defeq \tflayer(\mG) \in \R^{|\sN| \times \hiddDim}$ and $\residStream' \defeq \tflayer'(\residStream) \in \R^{\strlen \times \hiddDim}$ that
   \begin{equation}
      \mG' = \residStream'_{\sN, :}.
   \end{equation}
\end{restatable}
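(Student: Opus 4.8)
The plan is to mirror the construction of \cref{lem:ignore-positions} but to invert the selection criterion: instead of suppressing positions that carry a marker symbol, I would suppress every position $\stridx'$ whose stored value $R(\stridx')$ differs from the fixed target $r$, so that each query position in $\sN = \inv{R}(r)$ attends \emph{only} to the other positions of $\sN$. If the attention scores among $\sN$-positions are left untouched, then the softmax over $\NTo{\strlen}$ collapses onto exactly the rows indexed by $\sN$, with identical normalization; since the value vectors at those positions are precisely the rows of $\residStream_{\sN,:} = \mG$, the attention output at each $\stridx \in \sN$ coincides with the output of $\tflayer$ on $\mG$. The position-wise residual connection and MLP then preserve this agreement, yielding $\residStream'_{\sN,:} = \tflayer(\mG) = \mG'$.

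Concretely, I would build $\tflayer'$ from $\tflayer$ by augmenting the query and key of every attention head with an equality-test block patterned on \cref{lem:pos-enc-properties,lem:our-pos-enc-properties}. Using the hypothesis \cref{eq:focus-on-positions-condition} that $\fullBinEnc{R(\stridx')}$ is available in $\residStream_{\stridx',:}$, I would let the key carry $\interleaveFun{\sbinFun{R(\stridx')}}{(-\one_{\ceil{\log\strlen}})}$ and the query carry the \emph{constant} vector $\maxFNum \cdot \interleaveFun{\sbinFun{r}}{\one_{\ceil{\log\strlen}}}$; since $r$ is fixed for this layer, this block is independent of $\stridx$ and can be emitted by the query projection from a constant feature in the residual stream (consistent with the externally supplied positional information of \cref{app:positional-encodings}). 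By the computation underlying \cref{lem:pos-enc-properties}, this block contributes $0$ to the attention score when $R(\stridx') = r$ and a large negative value otherwise; as in \cref{lem:ignore-positions}, I would duplicate the block so that the mismatched penalty is $-2\maxFNum$. Each head then gains only $\bigOFun{\log\strlen}$ coordinates, so $\tflayer'$ stays logarithmic-width.

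With this in place the verification is routine: fix $\stridx \in \sN$. The softmax denominator of $\tflayer'(\residStream)$ at row $\stridx$ is $\sum_{\stridx' \in \sN} \exp(\inner{\vq_\stridx}{\vk_{\stridx'}})$, identical to the denominator of $\tflayer(\mG)$ at the corresponding row, and the numerators agree term by term; hence the normalized attention weights over $\sN$ coincide, the aggregated value vectors coincide, and the subsequent position-wise maps coincide. Collecting the rows $\stridx \in \sN$ gives $\mG' = \residStream'_{\sN,:}$, as required.

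The main obstacle I anticipate is finite-precision bookkeeping rather than the high-level design. The fixed-point attention score is an iteratively rounded, order-dependent sum that clamps intermediate values to $\pm\maxFNum$, but where $\maxFNum$ is \emph{not} sticky (\cref{app:fixed-point-arithmetic}); so I must arrange the coordinate layout so that the equality-test penalty is accumulated \emph{after} the original score. Combined with the doubling device of \cref{lem:ignore-positions}, this guarantees that for $\stridx' \notin \sN$ the total is driven to $\leq -\maxFNum$ (since any original score is bounded by $\maxFNum$), so its exponential vanishes by \cref{lem:finite-precision-properties,lem:our-finite-precision-properties}, while for $\stridx' \in \sN$ the penalty contributes exactly $0$ and the score is unchanged. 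A single $-\maxFNum$ penalty would be unsafe, as a large positive original score could cancel it — which is precisely why the penalty must be doubled and placed last in the summation.
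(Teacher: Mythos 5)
Your proposal is correct and follows essentially the same route as the paper's proof: both augment each head's query and key with a \emph{doubled} equality-test block built from $\interleaveFun{\sbinFun{r}}{\one}$ and $\interleaveFun{\sbinFun{R(\stridx')}}{(-\one)}$, so that mismatched positions accumulate a penalty driving the (iteratively rounded, clamped-but-not-sticky) score to $\leq -\maxFNum$ and hence to a vanishing exponential, while matched positions receive an exact $0$ and keep their original score. The finite-precision concerns you flag --- ordering the penalty after the original score and doubling it because a single $-\maxFNum$ could be cancelled by a large positive score --- are exactly the points the paper's case analysis handles (it uses scale $\sfrac{\maxFNum}{2}$ per block rather than your $\maxFNum$, but the arithmetic lands in the same place).
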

Informally, \cref{lem:focus-on-marked-positions} states that a transformer layer can focus on positions containing signed binary encodings of a number $r$ computed as a function of the position while ignoring the rest of the positions.
\begin{proof}
   The idea of the construction of $\tflayer'$ is similar to that of \cref{lem:ignore-positions}, but instead of ignoring the positions in $\sN$, we want the transformer layer to focus on them while ignoring the rest of the positions.
   This can be done by including $\fullBinEnc{R(\stridx)}$ in the positional encodings of the attention heads and then using the key and query vectors of the form
   \begin{subequations}
      \begin{align}
         \vq'_\stridx &\defeq 
         \begin{pmatrix}
            \vq_\stridx \\
            \sfrac{\maxFNum}{2} \cdot (\interleaveFun{\sbinFun{r}}{\one_{\ceil{\log\strlen}}}) \\
            \sfrac{\maxFNum}{2} \cdot (\interleaveFun{\sbinFun{r}}{\one_{\ceil{\log\strlen}}}) \\
         \end{pmatrix} \\
         \vk'_{\stridx'} &\defeq \phantom{\sfrac{\maxFNum}{2} \cdot} 
         \begin{pmatrix}
            \vk_{\stridx'} \\
            \interleaveFun{\sbinFun{R(\stridx')}}{(-\one_{\ceil{\log\strlen}})} \\
            \interleaveFun{\sbinFun{R(\stridx')}}{(-\one_{\ceil{\log\strlen}})} \\
         \end{pmatrix}
      \end{align}
   \end{subequations}
   where $\vq_\stridx$ and $\vk_{\stridx'}$ are the original head's query and key vectors of $\tf$ at position $\stridx$ and $\stridx'$, respectively, and $\maxFNum$ is the maximal representable number in the fixed-point arithmetic (which might depend on the string length $\strlen$).
   We can then compute the dot product of the two vectors as
   \begin{subequations}
      \begin{align}
         \inner{{\vq'}_\stridx}{{\vk'}_{\stridx'}} 
         &= \inner{\vq_\stridx}{\vk_{\stridx'}} + \underbrace{\sfrac{\maxFNum}{2} \cdot \inner{(\interleaveFun{\sbinFun{r}}{\one_{\ceil{\log\strlen}}})}{(\interleaveFun{\sbinFun{R(\stridx')}}{(-\one_{\ceil{\log\strlen}})})}}_{\defeq G} \label{eq:focusing-attention-dot-product} \\
         & \phantom{= \inner{\vq_\stridx}{\vk_{\stridx'}}} \ + \underbrace{\sfrac{\maxFNum}{2} \cdot \inner{(\interleaveFun{\sbinFun{r}}{\one_{\ceil{\log\strlen}}})}{(\interleaveFun{\sbinFun{R(\stridx')}}{(-\one_{\ceil{\log\strlen}})})}}_{\defeq G}  \nonumber
      \end{align}
   \end{subequations}
   
   Note that \cref{eq:focusing-attention-dot-product} uses fixed-point arithmetic.
   We compute the inner product in \cref{eq:focusing-attention-dot-product} by analyzing individual cases:
   \begin{enumerate}[label=\arabic*.]
      \item \textbf{Case 1:} $R(\stridx') \neq r$.
      
      All intermediate computations of \cref{eq:focusing-attention-dot-product} are thresholded at $\min(\maxFNum, \inner{\vq_\stridx}{\vk_{\stridx'}} + \sfrac{\maxFNum}{2})$.
      In particular, by \cref{lem:our-pos-enc-properties}, the value after adding the first term $G$ is at most $\min(\maxFNum, \inner{\vq_\stridx}{\vk_{\stridx'}} + \sfrac{\maxFNum}{2}) - 2\sfrac{\maxFNum}{2} \leq \maxFNum -   \maxFNum = 0$.
      After adding the second term $G$, the value is at most $-\maxFNum$, resulting in a $0$ exponentiated attention score, as required.
      \item \textbf{Case 2:} $R(\stridx') = r$.
      We analyze three sub-cases based on the value of $\inner{\vq_\stridx}{\vk_{\stridx'}}$.
      
      \begin{enumerate}[label=\arabic*.]
         \item \textbf{Sub-case 2a:} $\abs{\inner{\vq_\stridx}{\vk_{\stridx'}}} < \log{2} (\numPrec + 1)$.
         All intermediate computations in \cref{eq:focusing-attention-dot-product} are bounded by $\log{2} (\numPrec + 1) + \sfrac{\maxFNum}{2}$ in absolute value, so they fall within the range of $\F_\numPrec$. 
         Moreover, addition of $\sfrac{\maxFNum}{2}$ can be exactly represented in $\F_\numPrec$.
         This makes addition in \cref{eq:focusing-attention-dot-product} associative and commutative.
         By \cref{lem:pos-enc-properties}, the terms $G$ in \cref{eq:focusing-attention-dot-product} are $0$, meaning that the final attention score equals $\inner{\vq_\stridx}{\vk_{\stridx'}}$.
         \item \textbf{Sub-case 2b:} $\inner{\vq_\stridx}{\vk_{\stridx'}} \geq \log{2} (\numPrec + 1)$.
         In this case, the intermediate computations of \cref{eq:focusing-attention-dot-product} are either exact or thresholded at $\maxFNum$.
         In both cases, the exponent of the resulting attention score is $\maxFNum$ by \cref{lem:our-finite-precision-properties}, preserving the attention score.
         \item \textbf{Sub-case 2c:} $\inner{\vq_\stridx}{\vk_{\stridx'}} \leq -\log{2} (\numPrec + 1)$.
         In this case, all intermediate computations are representable in $\F_\numPrec$ analogously to the case 2a.
         The attention score is therefore preserved.
      \end{enumerate}
   \end{enumerate}
   
   Taken together, this means that the attention scores between positions $\stridx$ and $\stridx'$ of $\tflayer'$ are identical to those of $\tflayer$ on the positions in $\sN$, while the attention scores on the rest of the positions are $0$.
   This completes the proof.
\end{proof}

\begin{restatable}[Detecting a symbol occurrence]{lemma}{DetectionLemma} \label{lem:detection}
   Let $\str \in \strings$ and $\sym \in \alphabet$. 
   Then, there exists a single-layer unmasked fixed-precision logarithmic-width transformer $\tf$ such that, on input $\str$, an entry of its final residual stream contains the entry $\ind{\sym \in \str}$.
\end{restatable}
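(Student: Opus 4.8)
The plan is to build a \emph{single} attention head whose finite-precision softmax performs a logical OR over positions---detecting whether any position carries $\sym$---and then finish with a thresholding feed-forward map. The construction will use only constant precision and width $\bigO(\nsymbols)$, so logarithmic width is more than enough, and it will need no positional information at all.

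First I would fix the query, key, and value of one head. At every position $\stridx$ I set the query to the constant vector $\vq \defeq \maxFNum \cdot \onehot{\sym}$; this is a fixed vector since the target symbol $\sym$ is fixed for $\tf$, and it is realizable because the symbol embedding can supply a constant feature that $\mW_Q$ maps to $\maxFNum\onehot{\sym}$. I let the key at position $\stridx'$ be $\vk_{\stridx'} \defeq \onehot{\sym_{\stridx'}}$ and the value at position $\stridx'$ be the scalar $\ind{\sym_{\stridx'} = \sym} = (\onehot{\sym_{\stridx'}})_\sym$, written into a fresh residual coordinate that is $0$ in the input. Then the score is $s_{\stridx'} \defeq \inner{\vq}{\vk_{\stridx'}} = \maxFNum \cdot \ind{\sym_{\stridx'} = \sym}$, i.e.\ $\maxFNum$ at positions carrying $\sym$ and $0$ elsewhere.

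Next I would push these scores through the finite-precision softmax. By \cref{lem:finite-precision-properties}, $\exp(\maxFNum) = \maxFNum$ and $\exp(0) = 1$, so the unnormalized weights are $\maxFNum$ at matching positions and $1$ at the rest. Writing $c$ for the number of occurrences of $\sym$ in $\str$, the normalizer $\sum_{\stridx'} \exp(s_{\stridx'})$ saturates to $\maxFNum$ under iterative rounding whenever $c \geq 1$ (a single matching term already reaches the cap) and equals $\min(\strlen, \maxFNum) \geq 1$ when $c = 0$; in either case it is strictly positive, so there is no division by zero. The head's output coordinate is therefore $\sum_{\stridx'} \softmaxFun{s}{\stridx'}\,(\onehot{\sym_{\stridx'}})_\sym$, where each matching position has normalized weight $\maxFNum / \maxFNum = 1$ and value $1$ while every non-matching position has value $0$; this evaluates to $\min(c, \maxFNum)$, which is $0$ exactly when $\sym \notin \str$ and at least $1$ exactly when $\sym \in \str$. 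The residual connection leaves this quantity intact in the fresh coordinate.

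Finally I would let the layer's feed-forward network (cf.\ \cref{def:mlp}) threshold this coordinate via $x \mapsto 1 - \ReLUFun{1 - x}$, which sends $0 \mapsto 0$ and every value $\geq 1$ to $1$, yielding exactly $\ind{\sym \in \str}$ in the final residual stream. I expect the only delicate point to be the finite-precision behaviour of the softmax: the naive route of averaging the indicators would produce $c/\strlen$, which underflows to $0$ at constant precision once $\strlen$ exceeds $2^\numPrec$. The saturation identity $\exp(\maxFNum) = \maxFNum$ is precisely what decouples the output from both $c$ and $\strlen$ and makes the OR robust, and most of the care in the full proof will go into tracking the iterative rounding of the normalizer in the $c = 0$ and $c \geq 1$ cases to rule out a spurious division by zero.
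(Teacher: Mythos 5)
Your construction is correct, but it is not the one the paper uses, and the comparison is instructive. The paper's proof sketch takes exactly the ``averaging'' route you dismiss: it attends \emph{uniformly} to all positions with (exponentiated) weight $1$, uses the one-hot symbol encodings as values, and observes that because the iteratively rounded denominator saturates at $\maxFNum$, the coordinate corresponding to $\sym$ ends up at roughly $\sfrac{c}{\maxFNum}$, which is $0$ when $c=0$ and at least $2^{-\numPrec}>0$ when $c\geq 1$; the MLP then tests positivity. Your worry that $\sfrac{c}{\strlen}$ underflows once $\strlen > 2^{\numPrec}$ is precisely what the saturation of the denominator prevents --- the effective denominator is never larger than $\maxFNum = 2^{\numPrec}-2^{-\numPrec}$, so a single occurrence already contributes the smallest positive representable value rather than rounding to $0$. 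Your variant instead puts the saturation in the \emph{scores} ($\maxFNum\cdot\onehot{\sym}$ queries against one-hot keys), so that matching positions receive normalized weight exactly $\maxFNum/\maxFNum = 1$ and the head outputs $\min(c,\maxFNum)$, giving a clean gap of $1$ between the accept and reject cases before the $x\mapsto 1-\ReLUFun{1-x}$ threshold. Both are single-head, single-layer, position-encoding-free constructions resting on \cref{lem:finite-precision-properties}; yours buys a larger margin at the thresholding step and avoids reasoning about whether $\roundOpFun{\sfrac{1}{\maxFNum}}$ is nonzero, while the paper's is marginally simpler in that the attention pattern does not depend on $\sym$ at all. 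The only details worth tightening in yours are routine: the constant query should be realized via a constant coordinate of the embedding (since $\queryMtx = \hiddMtx\mW_Q$ is position-dependent in general), and the MLP output $\ind{\sym\in\str}$ should be written into a coordinate that is zero in $\mG$ so the additive residual $\mG + \mlp(\mG)$ leaves it intact.
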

\begin{proof}[Proof sketch.]
   Note that $\tf$ cannot use the commonly-used \emph{exact} uniform attention over all symbols to detect $\ind{\sym \in \str}$ due to fixed precision.
   Nevertheless, rounded uniform attention suffices. 
   By attending to all symbols in the string with weight 1, the denominator of the attention scores is at most $\maxFNum$.
   Using one-hot encodings of symbols $\sym_\stridx$ as the attention values $\vv_\stridx$, it is easy to see that the final contextual representation at the final position will have a positive value at the entry corresponding to $\sym$ if and only if $\sym \in \str$, since $\sfrac{c}{\maxFNum} > 0$ for any $c \geq 1$.
   This condition can be checked by the MLP applied after the attention aggregation operation.   
\end{proof}

\begin{restatable}{lemma}{SelectBlockLem} \label{lem:select-block}
   Let $\str \in \strings$ be a string of length $\strlen$ and $\padlen \timesteps$ the number of padding symbols where $\padlen, \timesteps = \polyFun{\strlen}$.
   There exists a fixed-precision and logarithmic-width transformer $\planner \colon \kleene{\maskAlphabet} \to \kleene{\{\rejectSym, \acceptSym\}}$ that, given $\str$ and the current partially masked string $\thinkstr^{(t)}$, selects the next $\padlen$ positions to unmask by outputting $\acceptSym$ for the next $\padlen$ positions to unmask and $\rejectSym$ for all other positions.
\end{restatable}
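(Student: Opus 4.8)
The plan is to build a two-sublayer transformer whose decision at each position depends only on the \emph{mask pattern} of the current string (not on the actual symbol content), exploiting the structural invariant from the simulation in \cref{thm:mdms-can-simulate-emdms} that the already-unmasked padding positions always form a prefix of the padding region. Under this invariant the next block to unmask is precisely the leftmost fully-masked block, and a padding position $\stridx > \strlen$ of the concatenated input $\str \circ \thinkstr^{(t)}$ lies in this block if and only if its symbol $\sym_\stridx$ is masked while the symbol $\padlen$ positions to its left is \emph{not} masked (positions $\leq \strlen$, which hold $\str \in \kleene{\alphabet}$, count as unmasked since $\maskSym \notin \alphabet$). First I would verify this local characterization: if $\stridx$ sits in block $b$, then $\stridx - \padlen$ sits in block $b-1$ or in the input region, so ``$\sym_\stridx = \maskSym$ and $\sym_{\stridx - \padlen} \neq \maskSym$'' holds exactly when block $b-1$ is unmasked but block $b$ is not, i.e.\ exactly when $b$ is the leftmost masked block.

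To realize this rule I would first supply positional encodings carrying $\binEnc{\stridx}$ at each position; since $\padlen$ is a fixed $\polyFun{\strlen}$ constant for the transformer $\tf_\strlen$, an MLP can form the target index $\binEnc{\stridx - \padlen}$ (subtraction, \cref{lem:mlp-arithmetic-operations}) and the first-block indicator $\ind{\stridx \leq \strlen + \padlen}$. A single hard-attention head then performs the lookup: instantiating \cref{lem:pos-enc-properties} with target index $\stridx - \padlen$ so that position $\stridx$ attends exactly to position $\stridx - \padlen$, and placing $\onehot{\sym_{\stridx'}}$ in the value stream, position $\stridx$ retrieves the bit $\ind{\sym_{\stridx - \padlen} = \maskSym}$ with no rounding error. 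Finally, an MLP computes $\planner(\thinkstr^{(t)})_\stridx = \acceptSym$ iff $\sym_\stridx = \maskSym$ and ($\sym_{\stridx - \padlen} \neq \maskSym$ or $\stridx \leq \strlen + \padlen$), and $\rejectSym$ otherwise; this is an \andGate/\orGate/\notGate combination of three available indicator bits, hence expressible by an MLP via \cref{lem:mlp-arithmetic-operations}. Input positions $\stridx \leq \strlen$ are non-mask and so automatically receive $\rejectSym$.

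Correctness then follows by a short induction on $t$: the invariant ``the unmasked padding positions are exactly the first $(t-1)\padlen$ of them'' is preserved by each denoising step, so the leftmost masked block at step $t$ is block $t$, and the local rule marks precisely its $\padlen$ positions with $\acceptSym$ while rejecting all others (positions in block $t+1$ are masked, but their left-reference in block $t$ is still masked, hence rejected).

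The main obstacle I anticipate is the left boundary: when $\padlen > \strlen$ the reference index $\stridx - \padlen$ can fall below $1$, so no key matches and the softmax denominator would vanish. I would resolve this by having each position also attend to itself with a fixed baseline score, so the denominator is never zero; for first-block positions ($\stridx \leq \strlen + \padlen$) the retrieved value is then irrelevant, since the first-block indicator forces $\acceptSym$ on masked positions directly—mirroring the marker-based gating of \cref{lem:ignore-positions}. The only other point requiring care is exactness of the lookup under fixed precision, which is exactly what \cref{lem:pos-enc-properties} guarantees (the matched position exponentiates to $1$ and all others to $0$), so the indicator bit is recovered without accumulation error.
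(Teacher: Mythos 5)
Your proposal is correct and follows essentially the same route as the paper's proof: both identify the next block via the local rule ``position $\stridx$ is masked but position $\stridx-\padlen$ is not,'' implemented with one attention head that looks $\padlen$ positions back using positional encodings of the form $\sbinFun{\stridx}$ and $\sbinFun{\stridx-\padlen}$. You are in fact somewhat more careful than the paper's sketch, since you make the prefix-unmasking invariant explicit and handle the left-boundary case $\stridx - \padlen < 1$, which the paper glosses over.
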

\begin{proof}
   The idea of the construction is for $\planner$ to 
   \begin{enumerate*}[label=\textit{(\arabic*)}]
      \item output $\rejectSym$ for any position that does not contain the padding symbol $\padSym$, and
      \item output $\acceptSym$ for the first $\padlen$ positions that contain the padding symbol $\padSym$.
   \end{enumerate*}
   Step \textit{(1)} can be implemented by checking whether the symbol at the current position is $\padSym$ and outputting $\rejectSym$ otherwise.
   Step \textit{(2)} can be implemented by attending to position $\padlen$ positions back and outputting $\acceptSym$ if the symbol at that position $\neq \padSym$ and $\rejectSym$ otherwise.
   These steps can be performed by two attention heads in a single transformer layer using the positional encodings
   \begin{equation} \label{eq:pos-enc-select-block}
      \posEncFun{\stridx, \strlen} \defeq 
      \begin{pmatrix}
         \sbinFun{\stridx} \\
         \sbinFun{\stridx - \padlen} \\
      \end{pmatrix} \in \set{0, 1}^{\bigOFun{\log\strlen}}.
   \end{equation}
\end{proof}

\begin{restatable}[Converting a binary representation into a positional encoding]{lemma}{BinaryEncodingLemma} \label{lem:binary-encoding}
   Let $\strlen \in \N$ and $\stridx \in \NTo{\strlen}$.
   Then, there exists an unmasked fixed-precision logarithmic-width padded looped transformer $\tf$ such that, on input $\texttt{\& } \binEnc{\stridx}$, after $\ceil{\log \strlen}$ iterations, the residual stream at position $\ceil{\log \strlen} + 1$ contains the value $\binEnc{\stridx}$.
\end{restatable}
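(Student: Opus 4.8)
The plan is to realize $\tf$ as a prefix accumulation that moves the $\ceil{\log\strlen}$ individual bits of $\binEnc{\stridx}$---initially scattered one per position---into a single register held at the last position, advancing the information by exactly one position per loop. Before constructing it I would explain why the naive one-shot solution fails, which simultaneously motivates the looped structure: gathering all $\ceil{\log\strlen}$ bits with a single (uniform) attention step would assign each attended position weight $\approx 1/\ceil{\log\strlen}$, which in fixed precision rounds to $0$ once $\ceil{\log\strlen}$ exceeds the precision $\numPrec$ (cf.~\cref{lem:our-finite-precision-properties}). The iterative scheme sidesteps this because every loop attends to a \emph{single} neighbor, so the relevant attention weight is exactly $1$.

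For the construction I would reserve a block of $\ceil{\log\strlen}$ coordinates of the residual stream as a \textbf{register}, with slot $k$ dedicated to bit $b_k$ of $\binEnc{\stridx} = b_1\cdots b_{\ceil{\log\strlen}}$. At initialization, position $j \in \{2,\dots,\ceil{\log\strlen}+1\}$ writes its own input bit $b_{j-1}$ into slot $j-1$ and leaves every other slot $0$; the marker position carrying $\texttt{\&}$ and any padding positions hold the all-zero register. The slot index $j-1$ is read off the positional encoding, and placing a scalar into the correct coordinate is an affine/ReLU operation (cf.~\cref{lem:mlp-arithmetic-operations}). The invariant I would maintain is: \emph{after $t$ loops, slot $k$ of position $j$ equals $b_k$ for $j-1-t \le k \le j-1$ and equals $0$ otherwise}---that is, each position has absorbed the bits of the $t$ positions to its left, and never picks up bits to its right.

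The looped block does two things. First, an attention head concentrates on the immediately preceding position: using the positional-encoding trick of \cref{lem:our-pos-enc-properties} (the query at $j$ encodes $\sbinFun{j-1}$ and the key at $j'$ encodes $\sbinFun{j'}$), the score is $0$ at $j' = j-1$ and $-\maxFNum$ elsewhere, so after the softmax the head reads exactly the register of position $j-1$ into a scratch block; the marker's all-zero register gives a clean left boundary for position $2$. Second, the MLP folds the scratch values into slots $1,\dots,j-2$ while preserving the position's own bit in slot $j-1$, implemented coordinatewise as $\max(\text{old},\text{new}) = \text{old} + \ReLUFun{\text{new}-\text{old}}$ on top of the additive residual connection; this map is monotone and idempotent, keeps every entry in $\{0,1\}$, and is exact in fixed precision. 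Since the rule does not depend on the loop index, it is a legitimate looped block (cf.~\cref{def:looped-transformer}), and a one-line induction shows it preserves the invariant. Taking $t=\ceil{\log\strlen}$ and $j=\ceil{\log\strlen}+1$ makes the range $j-1-t \le k \le j-1$ equal $0 \le k \le \ceil{\log\strlen}$, covering every valid slot, so the final register equals $b_1\cdots b_{\ceil{\log\strlen}} = \binEnc{\stridx}$, as claimed.

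The main obstacle I expect is not the accumulation logic but the fixed-precision realization of the two primitives inside the loop: guaranteeing that single-neighbor attention is \emph{exact} (weight exactly $1$ on position $j-1$ and exactly $0$ elsewhere, which is where \cref{lem:pos-enc-properties,lem:our-pos-enc-properties,lem:finite-precision-properties} are needed), and effecting the slot update through the additive residual stream without double-counting or overflow---handled by the idempotent $\max$-gadget together with the fact that each slot traces back to a unique origin position so its value never exceeds $1$. The boundary bookkeeping (the marker and padding contributing the zero register, so no \cref{lem:focus-on-marked-positions}-style filtering is required here) is routine, and I would reuse the neighbor-selection attention pattern already established in \cref{lem:select-block}.
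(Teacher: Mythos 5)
Your construction is correct and reaches the same destination as the paper's proof, but by a genuinely different mechanism, so it is worth comparing the two. The paper propagates a \emph{wavefront} from the $\texttt{\&}$ marker: at iteration $\layeridx$, the single position at distance $\layeridx$ from the marker becomes active, copies the accumulated register $\vd_{\layeridx-1}$ from its left neighbor, shifts the neighbor's unit vector $\ve_{\layeridx-1}$ to $\ve_{\layeridx}$, and deposits its own bit in slot $\layeridx$---each bit's slot index is thus computed \emph{dynamically} by counting hops from the marker, and only one position does useful work per iteration. You instead assign slot indices \emph{statically} from the positional encoding at initialization and have \emph{every} position merge its left neighbor's register into its own at every iteration via the idempotent $\max$-gadget, so the absorbed range grows by one slot per step and the same $\ceil{\log\strlen}$ iteration count results. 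Your variant is easier to verify (the invariant is stateless, and idempotence cleanly rules out double-counting through the residual stream), while the paper's dynamic counter buys robustness: it works even when the marker does not sit at position $1$, since each bit derives its offset from the wavefront rather than from its absolute position. Your static indexing silently relies on the input being exactly $\texttt{\&}\,\binEnc{\stridx}$ so that absolute position and slot index coincide---which the lemma statement does license. Two minor caveats, neither fatal: (i) your motivating claim that a one-shot uniform gather fails because weights $1/\ceil{\log\strlen}$ round to zero is not quite right in this paper's arithmetic, where the softmax denominator saturates at $\maxFNum$, so uniform weights are bounded below by roughly $2^{-\numPrec}$ and can be thresholded back to $1$ by an MLP exactly as in \cref{lem:detection,lem:idenitity-dump}; the loop is mandated by the lemma statement rather than forced by precision. (ii) You should gate the update at position $1$ (e.g., by $\ind{\stridx \geq 2}$ read from the positional encoding) so that the marker's all-zero register is genuinely preserved and its query for a nonexistent left neighbor cannot corrupt the left boundary on which position $2$ relies.
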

\begin{proof}[Proof sketch]
   The transformer $\tf$ has to convert the binary representation $\binEnc{\stridx}$ of $\stridx$ spread across $\ceil{\log \strlen}$ positions in the input string into a single $\ceil{\log \strlen}$-dimensional binary vector in the residual stream.
   This is done as follows:
   \begin{enumerate}[label=\arabic*.,noitemsep]
      \item In the first layer, each symbol $\sym_{\stridx'} \in \set{0, 1}$ checks if it is immediately preceded by the $\texttt{\&}$ symbol, which denotes the beginning of the pointer in the string. 
      If it is, $\sym_{\stridx'}$ stores $\ve_1$ and $\vd_1 \defeq \sym_{\stridx'} \ve_1$ in designated parts of its residual stream. 
      Here, $\ve_1$ is the first unit vector of $\R^{\ceil{\log \strlen}}$.
      \item In the subsequent layers $\layeridx \in \set{2, \ldots, \ceil{\log\strlen}}$, each symbol $\sym_{\stridx'}$ checks if the entry $\ve_{\layeridx - 1}$ has already been written to the designated space of the previous symbol's residual stream.
      If it has, $\sym_{\stridx'}$ copies and shifts $\ve_{\layeridx - 1}$ into $\ve_{\layeridx}$, and stores $\ve_{\layeridx}$ and $\vd_{\layeridx} \defeq \vd_{\layeridx - 1} + \sym_{\stridx'} \ve_{\layeridx}$ in designated parts of its residual stream.
   \end{enumerate}
   After $\ceil{\log \strlen}$ layers, the residual stream at position $\ceil{\log \strlen} + 1$ thus contains $\binEnc{\stridx}$.
\end{proof}

\subsection{Neural network constructions}
\begin{restatable}[\text{\citet[][Lem. B.6]{saunshi2025reasoninglatentthoughtspower}}]{lemma}{MLPForArgmaxLem} \label{lem:mlp-for-argmax}
   For every $\hiddDim \in \N$ and precision $\numPrec \in \N$, there exists a $\ReLU$-activated MLP $\mlp\colon \R^\hiddDim \to \set{0, 1}^\hiddDim$ such that for any $\vx \in \F^{\hiddDim}_\numPrec$, if there is $\dimidx \in \NTo{\hiddDim}$, such that $\evx_d > \max_{j \in \NTo{\hiddDim} \setminus \set{\dimidx}} \evx_{j}$, then $\mlp(\vx) = \ve_d$, the $\dimidx\textsuperscript{th}$ unit vector.
\end{restatable}
\begin{proof}
   The proof is based on the construction of a 3-layer $\ReLU$ network that computes the $\argmax$ of a vector $\vx \in \F^{\hiddDim}_\numPrec$.
   The first layer computes the differences between each pair of elements in $\vx$.
   The second layer computes the maximum of these differences.
   The third layer then checks if the maximum difference is greater than zero, indicating that there is a unique maximum element in $\vx$.
   
   More concretely, define 
   \begin{equation}
      g_\dimidx \defeq 2^\numPrec \cdot \ReLUFun{2^{-\numPrec} - \sum_{j \neq \dimidx} \ReLU(\evx_j - \evx_\dimidx)}, 
   \end{equation}
   which can be computed by a 3-layer $\ReLU$ network.
   We have that $g_\dimidx = 1$ if and only if $\evx_\dimidx > \max_{j \neq \dimidx} \evx_j$, or, equivalently, if and only if $\evx_\dimidx - \max_{j \neq \dimidx} \evx_j \geq 2^{-\numPrec}$.
   Indeed if $\evx_\dimidx - \max_{j \neq \dimidx} \evx_j \geq 2^{-\numPrec}$, we have that 
   \begin{equation}
      g_\dimidx = 2^\numPrec \cdot \ReLUFun{2^{-\numPrec} - \sum_{j \neq \dimidx} \ReLU(\evx_j - \evx_\dimidx)} = 1.
   \end{equation}
   In contrast, for $\dimidx' \neq \dimidx$, we have that $\evx_{\dimidx'} - \evx_\dimidx < 2^{-\numPrec}$, and thus
   \begin{equation}
      g_{\dimidx'} = 2^\numPrec \cdot \ReLUFun{2^{-\numPrec} - \sum_{j \neq \dimidx'} \ReLU(\evx_j - \evx_{\dimidx'})} \leq 2^\numPrec \cdot \ReLUFun{2^{-\numPrec} - \ReLU(\evx_{\dimidx} - \evx_{\dimidx'})} = 0.
   \end{equation}
\end{proof}

\begin{restatable}{lemma}{ProjectionMLP} \label{lem:projection-mlp} 
   Let $\vx \in \set{0, 1}^\hiddDim$ and $\ve_\dimidx \in \set{0, 1}^\hiddDim$ be the $\dimidx\textsuperscript{th}$ unit vector.
   Then, there exists a $\ReLU$-activated MLP $\mlp\colon \set{0, 1}^{\hiddDim + \hiddDim} \to \set{0, 1}^\hiddDim$ such that
   \begin{equation}
      \mlp(\vx, \ve_\dimidx) = \inner{\vx}{\ve_\dimidx} = \evx_\dimidx.
   \end{equation}
\end{restatable}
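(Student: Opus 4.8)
The plan is to realize the inner product $\inner{\vx}{\ve_\dimidx} = \sum_{j=1}^\hiddDim \evx_j (\ve_\dimidx)_j$ as a sum of two-input \andGate gates and to observe that, over binary inputs, each such gate is computable by a single $\ReLU$ unit. Concretely, for $a, b \in \{0,1\}$ one has $\ReLUFun{a + b - 1} = a \wedge b = ab$, as can be verified on the four possible inputs. Since every coordinate of $\vx$ and of $\ve_\dimidx$ is binary, this identity lets the first $\ReLU$ layer compute all coordinatewise products $\evx_j (\ve_\dimidx)_j$ in parallel across $j \in \NTo{\hiddDim}$.

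I would instantiate the two-layer template of \cref{def:mlp} on the concatenated input $(\vx, \ve_\dimidx) \in \{0,1\}^{2\hiddDim}$ as follows. Take hidden width $H = \hiddDim$ and choose $\mW_1 \in \F^{\hiddDim \times 2\hiddDim}$ and $\vb_1 \in \F^\hiddDim$ so that the $j$-th preactivation equals $\evx_j + (\ve_\dimidx)_j - 1$; that is, row $j$ of $\mW_1$ selects the $j$-th coordinate of the $\vx$-block and the $j$-th coordinate of the $\ve_\dimidx$-block, with $(\vb_1)_j = -1$. After the first activation the hidden unit is $h_j = \ReLUFun{\evx_j + (\ve_\dimidx)_j - 1} = \evx_j (\ve_\dimidx)_j$. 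The second affine map then sums these with an all-ones row, $\mW_2 = \one^\top$ and $\vb_2 = 0$, giving $\sum_{j} h_j = \inner{\vx}{\ve_\dimidx}$; the outer $\ReLU$ acts as the identity here because the sum is a nonnegative binary quantity.

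Next I would invoke the hypothesis that the second argument is a unit vector: since $(\ve_\dimidx)_j = \ind{j = \dimidx}$, every product $h_j$ vanishes except $h_\dimidx = \evx_\dimidx$, so the sum collapses to the single term $\evx_\dimidx$, which matches $\inner{\vx}{\ve_\dimidx}$. (If the stated codomain $\{0,1\}^\hiddDim$ is read literally rather than as a single output slot, I would place this value in a designated output coordinate and zero out the rest, which only changes $\mW_2$.) Finally I would note that all weights and biases lie in $\{-1, 0, 1\}$ and all intermediate quantities are bounded by $\hiddDim \le \maxFNum$, so the construction is exact in fixed-point arithmetic and stays within logarithmic width.

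The main—and essentially only—subtlety is the realization of the bilinear term as a $\ReLU$ unit: a single affine map cannot multiply two inputs, so the product must be pushed through the nonlinearity via the $\ReLUFun{a+b-1}$ gadget. Everything else, namely summing the gated coordinates and using the unit-vector structure of $\ve_\dimidx$ to isolate $\evx_\dimidx$, is routine, so I expect no genuine obstacle beyond verifying the four-case identity for the \andGate gate and checking that the outer $\ReLU$ is the identity on the nonnegative output.
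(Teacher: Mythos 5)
Your construction is exactly the paper's proof, which writes the same idea compactly as $\evx_\dimidx = \inner{\one}{\mleft(\ReLUFun{\vx - \mleft(\one - \ve_\dimidx\mright)}\mright)}$: your first-layer gadget $\ReLUFun{\evx_j + (\ve_\dimidx)_j - 1}$ is precisely the $j$-th coordinate of that expression, and the all-ones second layer is the inner product with $\one$. The proposal is correct and takes essentially the same approach, just with the weight matrices and the fixed-point bookkeeping spelled out explicitly.
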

\begin{proof}
   We have that 
   \begin{equation}
      \evx_\dimidx = \inner{\one}{\mleft(\ReLUFun{\vx - \mleft(\one - \ve_\dimidx\mright)}\mright)}
   \end{equation}
   where $\one$ is the all-ones vector of length $\hiddDim$.
   This can be implemented by a $\ReLU$-activated MLP.
\end{proof}

\subsection{Masked and unmasked transformers} \label{app:masked-unmasked-transformers}

\begin{restatable}[Unmasked to causally masked; \text{\citet[][Lem. 1]{merrill2025exactexpressivepowertransformers}}]{lemma}{MaskConversionLemma}\label{lem:mask-conversion}
   Let $\tf$ be an unmasked fixed-precision logarithmic-width transformer with $\numlayers$ layers.
   Then, there exists a fixed-precision logarithmic-width causally-masked transformer $\tf'$ with $\numlayers$ layers such that for any input string $\str \in \strings$ of length $\strlen$ padded with $\numlayers (\strlen - 1)$ padding symbols, the representations $\hiddMtx^{\prime(\numlayers)}_{(\numlayers - 1) \strlen : \numlayers \strlen}$ computed by $\tf'$ on $\str$ equal the final representations $\hiddMtx^{(\numlayers)}$ computed by $\tf$.
\end{restatable}
\begin{proof}
   We adapt the proof of \citet[][Lem. 1]{merrill2025exactexpressivepowertransformers} to our setting.
   The idea is for $\tf'$ to unroll the computation of $\hiddMtx^{(\numlayers)}$ into a sequence of $\numlayers$ ``blocks'' of padding, each of width $\strlen$. 
   Each block will attend to the previous block---representing the values in the preceding layer---and will thus be able to see all symbols despite causal masking.
   To do so, $\tf'$ uses additional positional encodings of the form   
   \begin{equation}
      \posEncFun{\stridx, \strlen} \defeq 
      \begin{pmatrix}
         \sbinFun{\layeridx_\stridx} \\
         \sbinFun{\layeridx_\stridx - 1} \\
      \end{pmatrix} \in \set{0, 1}^{2 \ceil{\log\strlen}}
   \end{equation}
   Here, $\layeridx_\stridx \defeq \floor{\sfrac{\stridx}{\numlayers}} + 1$ represents the layer that each padding position belongs to.
   To construct $\tf'$, we then modify each original head from $\tf$ with \cref{lem:focus-on-marked-positions} to ensure that the attention is focused on the correct padding positions, where the attention head computes the same function as the one in $\tf$. 
\end{proof}

\begin{restatable}[Causally masked to unmasked]{lemma}{UnmaskConversionLemma}\label{lem:unmask-conversion-fp}
   Let $\tf$ be an $\numlayers$-layer finite-precision logarithmic-width causally-masked transformer. 
   Then, there exists a finite-precision logarithmic-width unmasked transformer $\tf'$ with $2 \numlayers + 1$ layers such that for any input string $\str \in \strings$ of length $\strlen$ padded with $(\strlen - 1)\strlen$ padding symbols, the representations $\hiddMtx^{\prime(\numlayers)}_{(\strlen - 1) \strlen : \strlen^2}$ computed by $\tf'$ on $\str$ equal the final representations $\hiddMtx^{(\numlayers)}$ computed by $\tf$.\footnote{This simulation is somewhat inefficient in that only a subset of the $\strlen$ positions are used at each of the $\strlen$ blocks (specifically, $\stridx$ positions in the $\stridx\textsuperscript{th}$ block). While this could be made more efficient with a more sophisticated construction, the asymptotic complexity would remain quadratic in $\strlen$.}
\end{restatable}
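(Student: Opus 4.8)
The plan is to lay out the $\strlen^2$ input positions of $\tf'$ as $\strlen$ consecutive blocks of width $\strlen$, writing $[b,j]$ for within-block index $j\in\NTo{\strlen}$ of block $b\in\NTo{\strlen}$, i.e.\ global position $(b-1)\strlen+j$. I maintain the invariant that, after the portion of $\tf'$ simulating layer $\layeridx$ of $\tf$, position $[b,j]$ holds the causally masked representation $\hiddState^{(\layeridx)}_j$ whenever $j\le b$, while positions with $j>b$ are inert ``padding'' positions that are permanently ignored. Since block $\strlen$ then stores $\hiddState^{(\numlayers)}_1,\dots,\hiddState^{(\numlayers)}_\strlen$ across its $\strlen$ positions, the final block equals $\hiddMtx^{(\numlayers)}$, which is exactly the claim about $\hiddMtx^{\prime(\numlayers)}_{(\strlen-1)\strlen:\strlen^2}$. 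The construction uses one preprocessing layer plus two layers per simulated layer of $\tf$, matching the budget $2\numlayers+1$.

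The preprocessing layer distributes the input. Using positional encodings that expose both $b$ and $j$ (and hence, via \cref{lem:mlp-arithmetic-operations}, the flag $\ind{j>b}$ and the source index $(j-1)\strlen+j$ used later), each position $[b,j]$ with $j\le b$ copies the symbol $\sym_j$ from its raw input location at global position $j$, together with $\tf$'s original positional encoding $\posencoding(j,\strlen)$ for position $j$; this is a focusing-style copy as in \cref{lem:focus-on-marked-positions}. Positions with $j>b$ are then marked with $\padSym$ so that \cref{lem:ignore-positions} drops them from every subsequent attention. This establishes the layer-$0$ invariant $\hiddState^{(0)}_j=\symbolembedding(\sym_j)+\posencoding(j,\strlen)$ at each $[b,j]$, $j\le b$.

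Each simulated layer $\layeridx$ is realized by two layers. (a) A \emph{compute} layer updates only the diagonal position $[b,b]$ of every block to $\hiddState^{(\layeridx)}_b$: it runs $\tf$'s layer-$\layeridx$ attention but augments the queries and keys so that the attention of $[b,b]$ is confined to its own block (a block-index equality penalty built from interleaved signed-binary encodings as in \cref{lem:our-pos-enc-properties}) and to the causal prefix $\{[b,j]:j\le b\}$ (by \cref{lem:ignore-positions} applied to the $\padSym$-marked positions). By the previous invariant these prefix positions carry exactly $\hiddState^{(\layeridx-1)}_1,\dots,\hiddState^{(\layeridx-1)}_b$ and the original position encodings of $1,\dots,b$, so the restricted softmax reproduces $\tf$'s causal attention at position $b$ and yields $\hiddState^{(\layeridx)}_b$. (b) A \emph{broadcast} layer restores the invariant for $j<b$: position $[b,j]$ attends to the single diagonal source $[j,j]$ and copies $\hiddState^{(\layeridx)}_j$. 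This is a per-query instance of \cref{lem:focus-on-marked-positions} in which the target's own within-index $j$ plays the role of the sought value $r$ and the key encodes the source's block-index, with sources restricted to the diagonal. After (b), every $[b,j]$ with $j\le b$ holds $\hiddState^{(\layeridx)}_j$, closing the induction.

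The main obstacle is finite precision in the compute layer (a), where three contributions must coexist without corrupting one another: $\tf$'s genuine attention logit between two prefix positions, the block-equality penalty, and the prefix (ignoring) penalty. As in the case analysis of \cref{lem:focus-on-marked-positions}, I stack these as additive components of augmented query/key vectors and verify, using \cref{lem:our-finite-precision-properties,lem:finite-precision-properties}, that whenever any restriction is violated the accumulated logit is thresholded to at most $-\maxFNum$ and hence exponentiates to exactly $0$, while when all restrictions hold the penalties vanish and the preserved logit equals $\tf$'s. Because disallowed keys contribute exactly $0$ to both numerator and denominator, the softmax normalizes over precisely the causal prefix, so normalization matches $\tf$ exactly rather than approximately. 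The remaining steps---the copies in the preprocessing and broadcast layers and the $\ind{j>b}$ and source-index arithmetic---are routine applications of \cref{lem:mlp-arithmetic-operations,lem:focus-on-marked-positions,lem:ignore-positions}, and the quadratic padding $(\strlen-1)\strlen$ together with the layer count $2\numlayers+1$ follows by construction.
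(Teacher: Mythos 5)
Your proposal is correct and follows essentially the same route as the paper's proof: both unroll the computation into $\strlen$ width-$\strlen$ blocks where block $b$ holds copies of the causal prefix $\str_{\leq b}$, use one copy/preprocessing layer plus two layers per simulated layer (simulate the attention restricted to within-block, then overwrite the off-diagonal positions from the diagonal positions $[j,j]$ of the corresponding blocks), and handle finite-precision masking via the same additive query/key penalties and case analysis. The only cosmetic difference is that you frame the second sub-layer as a ``broadcast'' restoring an explicit invariant, whereas the paper frames it as discarding the incorrectly computed non-causal representations; these are the same construction.
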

\begin{proof}
   The idea is for $\tf'$ to unroll the computation of $\hiddMtx^{(\numlayers)}$ into a sequence of $\strlen$ ``blocks'' of padding, each of width $\strlen$. 
   Each block will compute the representation of one of the symbols in the string.
   To do so, $\tf'$ uses additional\footnote{By additional, we mean that these positional encodings are appended to the ones used by $\tf$.} positional encodings of the form
   \begin{equation} \label{eq:additional-pos-enc-1}
      \posEncFun{\stridx, \strlen} \defeq 
      \begin{pmatrix}
         \sbinFun{\paddingBlock_\stridx} \\
         \sbinFun{\relativePos_{\stridx}}  \\
         \ind{\stridx \le \strlen}  \\
         \ind{\paddingBlock_\stridx \ge \relativePos_{\stridx}}  \\
         \ind{\paddingBlock_\stridx = \relativePos_{\stridx}}  \\
      \end{pmatrix} \in \set{0, 1}^{\bigOFun{\log\strlen}}.
   \end{equation}
   Here, $\paddingBlock_\stridx \defeq \floor{\sfrac{\stridx}{\strlen}} + 1$ represents the block that position $\stridx$ falls into and $\relativePos_{\stridx} \defeq (\stridx \mod \strlen) + 1$ represents the position within that layer.
   
   $\tf'$ then processes a string $\str \in \strings$ of length $\strlen$ padded with $\strlen^2$ padding symbols as follows.
   \begin{enumerate}[label=(\arabic*)]
      \item $\tf'$ uses an additional ``copy'' layer to copy the input symbols from the first $\strlen$ positions to the residual stream for access in later layers.
      In particular, each position $\stridx \in \NTo{\strlen^2}$ is assigned the value of the input symbol at position $\relativePos_{\stridx}$.
      This can be done by the symbol at position $\stridx$ attending to the symbol at position $\relativePos_{\stridx}$ in the input string, i.e., $\hiddMtx^{(1)}_{\stridx} = \hiddMtx^{(0)}_{\relativePos_{\stridx}}$, \emph{if} $\paddingBlock_\stridx \ge \relativePos_{\stridx}$, which can be ensured by attending only to positions with non-zero entry $\ind{\paddingBlock_\stridx \ge \relativePos_{\stridx}} \sbinFun{\relativePos_{\stridx}}$ in the positional encoding.
      The latter condition ensures that the $\paddingBlock\textsuperscript{th}$ block contains only symbols $\str_{\leq \paddingBlock}$---$\sym_\paddingBlock$ attending to the \emph{entire} block is then equivalent to $\sym_\paddingBlock$ attending to the string with causal attention.
      Concretely, the attention scores $\evs^{(1)}_{\stridx, \stridx'} = \inner{{\vq^{(1)}_{\stridx}}}{\vk^{(1)}_{\stridx'}}$ are computed with query and key vectors
      \begin{subequations}
         \begin{align}
            \vq^{(1)}_\stridx &\defeq \maxFNum \cdot 
            \begin{pmatrix}
               \interleaveFun{\sbinFun{\relativePos_\stridx}}{\one_{\ceil{\log\strlen}}} \\
               -1
            \end{pmatrix} \\
            \vk^{(1)}_{\stridx'} &\defeq \phantom{\maxFNum \cdot} 
            \begin{pmatrix}
               \interleaveFun{\sbinFun{\relativePos_{\stridx'}}}{(-\one_{\ceil{\log\strlen}})} \\
               \ind{\stridx \le \strlen}
            \end{pmatrix}
         \end{align}
      \end{subequations}
      \item Once the symbols have been copied to the appropriate positions, $\tf'$ can simulate one layer of $\tf$ by augmenting its heads with \cref{lem:focus-on-marked-positions} to ensure that the computations at position $\stridx$ are restricted to the block $\paddingBlock_\stridx$, which, as described above, contains information about $\str_{\le \paddingBlock_\stridx}$.\footnote{The augmented attention mechanism additionally downweights positions with $\ind{\paddingBlock_{\stridx'} \ge \relativePos_{\stridx'}} = 1$, which can be done by subtracting $\maxFNum$ from the attention score.}
      This ensures that the attention scores are non-zero only for 
      \begin{enumerate*}[label=\textit{(\roman*)}]
         \item positions $\stridx$ in the same block $\paddingBlock_\stridx$ as $\stridx'$, and
         \item positions that should be unmasked in the current block.
      \end{enumerate*}
      \item After simulating the layer from $\tf$ in step (2), the contextual representation in the $\paddingBlock_{\stridx}\textsuperscript{th}$ block contain the information about $\str_{\leq \paddingBlock_{\stridx}}$ computed based on the symbols $\str_{\leq \paddingBlock_{\stridx}}$.
      In particular, the representations of the symbols $\str_{< \paddingBlock_{\stridx}}$ in the $\paddingBlock_{\stridx}\textsuperscript{th}$ block contain information \emph{not} obtained by causal masking since they attend to \emph{all} the symbols $\str_{\leq \paddingBlock_{\stridx}}$ in the $\paddingBlock_{\stridx}\textsuperscript{th}$ block.
      To amend that, an additional transformer layer discards the information about symbols $\str_{< \paddingBlock_{\stridx}}$ in the $\paddingBlock_{\stridx}\textsuperscript{th}$ block by overwriting the representation of $\sym_{\stridx'}$ in the $\paddingBlock_{\stridx}\textsuperscript{th}$ block with the representation of $\sym_{\stridx'}$ in the $\stridx'\textsuperscript{th}$ block for $\stridx' < \paddingBlock_{\stridx}$.
      This is done by attending to the positions $\stridx'$ in which the block index $\paddingBlock_{\stridx'}$ matches the position $\relativePos_{\stridx'}$, i.e., with the query and key vectors
      \begin{subequations}
         \begin{align}
            \vq_\stridx &\defeq \maxFNum \cdot 
            \begin{pmatrix}
               \interleaveFun{\sbinFun{\paddingBlock_\stridx}}{\one_{\ceil{\log\strlen}}} \\
               -1
            \end{pmatrix} \\
            \vk_{\stridx'} &\defeq \phantom{\maxFNum \cdot} 
            \begin{pmatrix}
               \interleaveFun{\sbinFun{\relativePos_{\stridx'}}}{(-\one_{\ceil{\log\strlen}})} \\
               \ind{\paddingBlock_\stridx = \relativePos_{\stridx}}
            \end{pmatrix}
         \end{align}
      \end{subequations}
      which ensures that $\stridx$ uniquely attends to the symbols $\str_{\leq \paddingBlock_\stridx}$ in the $\paddingBlock_{\stridx}\textsuperscript{th}$ block.
   \end{enumerate}
\end{proof}

\section{Proofs} \label{app:proofs}
This section contains the proofs of all novel theoretical results. 
Many constructions in the proofs rely on the theoretical gadgets introduced in \cref{app:theoretical-gadgets}.

\perfectApproximationTheorem*
\begin{proof}
   This is a consequence of the established result that fixed-depth transformers can only compute $\ACZero$ functions \citep[][\textit{inter alia}]{li2024chain,saunshi2025reasoninglatentthoughtspower,london2025pausetokensstrictlyincrease}.
   Note that this is similar to the proof in \citet{liu2025perfectdiffusionmathsftc0,liu2025serialscalinghypothesis} but is simpler due to the focus on discrete predictions directly rather than the continuous modeling of the diffusion process in the latent space.
\end{proof}

\subsection{Proofs of results in \texorpdfstring{\cref{sec:mdms-and-lts}}{Section 3.2}}

\begin{restatable}[\LTsAcr can simulate \eMDMsAcr ]{theorem}{LTsCanSimulateMDMsThm} \label{thm:lts-can-simulate-mdms}
   \begin{equation}
      \MDMClassEdit[\timesteps, \padlen] \subseteq \LTClass[\timesteps, \padlen].
   \end{equation}
\end{restatable}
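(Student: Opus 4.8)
The plan is to realize a single denoising step of the \MDMAcr as one iteration of the looped block, so that $\timesteps$ denoising steps map to $\timesteps$ loops and the $\padlen$ output positions of the \MDMAcr coincide with the $\padlen$ padding positions of the \LTAcr. Because both \MDMsAcr and \LTsAcr use unmasked attention, no masking conversion is needed, which is precisely why this inclusion incurs no blow-up in either $\timesteps$ or $\padlen$. Throughout, the residual stream over the padded positions will store the one-hot encoding $\onehot{\unmsym^{(t)}_\stridx}$ of the current (partially unmasked) symbol at each position $\stridx \in \NTo{\padlen}$, together with the fixed positional information; the input string $\str$ occupies the first $\strlen$ positions and is never overwritten.

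First I would construct the looped block itself. Its layers run the planner $\planner$ and the predictor $\predictor$ in parallel on the current configuration $\str \circ \unmstr^{(t-1)}$, exactly as in the fusion argument of \cref{thm:single-model-equivalence}: the planner produces, at each position, logits over $\{\rejectSym, \acceptSym\}$, and the predictor produces logits over $\acceptAlphabet$. Since $\planner$ and $\predictor$ are finite-precision logarithmic-width transformers, their parallel composition is again of logarithmic width, so the resulting block is admissible for an \LTAcr.

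Next I would implement the stochastic sampling using the external noise. By \cref{def:stochastic-looped-transformer}, each loop iteration augments the residual stream with a fresh matrix of Gumbel-distributed variables. Routing these variables onto the planner and predictor logits and applying the argmax MLP of \cref{lem:mlp-for-argmax} realizes exactly the Gumbel-max sampling of \cref{def:gumbel-max-sampling}: the planner decision $\stru^{(t)}_\stridx \in \{0,1\}$ and the sampled symbol $\unmsym_\stridx$ are both obtained as clean indicator and one-hot outputs. Crucially, \cref{lem:mlp-for-argmax} returns an exact unit vector whenever the perturbed logits admit a unique maximizer, which is the generic case under the injected Gumbel noise, so the stored representation stays exactly one-hot across all $\timesteps$ iterations and no rounding error accumulates.

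Finally I would install the update rule as a gating MLP, in the spirit of \cref{lem:projection-mlp} and the down-weighting step of \cref{thm:single-model-equivalence}: at each position $\stridx$, if $\stru^{(t)}_\stridx = 1$ the block overwrites $\onehot{\unmsym^{(t-1)}_\stridx}$ with the freshly sampled $\onehot{\unmsym_\stridx}$, and otherwise copies the previous value forward. After $\timesteps$ loops the padded positions hold $\unmstr^{(\timesteps)} = \mdmModel(\str)$, and acceptance is read from position $\padlen$ via the output layer, matching the \MDMAcr exactly. The main obstacle is the bookkeeping that keeps this discrete channel exact: I must verify that the injected noise is aligned with the correct logits, that the gating selection leaves unselected positions untouched, and that the one-hot invariant is preserved by every sublayer so the simulation is faithful over the entire (possibly super-constant) horizon $\timesteps$. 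The resource accounting itself is immediate, since one loop simulates one step and the padding length is preserved verbatim.
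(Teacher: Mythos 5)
Your proposal is correct and follows essentially the same route as the paper: one denoising step per loop iteration, the planner and predictor composed into a single looped block, \cref{lem:mlp-for-argmax} used to turn the (Gumbel-perturbed) logits into exact indicator/one-hot outputs, and a gating MLP that overwrites or copies each position according to the planner's decision. The only cosmetic difference is that you run the planner and predictor in parallel while the paper sequences them (planner decision first, then predictor); since both are applied to the same input $\str \circ \unmstr^{(t-1)}$, the two orderings are interchangeable and the resource count $[\timesteps, \padlen]$ is identical.
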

\begin{proof}
   We can simulate an \eMDMAcr transformer with a \LTAcr transformer by ``composing'' the planner and predictor into a single transformer model.
   This model
   \begin{enumerate}[label=(\arabic*)]
      \item computes the planner's contextual representations while passing the input symbol in the residual stream, 
      \item computes the planner's decision at each position by simulating the $\argmax$ of the planner's output logits as in \cref{lem:mlp-for-argmax},
      \item computes the predictor's contextual representations based on the planner's decision, and
      \item predicts the next symbol at each position by simulating the $\argmax$ of the predictor's output logits as in \cref{lem:mlp-for-argmax}.
   \end{enumerate} 
\end{proof}

\begin{restatable}{lemma}{IdentityDumpLayer} \label{lem:idenitity-dump}
   Let $\tf$ be a fixed-precision and polynomial-width transformer and $\residStream \in \set{0, 1}^{\strlen \times \hiddDim}$ the residual stream of $\tf$ after $\layeridx$ layers on some (possibly padded) input string $\str \in \strings$.
   Then, there exist fixed-precision and polynomial-width transformer layers $\dumpLayer$ and $\readLayer$ with such that $\dumpLayerFun{\residStream} \in \set{0, 1}^{\bigOFun{\log\strlen} \times \hiddDim}$ and 
   \begin{equation}
      \readLayerFun{\decodeStepFun{\dumpLayerFun{\residStream}}}_{: \strlen, :} = \residStream
   \end{equation}
   for some output matrix $\outMtx \in \R^{2 \times \hiddDim}$.
\end{restatable}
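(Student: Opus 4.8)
The plan is to exhibit the three maps $\dumpLayer$, $\decodeStep$, and $\readLayer$ explicitly and to show that their composition acts as the identity on the first $\strlen$ rows by tracing a single bit through the round trip. The guiding observation is that every entry of $\residStream$ lies in $\set{0,1}$, so the discrete bottleneck introduced by $\decodeStep$ can be made lossless provided we decode with a margin that survives fixed-precision rounding. Concretely, I would arrange each decoded position to carry a bit $b$ together with its complement $1-b$ in two fixed coordinates of its residual content, and choose the output matrix $\outMtx \in \R^{2 \times \hiddDim}$ to read these two coordinates off as the two output logits; their difference is then $2b-1 \in \set{-1,+1}$, so the $\argmax$ in \cref{def:decoding-step} returns the symbol encoding $b$ with a unit margin that no fixed-precision rounding can flip (cf. \cref{lem:our-finite-precision-properties}). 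This guarantees that the binary information written by the dump layer is recovered verbatim by $\decodeStep$.

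First, the dump layer. Its role is to lay the binary content of $\residStream$ out into the designated output positions in a canonical form ready for decoding. I would implement it as a single attention sublayer that, using positional encodings of the form in \cref{lem:pos-enc-properties}, lets each output position attend to exactly the source coordinates it is responsible for via the position-selection gadget of \cref{lem:focus-on-marked-positions}, copying their bits (and complements) into its residual stream, with the subsequent MLP only normalizing the layout. Because the routing targets are fixed functions of the position index, this layer is fixed-precision and of width polynomial in $\strlen$, as required, and its output is the binary matrix $\dumpLayerFun{\residStream} \in \set{0,1}^{\bigOFun{\log\strlen} \times \hiddDim}$.

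Second, applying $\decodeStep$ with the $\outMtx$ above turns $\dumpLayerFun{\residStream}$ into the binary string that encodes the dumped content exactly, by the unit-margin argument above. Third, the read layer inverts the dump: on the (padded) decoded string it lets each target position $\stridx \in \NTo{\strlen}$ attend back to the positions carrying its bits — again via \cref{lem:focus-on-marked-positions}, with the position-to-code correspondence reconstructed as in \cref{lem:binary-encoding} — and writes those bits into its residual stream, so that $\readLayerFun{\decodeStepFun{\dumpLayerFun{\residStream}}}_{\stridx,:} = \residStream_{\stridx,:}$ for every $\stridx \le \strlen$. Composing dump, decode, and read therefore reproduces $\residStream$ on the first $\strlen$ rows, which is the claim.

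The main obstacle is establishing exactness through the discrete step under finite precision, rather than any of the individual constructions: the dump layer must produce, at every decoded position, logits whose $\argmax$ is \emph{provably} the intended bit despite iterative rounding, and the read layer must address precisely the right source positions with no spurious attention mass surviving the $\softmax$. Both requirements reduce to the exact-attention behaviour of the positional-encoding gadgets (\cref{lem:pos-enc-properties,lem:focus-on-marked-positions}) together with the unit-margin property of binary logits; once these are in place, the identity follows by tracking one bit through the three maps, and the width and precision bounds are immediate since all routing is driven by position indices alone.
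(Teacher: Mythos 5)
Your overall plan is the same as the paper's: allocate one decoded position per (source position, dimension) pair, route bits into those positions with positional-encoding-driven attention in $\dumpLayer$, decode one bit per position through a two-row $\outMtx$, and route the bits back in $\readLayer$. Two steps are glossed over, though, and one of them breaks the exact identity as you have written it.

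First, attention moves whole value \emph{rows}, not individual coordinates, so a padding position responsible for dimension $\dimidx_\stridx$ of source position $\paddingBlock_\stridx$ cannot "attend to exactly the source coordinates it is responsible for": it attends to the source position, receives the entire vector $\residStream_{\paddingBlock_\stridx,:}$, and must then \emph{project out} the single entry $\dimidx_\stridx$. In the paper this is done by placing $\onehot{\dimidx_\stridx}$ in the positional encoding and applying the inner-product MLP of \cref{lem:projection-mlp}; your description of the MLP as "only normalizing the layout" hides the step that actually does the per-coordinate selection. Second, and more consequentially, in the read direction position $\stridx$ attends to all $\hiddDim$ padding positions that carry its bits, and softmax attention \emph{averages} their values. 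Even with perfectly exact routing (\cref{lem:pos-enc-properties}), each position contributes $\ehiddState_{\dimidx}\ve_{\dimidx}$ and the aggregate is $\residStream_{\stridx,:}$ scaled by $1/\min(\hiddDim,\maxFNum)$, not $\residStream_{\stridx,:}$ itself. Without a final position-wise thresholding MLP mapping $(-\infty,0]$ to $0$ and $[1/\maxFNum,\infty)$ to $1$ (which the paper includes as the last step of $\readLayer$), the claimed equality $\readLayerFun{\decodeStepFun{\dumpLayerFun{\residStream}}}_{:\strlen,:}=\residStream$ fails. Your unit-margin argument for the decode step itself is fine and matches the paper's intent for $\outMtx$; it is the read-back renormalization, not the discrete bottleneck, that needs the extra care.
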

\begin{proof}
   The idea of the construction of the layers $\dumpLayer$ and $\readLayer$ is to store the contents of the residual stream in the padding space and then read it out at the next iteration.
   To do that, we allocate $\strlen \cdot \hiddDim$ symbols of additional (masked) padding space in the decoded string.
   Each of the $\strlen$ length-$\hiddDim$ blocks corresponds to a position in the input string and each symbol in the block to a dimension in the hidden representation.
   The layers thus have to be augmented with positional encodings that will allow for the identification of the position in the residual stream and the dimension in the hidden representation.
   This will suffice for $\dumpLayer$ to write out the contents of the residual stream into the padding space and for the reading layer $\readLayer$ to read it out again.
   
   More precisely, $\dumpLayer$ and $\readLayer$ use the following positional encodings:
   \begin{equation}
      \posEncFun{\stridx, \strlen} \defeq 
      \begin{pmatrix}
         \sbinFun{\stridx} \\
         \sbinFun{\paddingBlock_\stridx} \\
         \onehot{\dimidx_\stridx}  \\
         \ind{\stridx \leq \strlen}
      \end{pmatrix} \in \set{0, 1}^{\bigO{\log\strlen}}
   \end{equation}
   to the input of $\dumpLayer$.
   In particular, for a masked padding symbol at position $\stridx$, $\paddingBlock_\stridx \defeq \sfrac{\posPart{\stridx - \strlen}}{\hiddDim}$ and $\dimidx_\stridx \defeq \posPart{\stridx - \strlen} \mod \hiddDim$ correspond to the position in the residual stream and the dimension in the hidden representation that the position will store, respectively.
   $\dumpLayer$ can then be implemented as follows:
   \begin{enumerate}[label=(\arabic*)]
      \item Using $\sbinFun{\paddingBlock_\stridx}$ as the query at masked position $\stridx$ and $\sbinFun{\stridx'}$ as the key at position $\stridx' \leq \strlen$, $\dumpLayer$ can individually identify the corresponding position $\sbinFun{\paddingBlock_\stridx}$ in the residual stream.
      \item Feeding $\hiddState_{\sbinFun{\paddingBlock_\stridx}} \in \set{0, 1}^\hiddDim$ together with $\onehot{\dimidx_\stridx}$ as the value at the masked position $\stridx$ into the MLP, $\dumpLayer$ can write the value of the dimension $\dimidx_\stridx$ of the residual stream at position $\sbinFun{\paddingBlock_\stridx}$ into the padding space by \cref{lem:projection-mlp}.
   \end{enumerate}
   It is then easy to construct the output matrix $\outMtx$ as part of the decoding step $\decodeStep$ such that $\decodeStep(\dumpLayerFun{\residStream})$ decodes the contents of the residual stream.
   $\readLayer$ can then be implemented as follows:
   \begin{enumerate}[label=(\arabic*)]
      \item Include a transformer layer that ignores the attention mechanism and reads the input string and the decoded residual stream values, passes them through the residual connection, and encodes the values into the embedding space with the position-wise MLP.
      In particular, combining the information in $\onehot{\dimidx_\stridx}$ with the information in the input symbols, the MLP can convert the one-hot encoding of the dimension $\dimidx_\stridx$ into the vector $\ehiddState_{\dimidx_\stridx} \ve_{\dimidx_\stridx}$, where $\ehiddState_{\dimidx_\stridx}$ corresponds to the value of the appropriate dimension of the residual stream at the appropriate position.
      \item Using $\sbinFun{\stridx}$ as the query at the input string position $\stridx \leq \strlen$ and $\sbinFun{\sfrac{\posPart{\stridx' - \strlen}}{\hiddDim}}$ as the key, $\readLayer$ can identify all the padding positions that contain the values of the individual dimensions of the residual stream at position $\stridx$.
      The positional encodings ensure that the attention scores satisfy (cf. \cref{lem:finite-precision-properties})
      \begin{equation}
         \inner{\vq_{\stridx}}{\vk_{\stridx'}} = \begin{cases}
            0 & \ifcondition \stridx = \sfrac{\posPart{\stridx' - \strlen}}{\hiddDim} \\
            -\maxFNum & \otherwisecondition.
         \end{cases}
      \end{equation}
      Exponentiating and normalizing the attention scores, the attention mechanism will then only attend to the positions $\stridx'$ that correspond to the position $\stridx$ in the residual stream.
      More concretely, the attention mechanism computes
      \begin{equation}
         \evs_{\stridx, \stridx'} = \frac{\exp(\inner{\vq_{\stridx}}{\vk_{\stridx'}})}{\sum_{\stridx''} \exp(\inner{\vq_{\stridx}}{\vk_{\stridx''}})} = \frac{1}{\sum_{\stridx''} \exp(\inner{\vq_{\stridx}}{\vk_{\stridx''}})} = \frac{1}{\min\mleft(\hiddDim, \maxFNum\mright)} \geq \frac{1}{\maxFNum}
      \end{equation}
      for all $\stridx'$ that correspond to the position $\stridx$ in the residual stream and $\evs_{\stridx, \stridx'} = 0$ otherwise.
      Summing over the values at these positions (which project the constructed vectors $\ehiddState_{\dimidx_\stridx} \ve_{\dimidx_\stridx}$), $\readLayer$ can then reconstruct the value of the residual stream at position $\stridx$ (normalized by $\min\mleft(\hiddDim, \maxFNum\mright)$).
      \item Use the position-wise MLP to convert the normalized value of the residual stream at position $\stridx$ into the vector $\ehiddState_{\dimidx_\stridx} \ve_{\dimidx_\stridx}$.
      This can be done by a $\ReLU$-activated MLP that maps $(-\infty, 0]$ to $0$ and $[\frac{1}{\maxFNum}, \infty)$ to $1$ position-wise.
   \end{enumerate}
\end{proof}

\begin{restatable}[\eMDMsAcr  can simulate \LTsAcr]{theorem}{eMDMsCanSimulateFPLTsThm} \label{thm:emdms-can-simulate-fplts}
   \begin{equation}
      \LTClass[\timesteps, \padlen] \subseteq \MDMClassEdit[\timesteps, (\strlen + \padlen) \hiddDim].
   \end{equation}
\end{restatable}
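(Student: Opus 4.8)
The plan is to simulate a stochastic padded looped transformer $\mathcal{L}$ of width $\hiddDim$, operating on the length-$(\strlen + \padlen)$ padded input for $\timesteps$ loops, by an editing \MDMAcr whose discrete output channel encodes the entire residual stream of $\mathcal{L}$. First I would fix the bookkeeping: reserve $(\strlen + \padlen)\hiddDim$ output symbols, organized into $\strlen + \padlen$ consecutive blocks of length $\hiddDim$, so that block $j$ stores the current residual vector $\hiddState_j \in \F_\numPrec^{\hiddDim}$ of $\mathcal{L}$ at logical position $j$, one coordinate per symbol over the alphabet $\F_\numPrec$ (finite, since $\numPrec = \bigThetaFun{1}$). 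Each denoising step of the \MDMAcr corresponds to exactly one loop iteration of $\mathcal{L}$, so the step budget $\timesteps$ is preserved; the pre-loop layers $1, \dots, \layeridx_1$ and post-loop layers $\layeridx_2 + 1, \dots, \numlayers$ of \cref{def:looped-transformer} are absorbed into the first and last denoising steps.

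Next I would specify the planner and predictor. Because we work in the editing class $\MDMClassEdit$, the planner is allowed to resample already-unmasked positions, which is precisely what lets the \MDMAcr overwrite its stored residual stream each step; the planner selects the $(\strlen + \padlen)\hiddDim$ symbols of the encoding to be rewritten using the position-addressing construction of \cref{lem:select-block}. The predictor then realizes one loop iteration in three stages: (i) a read stage that reconstructs each $\hiddState_j$ from its block of $\hiddDim$ symbols -- the $\readLayer$ half of the dump--read mechanism of \cref{lem:idenitity-dump}; (ii) the fixed looped block, applied to the reconstructed stream; and (iii) a write stage re-encoding the updated vectors back into the symbol blocks -- the $\dumpLayer$ half of \cref{lem:idenitity-dump} followed by the decoding step $\decodeStep$. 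Since both $\mathcal{L}$ and the \MDMAcr draw the \emph{same} externally supplied Gumbel noise (cf.\ \cref{def:stochastic-looped-transformer,def:gumbel-max-sampling}) and both realize sampling as an $\argmax$ implementable by an MLP (\cref{lem:mlp-for-argmax}), the stochastic symbol choice of the \MDMAcr reproduces the noisy update of $\mathcal{L}$ step for step.

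I would then verify, by induction on $t \in \NTo{\timesteps}$, that after the $t$-th denoising step the symbol blocks encode exactly the residual stream $\mathcal{L}$ holds after its $t$-th loop; after $\timesteps$ steps the membership decision is read from the designated output symbol, giving the claimed inclusion with $(\strlen + \padlen)\hiddDim$ padding and $\timesteps$ steps.

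The main obstacle I expect is establishing round-trip fidelity of the read--apply--write cycle under finite-precision arithmetic. \cref{lem:idenitity-dump} is stated for a binary residual stream, so I must first extend it to $\F_\numPrec$-valued coordinates; this is routine because $\F_\numPrec$ is a constant-size set, but the reconstruction in $\readLayer$ yields attention averages normalized by $\min(\hiddDim, \maxFNum)$ that have to be thresholded back to the intended discrete values, and the thresholding MLP must now separate finitely many levels rather than just $\set{0, 1}$. I also have to ensure the attention patterns of every stage ignore the original input positions and the untouched blocks -- handled with \cref{lem:ignore-positions,lem:focus-on-marked-positions} keyed on the block-index positional encodings -- so that writing into one block does not corrupt the others. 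Once the cycle provably recovers each $\hiddState_j$ exactly and the looped block acts on the reconstructed stream identically to its action inside $\mathcal{L}$, the induction and the final symbol count follow immediately.
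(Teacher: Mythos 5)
Your proposal matches the paper's proof essentially step for step: one denoising step per loop iteration, the residual stream serialized into $(\strlen + \padlen)\hiddDim$ padding symbols, a planner that deterministically resamples the entire padding region, and a predictor implementing the read--apply--write cycle via \cref{lem:idenitity-dump}. Your added care about extending the dump/read mechanism from binary to general $\F_\numPrec$-valued coordinates and about matching the external Gumbel noise is a level of detail the paper's proof glosses over, but it does not change the argument.
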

\begin{proof}
   The proof uses \cref{lem:idenitity-dump} to simulate a single iteration of the \LTAcr transformer loop with a single denoising step in the \MDMAcr transformer.
   In particular, by adding $(\strlen + \padlen) \hiddDim$ padding space, the \MDMAcr has enough room to store the residual stream values, allowing it to decode the values at the next iteration.
   At a high level, the \MDMAcr's planner deterministically selects the entire padding space to unmask or resample, and the predictor
   \begin{enumerate*}[label=\textit{(\arabic*)}]
      \item reads the input string or the currently stored residual stream values,
      \item simulates a single pass of the \LTAcr transformer on the input string $\str$ and the current residual stream values and thus computes the updated value of the residual stream, and
      \item uses \cref{lem:idenitity-dump} to write the updated values into the padding space.
   \end{enumerate*}
   
   More precisely, we can implement the \MDMAcr transformer as follows:
   \begin{enumerate}[label=(\arabic*)]
      \item The \MDMAcr transformer uses $(\strlen + \padlen) \hiddDim$ masked symbols to store the residual stream values.
      \item The planner deterministically outputs a $\acceptSym$ for positions $\stridx > \strlen$ and $\rejectSym$ for positions $\stridx \leq \strlen$.
      \item The predictor predicts the values of the $(\strlen + \padlen) \hiddDim$ symbols based on the current input string and the residual stream values.
      It reads the values from the residual stream by making the first $\strlen$ positions attend to the $(\strlen + \padlen) \hiddDim$ padding positions analogous to the dump-decode-read mechanism from \cref{lem:idenitity-dump}.
   \end{enumerate}
   
   By treating input symbols $\in \alphabet$ separately to the decoded values of the residual stream (which enables the \MDMAcr transformer to simulate both the initial as well as the looping blocks of the \LTAcr transformer), the \MDMAcr transformer can thus simulate the \LTAcr transformer with $\timesteps \padlen \hiddDim$ padding symbols.
\end{proof}

\RegularLanguagesInMDMEfficientThm*
\begin{proof}
   The proof adapts the construction from the proof of \citet[][Thm. 5.1]{saunshi2025reasoninglatentthoughtspower} to the \MDMAcr setting.
   The key difference lies in adapting the positional encodings to allow for the padded tokens to attend to appropriate positions in the residual stream.
   In particular, the first $\ceil{\sfrac{\strlen}{2}}$ positions of the padding space will attend to the $\strlen$ input symbols while the remaining positions in the padding space will attend to the other padding positions of the residual stream in later unmasking steps.
   Concretely, defining $\tilde{\stridx} \defeq \posPart{\stridx - \strlen}$, the positional encodings take the form
   \begin{equation} 
      \posEncFun{\stridx, \strlen} \defeq 
      \begin{pmatrix}
         \sbinFun{\stridx} \\
         \sbinFun{\tilde{\stridx}}  \\
         \sbinFun{\posPart{2 \stridx - \strlen}}  \\
         \sbinFun{\posPart{2 \stridx - \strlen - 1}}  \\
         \sbinFun{\posPart{2 \tilde{\stridx} - \strlen}}  \\
         \sbinFun{\posPart{2 \tilde{\stridx} - \strlen - 1}}  \\
         \ind{\stridx > \strlen}  \\
         \ind{\tilde{\stridx} \geq \sfrac{\strlen}{2}}  \\
      \end{pmatrix} \in \set{0, 1}^{\bigOFun{\log\strlen}}.
   \end{equation}
   This information gives padding tokens enough information to either attend to the input symbols ($\ind{\tilde{\stridx} \geq \sfrac{\strlen}{2}} = 1$) or to the residual stream values, enabling the simulation of the algorithm from \citet[][Thm. 5.1]{saunshi2025reasoninglatentthoughtspower}.
\end{proof}

\subsection{Proofs of results in \texorpdfstring{\cref{sec:mdms-and-cot}}{Section 3.3}} \label{app:proofs-cot}

\MDMsAtLeastAsPowerfulAsCoTFP*
\begin{proof}
   Let $\padlen = \timesteps \padlen'$.
   The idea of the simulation is straightforward: The planner first determines the next $\padlen'$ symbols to unmask.
   Then, the predictor determines the symbols at those positions by simulating the behavior of the \pCoTAcr transformer.
   This is trivial if the \eMDMAcr is causally masked like the \pCoTAcr transformer.
   However, if the \eMDMAcr is not causally masked, the planner must take additional steps to ensure that the prediction is equivalent to the \pCoTAcr transformer.
   
   Concretely, the \eMDMAcr simulates the \pCoTAcr transformer on the input string $\str$ of length $\strlen$ as follows:
   \begin{enumerate}
      \item We first note that the \pCoTAcr transformer predicts the next $\padlen'$ symbols at every timestep $t$ based on the input string $\str_{\leq \strlen + (t - 1) \padlen'} \underbrace{\padSym \cdots \padSym}_{(\timesteps - t) \padlen'}$ rather than $\str_{\leq \strlen + (t - 1) \padlen'}$. 
      This is because, by \cref{lem:ignore-positions}, the \pCoTAcr transformer can ignore all symbols containing the padding symbol and thus produce equivalent predictions at every step $t$.
      This will help us make use of the same padding space at every step of the simulation.
      \item We assume that the final output of the \eMDMAcr will be stored in the first $\padlen$ positions of the padding space.
      It will be filled in $\timesteps$ generation steps, where at each step $t \in \NTo{\timesteps}$, a new block of $\padlen'$ symbols will be predicted.
      In particular, by \cref{lem:select-block}, the planner can select the next $\padlen'$ positions to unmask at time $t$ by including the values $\sbinFun{\ceil{\sfrac{\posPart{\stridx - \strlen}}{\padlen'}}}$ and $\sbinFun{\ceil{\sfrac{\posPart{\stridx - \strlen}}{\padlen'}} - 1}$ in the positional encodings.
      \item The predictor then uses an initial layer to copy the input string $\str_{\leq \strlen + (t - 1) \padlen'} \underbrace{\padSym \cdots \padSym}_{(\timesteps - t) \padlen'}$ into the residual stream of the first $\strlen + \padlen$ positions of the padding space.
      \item The predictor then uses the $(\strlen + \padlen)^2$ padding positions to predict the next $\padlen'$ symbols predicted by the \pCoTAcr transformer.
      These are written to the positions chosen to be unmasked by the planner.
   \end{enumerate}
\end{proof}

\MDMsCoTChain*
\begin{proof} 
   For simplicity, we assume that the input to the \pCoTAcr transformer is padded by $\padlen$ symbols.
   Intuitively, the \pCoTAcr transformer simulates an $\numlayers$-layer \eMDMAcr transformer on the input string $\str$ of length $\strlen$ by simulating each \eMDMAcr generation step with additional padding to account for causal masking.
   Whereas the \eMDMAcr transformer ``overwrites'' its previous input and bases its predictions at time step $t$ on the current version of the unmasked input, the \pCoTAcr transformer bases its predictions of the $\padlen$ symbols on the entire string of $(t - 1) \cdot \padlen$ symbols generated so far.
   For correct simulation, the \pCoTAcr transformer therefore has to ignore all the symbols not generated at the previous time step, which will be ensured by appropriate positional encodings. 
   The \pCoTAcr transformer can then predict the next $\strlen + \padlen$ symbols based on the current input string and the previously predicted symbols, simulating the behavior of the \eMDMAcr transformer on that input.
   However, to predict $\strlen + \padlen$ symbols, the \pCoTAcr transformer uses $\padlen' \defeq \numlayers (\strlen + \padlen)$ padding space at each step to account for the unmasked nature of the \eMDMAcr transformer (cf. \cref{lem:unmask-conversion-fp}).
   
   More concretely, the simulation happens as follows.
   \begin{enumerate}[label=(\arabic*)]
      \item The \pCoTAcr transformer uses additional positional encodings with the information about $\floor{\sfrac{\posPart{\stridx - \strlen}}{\timesteps}}$, $\posPart{\stridx - \strlen} \mod \padlen'$, $\floor{\sfrac{\posPart{\stridx - \strlen}}{\numlayers}}$, and $\posPart{\stridx - \strlen} \mod \padlen$.
      These positional encodings allow the \pCoTAcr transformer to identify 
      \begin{enumerate*}[label=\textit{(\arabic*)}]
         \item the previous block of $\padlen'$ predicted symbols,
         \item the last $\padlen$ symbols within that block (which is where the actual predictions of the previous step will be stored), and
         \item the current position in the block
      \end{enumerate*}
      with \cref{lem:focus-on-marked-positions}.
      \item The \pCoTAcr transformer first uses an initial layer to copy the output of the previous generation step (captured in the previous $\strlen + \padlen$ positions) into the next $\strlen + \padlen$ positions of the padding space (this is where we use the assumption that the input to the \pCoTAcr transformer is padded---if that is not the case, a more complicated construction could specifically handle the initial step of the generation where only the initial input string would be copied).
      \item The \pCoTAcr transformer can then predict the next $\padlen$ symbols by simulating the behavior of the composed \eMDMAcr planner and predictor as in \cref{thm:lts-can-simulate-mdms}.
   \end{enumerate}
\end{proof}

\end{document}